\newtheorem{theorem}{Theorem}
\newtheorem{lemma}[theorem]{Lemma}
\theoremstyle{definition}
\newtheorem{remark}{Remark}
\begin{document}

\title{A Novel Model for 3D Motion Planning for a Generalized Dubins Vehicle with Pitch and Yaw Rate Constraints\\
\thanks{
Received 24 September 2025; revised 2 March 2026; accepted
13 April 2026. Recommended by Editor Rafael Murrieta-Cid. (\textit{Corresponding author: Deepak Prakash Kumar}).

Deepak Prakash Kumar is with the Department of Electrical Engineering and Computer Science, University of California, Irvine, CA 92697, USA (e-mail: {\tt\small deepakprakash1997@gmail.com}).

Swaroop Darbha is with the Department of Mechanical Engineering, Texas A\&M University, College Station, TX 77843, USA (e-mail: {\tt\small dswaroop@tamu.edu}).

Satyanarayana Gupta Manyam is with the DCS Corporation, 4027 Col Glenn Hwy, Dayton, OH 45431, USA (e-mail: {\tt\small msngupta@gmail.com}).

David Casbeer is with the Control Science Center, Air Force Research Laboratory, Wright-Patterson Air Force Base, OH 45433 USA (e-mail:
{\tt\small david.casbeer@us.af.mil}).

DISTRIBUTION STATEMENT A. Approved for public release. Distribution is unlimited. AFRL-2025-3035; Cleared 06/17/2025.}
}

\author{Deepak Prakash Kumar, \IEEEmembership{Member, IEEE}, Swaroop Darbha, \IEEEmembership{Fellow, IEEE}, Satyanarayana Gupta Manyam, \IEEEmembership{Senior Member, IEEE}, David W. Casbeer, \IEEEmembership{Senior Member, IEEE}}

\markboth{IEEE Transactions on Robotics}%
{Kumar \MakeLowercase{\textit{et al.}}: Model for 3D Motion Planning for a Dubins Vehicle with Pitch and Yaw Rate Constraints}

\maketitle

\bstctlcite{IEEEexample:BSTcontrol}

\begin{abstract}

In this paper, we propose a new modeling approach and a fast algorithm for 3D motion planning, applicable for fixed-wing unmanned aerial vehicles. The goal is to construct the shortest path connecting given initial and final configurations subject to motion constraints. Our work differs from existing literature in two ways. First, we consider full vehicle orientation using a body-attached frame, which includes roll, pitch, and yaw angles. However, existing work uses only pitch and/or heading angle, which is insufficient to uniquely determine orientation. 
Second, we use two control inputs to represent bounded pitch and yaw rates, reflecting control by two separate actuators. In contrast, most previous methods rely on a single input, such as path curvature, which is insufficient for accurately modeling the vehicle’s kinematics in 3D.
We use a rotation minimizing frame to describe the vehicle's configuration and its evolution, and construct paths by concatenating optimal Dubins paths on spherical, cylindrical, or planar surfaces. Numerical simulations show our approach generates feasible paths within 10 seconds on average and yields shorter paths than existing methods in most cases.
\end{abstract}

\begin{IEEEkeywords}
Aerial systems: applications, 3D motion and path planning, Dubins vehicle.
\end{IEEEkeywords}

\section{Introduction}

\IEEEPARstart{T}{he} use of Unmanned Aerial Vehicles (UAVs) 
is rapidly growing in civilian and military applications, including search and rescue and surveillance.
Fixed-wing UAVs are of particular interest due to longer flight times, larger payload capacity, and the ability to fly at higher altitudes\cite{fixed_wing_UAV_1, fixed_wing_UAV_2}. However, they are persistently in motion, i.e., cannot stop or hover mid-air, and cannot change their heading angle instantaneously. Hence, they have a bound on the rate of change of their heading/orientation, which manifests itself as curvature constraints on the path. Motion planning is important for these vehicles, in which the goal is to plan the optimal path to travel from one configuration (i.e., position and orientation together) to another. The objective of interest in this paper is to obtain the minimum-time (or distance) path(s). We seek a finite set of candidate paths that includes the optimal path for any boundary condition. 
These candidate paths are suitable for constructing paths for fixed-wing aircraft or yaw rate-constrained vehicles. 

Motion planning for yaw rate-constrained vehicles is typically addressed by considering a simplified kinematic model, called the Dubins model. This models a vehicle traveling at a constant speed and has a minimum turning radius constraint, which is suitable for UAVs traveling at constant altitude (in 2D). Dubins \cite{Dubins} solved the problem of the shortest path between a pair of configurations on a plane. It was shown that the optimal path is of type $CSC, CCC,$ or a degenerate path of the same, where $C = L, R$ denotes a left or a right turn of minimum turning radius, and $S$ denotes a straight line segment. Although Dubins showed this result using geometric techniques, the same result was later derived using Pontryagin's Minimum Principle (\cite{PMP}) in \cite{Shortest_path_synthesis_Boissonat} using simpler proofs. Various variants of the planar path planning problem have been explored with variations in the model and/or the objective, such as in \cite{sinistral/dextral}, where different left and right turning radius was considered.

Motion planning for such vehicles in 3D has also been an area of interest, where the shortest path to travel from one configuration to another, considering their motion constraints, is sought. The 3D problem applies not only to fixed-wing UAVs but also to underwater gliders and robots \cite{3D_underwater, motion_planning_two_3D_Dubins_vehicles}. 
Although specifying the heading angle alone uniquely defines the orientation of the vehicle in the 2D problem, the 3D problem requires both the heading angle and the plane in which the UAV lies.
The UAV’s plane can be uniquely described using two additional angles (pitch and roll) or by a vector along its lateral or normal direction.

Simple kinematic models have also been used to tackle the 3D problem, where, similar to the 2D problem, the generated path can be tracked using a lower-level controller \cite{path_generation_tracking_3D}. 
Because these paths can be computed very quickly, they can be combined with algorithms such as Rapidly Exploring Random Tree (RRT) to find feasible, obstacle-avoiding routes \cite{rick_lind}.
This method has also been experimentally demonstrated using an ARDrone in \cite{path_planning_3D_Dubins_saripalli}.

To our knowledge, the first exploration of the 3D problem was by Sussman \cite{sussman_3D}. The author showed that the optimal path is of type $CSC,$ $CCC,$ or a degenerate\footnotemark\footnotetext{Degenerate paths of $CSC$ and $CCC$ paths are $CS,$ $SC,$ $CC,$ $C,$ and $S$.} version of these paths, or a helicoidal arc to connect a given location and heading direction\footnotemark. \footnotetext{We note that in 3D, specifying only the location and heading direction does not fully determine the vehicle's configuration, since the orientation of the plane containing the vehicle is not uniquely defined. In contrast, for 2D motion planning, location and heading are sufficient to uniquely specify the configuration.}
Unlike the 2D problem, there are infinitely many $C$ segments since the plane containing the segment can be arbitrarily picked; due to the tangential $S$ segment, many (finite) solutions may exist. Conditions exist for which infinitely many $CSC$ paths exist, such as those shown in \cite{analytic_solution_3D}. Hence, efficient construction of $CSC$ paths has been explored in the literature. In \cite{optimal_geometrical_path_in_3D}, a $CSC$ path was constructed for instances where the initial and final locations are spaced sufficiently far apart using geometric and numerical approaches. In a later work  \cite{optimal_path_planning_aerial_vehicle_3D}, the authors adopted this path as an initial guess for a nonlinear optimization problem that was solved using a multiple-shooting method to improve the solution. The $CSC$ path construction was addressed in \cite{analytic_solution_3D} as an inverse kinematics problem for a five degrees-of-freedom robotic manipulator. Analytical solutions were obtained for the path parameters to improve computational efficiency. $CSC$ path construction was also recently addressed in~\cite{reparametrization_3D_Dubins}. In that work, the authors parametrized the path in terms of two variables and performed a numerical search, utilizing off-the-shelf solvers assisted by derived gradients to construct the path.

In \cite{time_optimal_paths_Dubins_airplane}, the 3D problem was addressed with a model that has two controllable inputs - one for yaw rate, and another for the rate of change of the altitude. In this work, a ``Dubins airplane model" was proposed, and it was shown that the optimal trajectory comprises segments with arcs of minimum turning radius, straight lines, or a Dubins path of a certain length. Depending on the altitude difference between the initial and final locations, feasible solutions were generated by introducing additional segments to attain the desired altitude. This model was modified in \cite{Dubins_airplane_fixed_wing_UAVs}, wherein a pitch angle constraint was introduced. Contrary to \cite{time_optimal_paths_Dubins_airplane}, this paper generates trajectories by incorporating helicoidal segments to attain the desired altitude when necessary. Additionally, the constructed paths were validated by simulations using a six-degree-of-freedom model (given in \cite{small_unmanned_aircraft}) and a vector-field-based guidance law, inspired by \cite{goncalves}, for tracking.

The 3D motion planning problem has also been addressed using the 2D Dubins result in \cite{minimal_3D_Dubins_path_bounded_curvature_pitch}. The authors consider a \textit{total} curvature constraint and a bounded pitch angle for the vehicle. They decouple the $CSC$ path to connect the given initial and final locations and heading vector into horizontal and vertical components. In this regard, they use the horizontal projection of the configurations to connect the locations and heading angles. The obtained path length is utilized as the $x$ coordinate in the vertical plane, with the desired altitude difference to be attained serving as the $z$ coordinate. Using iterative optimization of the horizontal and vertical turning radii, a feasible solution is constructed such that the total curvature bound and pitch angle bounds are satisfied. The authors used this solution to provide an initial guess to a non-linear optimization problem to further refine the path in \cite{finding_3D_dubins_paths_pitch_angle_nonlinear_optimization}.

In the existing literature, we observe that the {\it complete orientation of the vehicle} has not been considered for 3D motion planning. This is because the description of the pitch and heading angles alone does not uniquely describe its orientation. For example, Fig.~\ref{fig: ambiguous_roll_angle} shows two orientations for the same vehicle moving along a straight line trajectory with a pitch angle of $0^\circ$ and a heading angle of $45^\circ:$ one where the roll angle is $0^\circ$ and another where the roll angle is $45^\circ.$ Infinitely many orientations exist for the same trajectory. However, from these figures, we can observe that prescribing the longitudinal direction and the lateral or normal direction of the vehicle would uniquely describe the orientation.\footnotemark\, While the study in \cite{towards_finding_shortest_paths_3D_rigid_bodies} models the complete configuration of a general robot, it is unclear how their model relates to the kinematics of a fixed-wing UAV.
\footnotetext{We remark here that in Fig.~\ref{fig: ambiguous_roll_angle}, unit vectors along the longitudinal, lateral, and normal directions are referred to as tangent, tangent normal, and surface normal vectors, respectively. The latter notation would be used later in the paper to describe the rotation minimizing frame model.}
\begin{figure}[htb!]
    \centering
    \subfloat[\footnotesize{Roll angle $= 0^\circ$ (animation provided on our GitHub page)}]{\includegraphics[width = 0.49\linewidth]{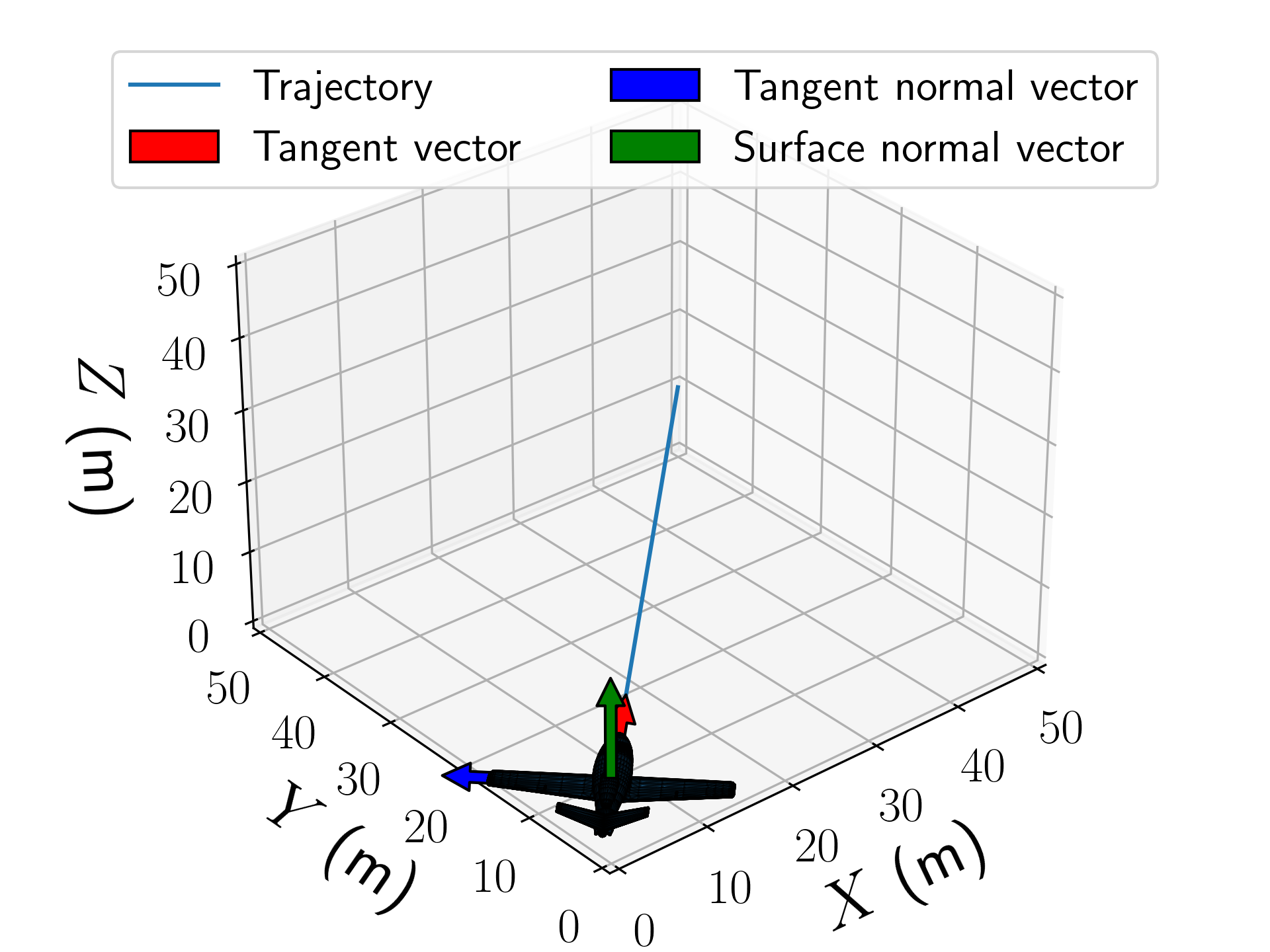}\label{subfig: roll_zero_deg}}
    \hfil
    \subfloat[\footnotesize{Roll angle $= 45^\circ$ (animation provided on our GitHub page)}]{\includegraphics[width = 0.49\linewidth]{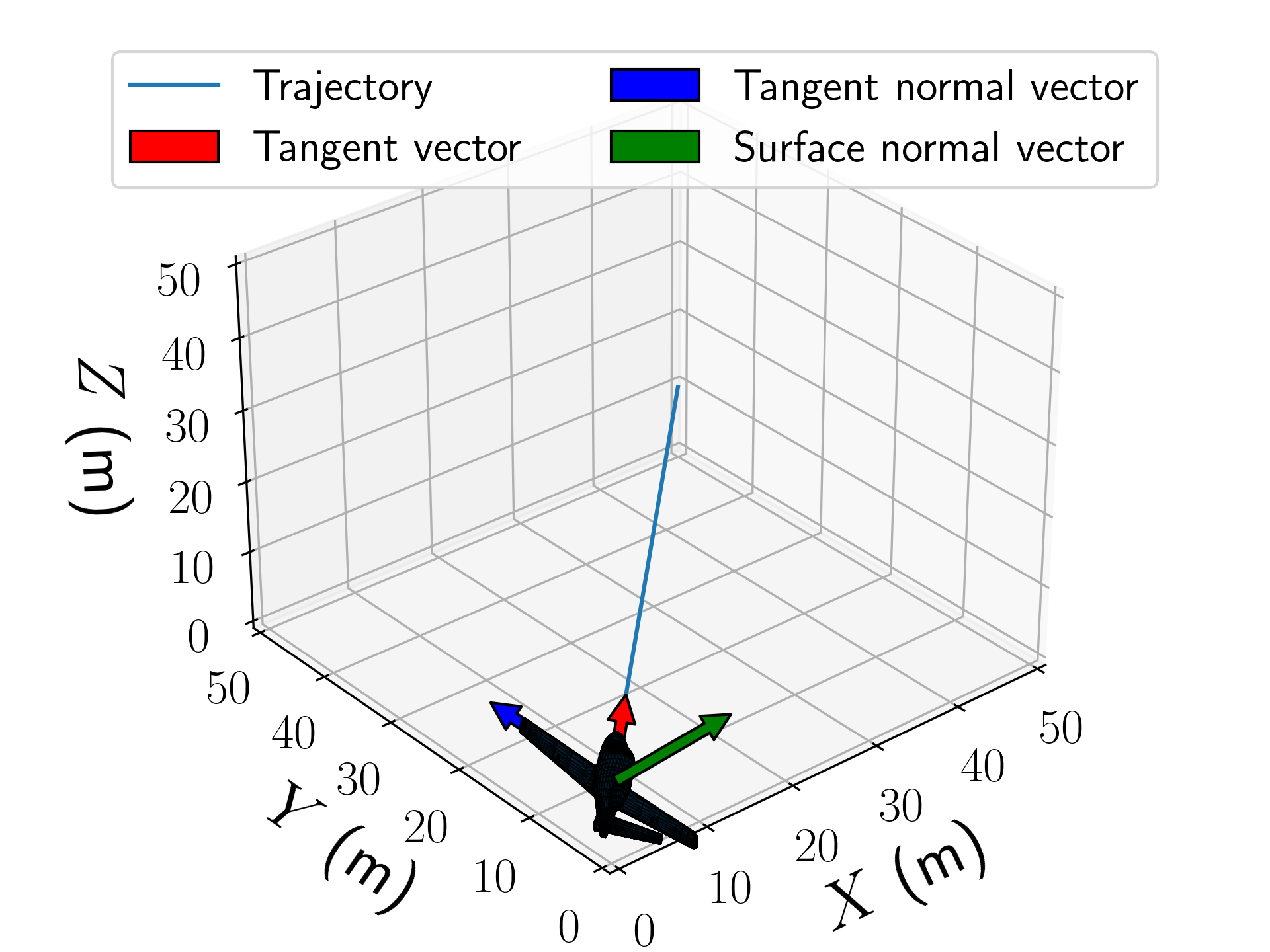}\label{subfig: roll_45_deg}}
    \caption{Depiction of two orientations corresponding to the same heading and pitch angle}
    \label{fig: ambiguous_roll_angle}
\end{figure}

The literature on path planning for aerial robots has primarily focused on models with a single control input, such as yaw rate, or a single constraint on the path's curvature.
In the 3D generalization of path planning, a single control input alone cannot capture the range of motions. This is because there are two elementary motions of interest for the motion planning of aerial vehicles: pitch and yaw. For fixed-wing UAVs, pitch motion is achieved using elevators, and yaw motion is achieved with rudder and/or aileron\footnotemark\footnotetext{Ailerons control the roll of the UAV, and this roll motion leads to a corresponding yaw motion for the vehicle.} \cite{small_unmanned_aircraft}. Since yaw and pitch motion are controlled by separate actuators, considering a single control is not sufficient. Hence, it is crucial to consider two control inputs: a bounded pitch rate and a bounded yaw rate. These constraints correspond to the minimum turning radii, $R_{pitch}$ and $R_{yaw}$, of the path's curvature.  The bounds on the pitch and yaw rates manifest as locally inaccessible spherical regions of radii $R_{pitch}$ and $R_{yaw}$, respectively.\footnotemark \footnotetext{
Imagine a vehicle moving in 3D space with a maximum allowable yaw rate (i.e., how quickly it can turn left or right). Due to this constraint, the vehicle cannot immediately change direction; it needs a certain minimum turning radius to execute a yaw maneuver.
A yaw motion sphere is a spherical region around the vehicle that it cannot enter directly unless it has traveled a sufficient distance to allow for the required turning maneuver. This is analogous to the turning circles on the left and right in the 2D Dubins path problem. Similar to the yaw motion spheres that define inaccessible regions in the horizontal plane, pitch constraints define vertical maneuverability limits.}
Though two control inputs are considered in \cite{time_optimal_paths_Dubins_airplane}, the second control input is the rate of change of the altitude of the vehicle; furthermore, the pitch angle is not considered in this model, which makes it more appropriate for a quadcopter. 
A demonstration of the limitations of state-of-the-art approaches that rely on a single control input is presented in Fig.~\ref{fig: single_control_issues} for two instances. In both of these cases, paths were generated using the method from \cite{minimal_3D_Dubins_path_bounded_curvature_pitch} - for which the code is publicly available. The minimum turning radius for this model was set to 40 meters to enforce the curvature constraint. In the following, we analyze these examples to explain why this path planning method is either infeasible or inefficient for the proposed (3D) model.
\begin{enumerate} 
\item Our proposed model incorporates two independent control inputs for the yaw and pitch rates. We set the minimum pitch turning radius, $R_{pitch}$, to 40 meters\footnotemark\footnotetext{In the paper, we will interchangeably use ``m" to denote ``meters".} for the first example. This choice is consistent with the parameter used in \cite{minimal_3D_Dubins_path_bounded_curvature_pitch}. Since the yaw turning radius, $R_{yaw}$, can be chosen independently, we set it to 50 meters. The solution generated by \cite{minimal_3D_Dubins_path_bounded_curvature_pitch}, shown in Fig.~\ref{subfig: yaw_rate_violation}, violates the yaw rate constraint by entering the yaw motion sphere. Such a maneuver is infeasible as the vehicle must travel a sufficient distance outside the sphere before entering it. This behavior is analogous to the infeasibility of entering the left or right turning circles in the 2D Dubins problem \cite{Dubins}. 

    \item In the second example, we alternately pick $R_{yaw}$ in our model to be the same as the parameter in \cite{minimal_3D_Dubins_path_bounded_curvature_pitch}, which is $40$ meters. Since $R_{pitch}$ is free to choose, we set $R_{pitch}$ to be equal to $60$ meters.
    The path obtained using the algorithm from \cite{minimal_3D_Dubins_path_bounded_curvature_pitch} is shown in Fig.~\ref{subfig: pitch_rate_violation}. We observe that the path enters one of the pitch motion spheres (which lies at the top of the vehicle), and hence violates the pitch rate constraint. 
\end{enumerate}
Alternatively, one could argue that the maximum of $R_{yaw}$ and $R_{pitch}$ can be chosen as the minimum turning radius in \cite{minimal_3D_Dubins_path_bounded_curvature_pitch}. However, this would lead to inefficient motion planning, since the vehicle would take larger-than-necessary turns in some instances.

\begin{figure}[htb!]
    \centering
    \subfloat[Yaw rate violation for $R_{pitch} = 40$ m. $R_{yaw} = 50$ m.]{\includegraphics[width = 0.46\linewidth]{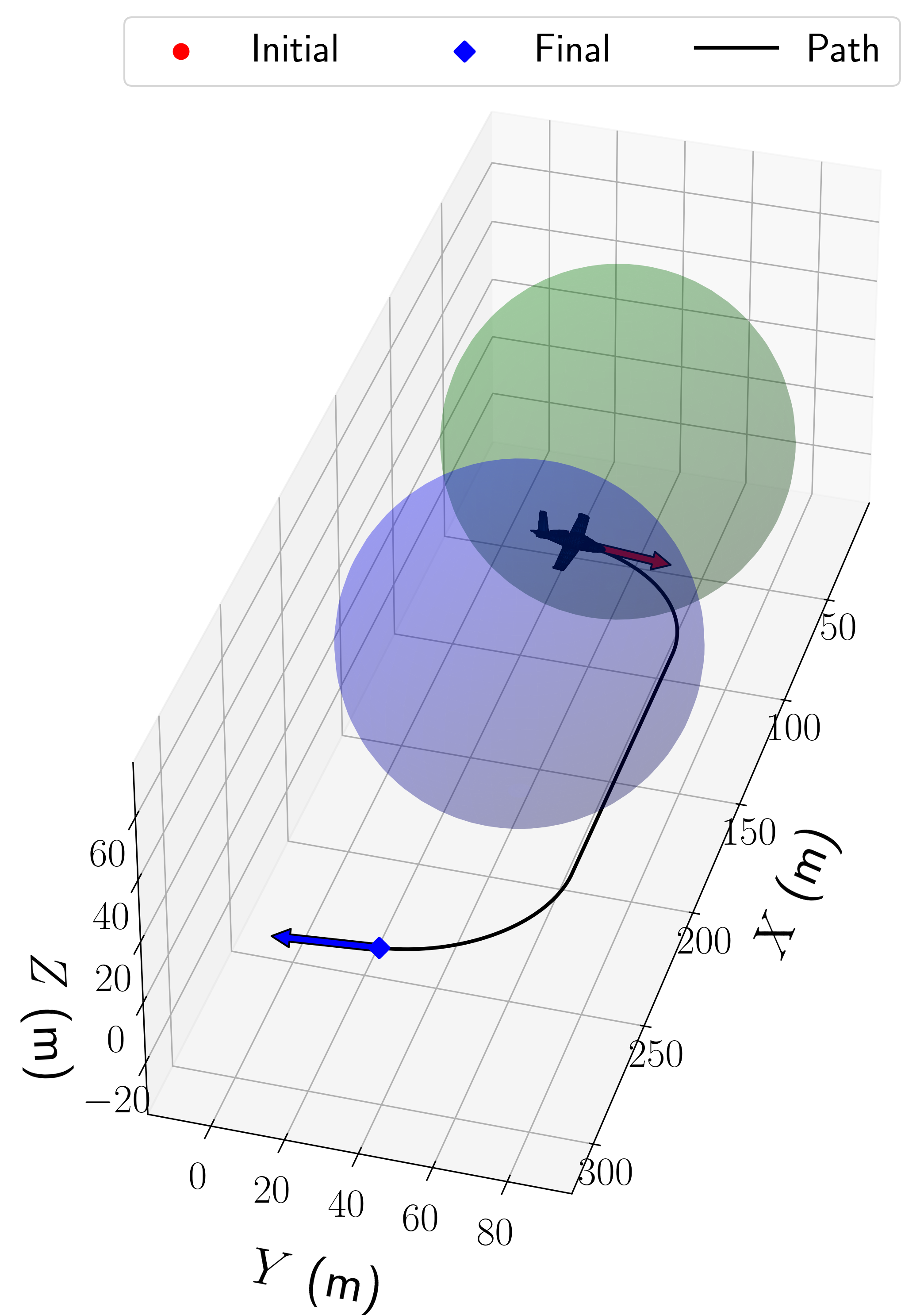}\label{subfig: yaw_rate_violation}}
    \hfil
    \subfloat[Pitch rate violation for $R_{yaw} = 40$ m. $R_{pitch} = 60$ m.]{\includegraphics[width = 0.49\linewidth]{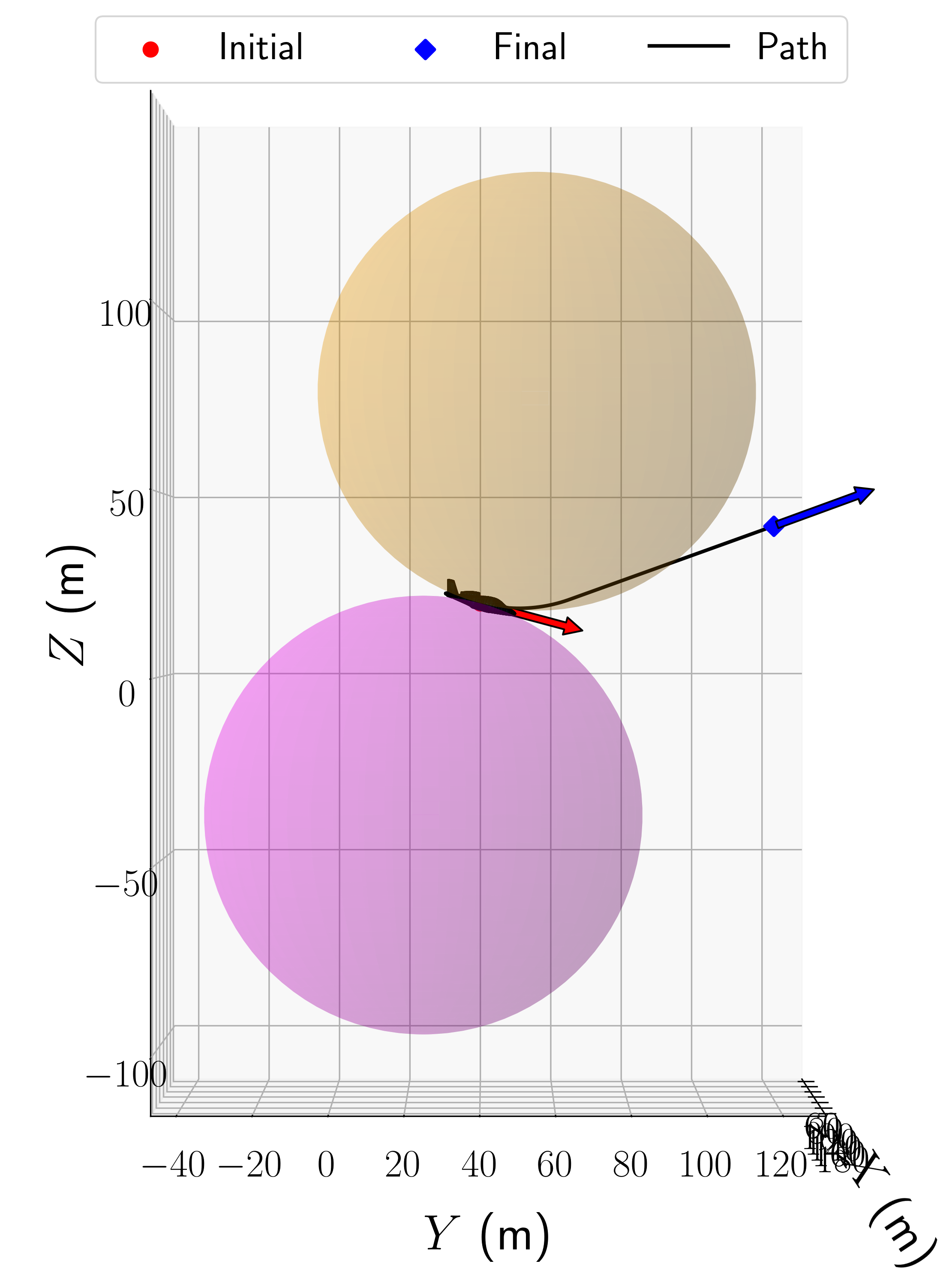}\label{subfig: pitch_rate_violation}}
    \caption{Issue with single control input}
    \label{fig: single_control_issues}
\end{figure}

The presented issues were addressed in our previous work in \cite{kumar2025newapproachmotionplanning}, where a special case of motion planning on the surface of a sphere was studied. This paper provided insights for the 3D motion planning by considering a vehicle model with bounded yaw rate and pitch rate. Furthermore, the spherical motion planning problem was shown to arise as an intermediary problem to be solved for the 3D problem.

Having identified two major issues in the literature, the contributions of this paper are as follows:
\begin{enumerate}
    \item We present a novel model using a rotation minimizing frame, also called the Bishop frame, to obtain the shortest path for a vehicle subject to pitch rate and yaw rate constraints. 
    We build on the insights provided in \cite{kumar2025newapproachmotionplanning} for the 3D problem. 
    \item We prove that the pitch rate and yaw rate constraints manifest as four spheres around the vehicle that represent temporarily inaccessible regions, thereby appropriately generalizing the 2D Dubins model to 3D.
    \item We propose a path construction algorithm that consists of three classes of paths. The main idea is to build path segments on spherical surfaces that are tangent to the initial and final configurations, and these segments are connected by an intermediary surface. This surface could be a cylindrical envelope, a cross-tangent plane, or another spherical surface. 
    \item We pose and solve a Dubins-type path planning problem subject to curvature constraints on a cylindrical surface. To the best of our knowledge, the cylindrical motion planning problem has not been addressed in the literature. The proposed solution method involves unwrapping the surface to a two-dimensional plane, computing the optimal Dubins path, and then mapping back onto the cylindrical surface. 
    \item We present extensive numerical results on several instances. We show the effect of $(i)$ model that defines the complete configuration of the vehicle, (\emph{i.e., heading and lateral orientation}) and $(ii)$ the impact of minimum turning radii on the best feasible path. We also observe that our algorithm can produce a high-quality feasible solution within 10 seconds. Additionally, we provide the code in a publicly available repository.\footnotemark
\end{enumerate}
\footnotetext{The code for our algorithm is available at \url{https://github.com/DeepakPrakashKumar/3D-Motion-Planning-for-Generalized-Dubins-with-Pitch-Yaw-constraints.git}.}



\section{Modeling and Geometric Preliminaries} \label{sect: modeling}

Let $t$ and $s$ denote the time and arc length, respectively, and $\mathbf{X} (s)$ denote the instantaneous location of the vehicle. 
We consider a Rotation-Minimizing frame, also called a Bishop frame \cite{Bishop}, attached to the center of mass of the UAV. Let $\mathbf{T}, \mathbf{Y}, \mathbf{U}$ denote the unit vectors of the Bishop frame with $\mathbf{T}, \mathbf{Y}$ directed along the longitudinal and lateral directions of the vehicle, respectively. The vector $\mathbf{U} := \mathbf{T} \times \mathbf{Y}$ is along the normal direction of the vehicle. Fig.~\ref{fig: configuration_vehicle} shows the vehicle configuration with the vectors $\mathbf{T}, \mathbf{Y}$ and $\mathbf{U}$.\footnotemark\footnotetext{We also denote $\mathbf{T},$ $\mathbf{Y},$ and $\mathbf{U}$ as tangent, tangent normal, and surface normal vectors, respectively.} 
\begin{figure}[htb!]
    \centering
    \includegraphics[width=0.7\linewidth]{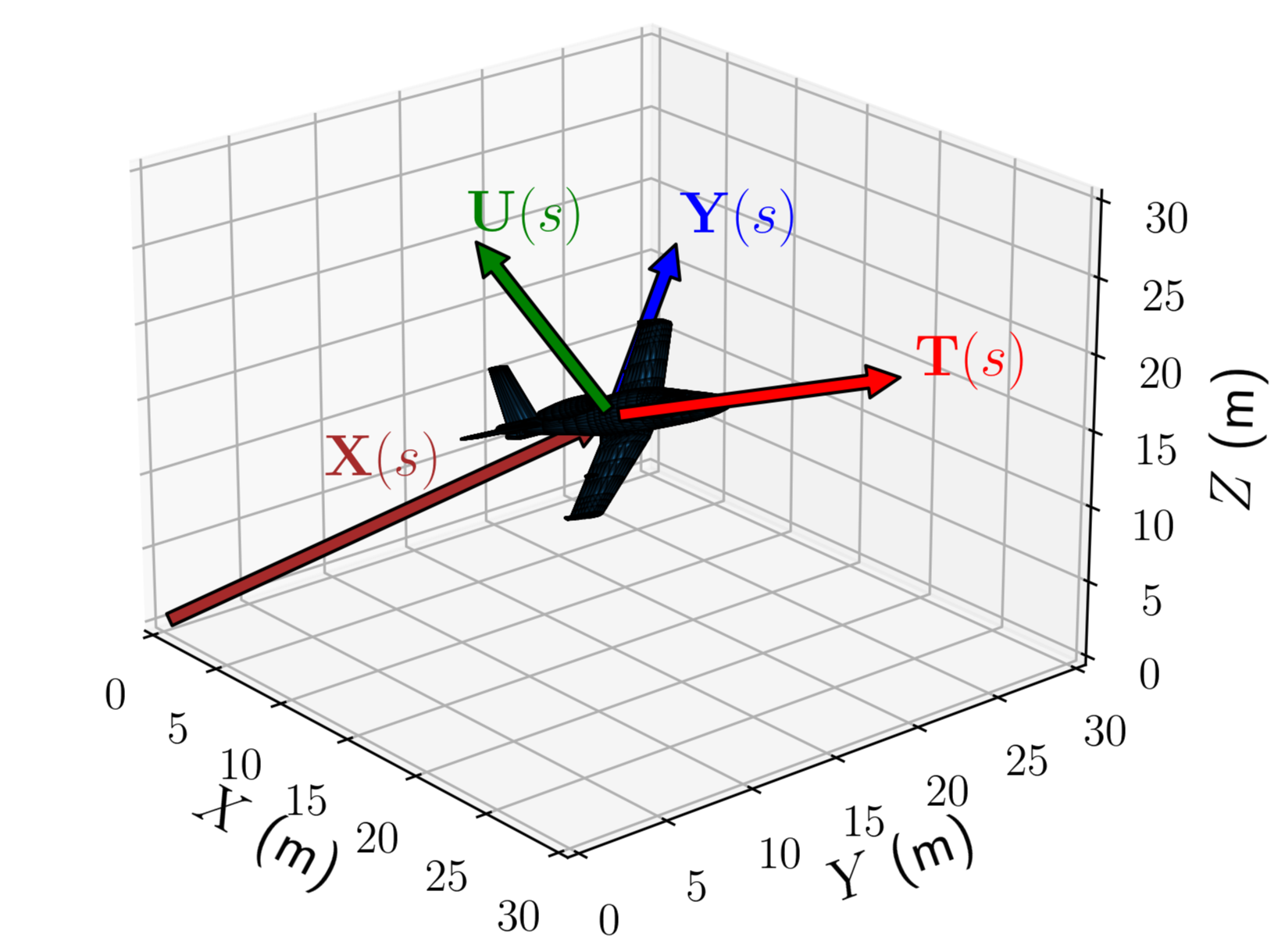}
    \caption{The configuration of the vehicle defined by the three vectors $\mathbf{T}, \mathbf{Y}$ and $\mathbf{U}$}
    \label{fig: configuration_vehicle}
\end{figure}

The instantaneous angular velocity of the frame can be written as
\begin{align*}
    \omega(t) = \omega_x(t) {\bf T}(t) + \omega_y(t) {\bf Y}(t) + \omega_z(t) {\bf U}(t),
\end{align*}
with $\omega_x, \omega_y, \omega_z$ denoting the components in the ${\bf T}, {\bf Y}$, and ${\bf U}$ directions, respectively. One may think of them as the roll, pitch, and yaw rates of the body, respectively. Notably, the Rotation Minimizing Frame (RMF) is constructed so that $\omega_x(t) = 0$: there is no rotation about the tangent, minimizing frame twisting (essentially, the roll rate is set to zero).\footnotemark\footnotetext{Since $\omega_x = 0,$ the generated paths do not allow for unbounded roll. The paths constructed remain feasible for a model with non-zero roll rate assumption, however, the paths may be suboptimal when additional constraints, such as bounded roll angles or rates, are imposed. Extending our model and method to enforce bounds on the roll angle would be valuable, and can be considered an important direction for future work.} The kinematics of the frame then satisfy:
\begin{align*}
    \frac{d{\bf T}}{dt} &= \omega(t) \times {\bf T}(t) = \omega_z(t) {\bf Y}(t) - \omega_y(t) {\bf U}(t), \\
    \frac{d{\bf Y}}{dt} &= \omega(t) \times {\bf Y}(t) = -\omega_z(t) {\bf T}(t), \\
    \frac{d{\bf U}}{dt} &= \omega(t) \times {\bf U}(t) = \omega_y(t) {\bf T}(t).
\end{align*}
A key property of an RMF is that ${\bf Y}(t)$ and ${\bf U}(t)$ change only in the direction of ${\bf T}(t)$.

Assume that the vehicle moves at a constant, nonzero longitudinal speed $V_0$. Defining
\begin{align} \label{eq: definitions}
    \kappa_g (s) := \frac{\omega_z(s)}{V_0}, \qquad \kappa_n(s) := -\frac{\omega_y(s)}{V_0},
\end{align}
the kinematic equations parameterized in terms of $s$ become
\begin{align}
    \frac{d\mathbf{X}(s)}{ds} &= \frac{1}{V_0}\frac{d{\bf X}}{dt} = \mathbf{T}(s), \label{eq:motion} \\
    \frac{d\mathbf{T}(s)}{ds} &= \kappa_g(s) \mathbf{Y}(s) + \kappa_n(s) \mathbf{U}(s), \label{eq:kinematics2}\\
    \frac{d\mathbf{Y}(s)}{ds} &= -\kappa_g(s) \mathbf{T}(s), \label{eq:kinematics3}\\
    \frac{d\mathbf{U}(s)}{ds} &= - \kappa_n(s) \mathbf{T}(s). \label{eq:kinematics4}
\end{align}
The bounds for control inputs $\kappa_g$ and $\kappa_n$, which we refer to as geodesic curvature and normal curvature, respectively, are stated as shown below,\footnotemark\footnotetext{We use the notation $\kappa_g$ and $\kappa_n$, since they have the same form as that of the geodesic and normal curvatures in the Darboux frame model, a differential geometric model.} 
\begin{align} \label{eq: curvature_constraints}
    |\kappa_n| \le \frac{1}{R_{pitch}}, \quad |\kappa_g| \leq \frac{1}{R_{yaw}}.
\end{align}
We shall later show that $R_{pitch}$ is the minimum turning radius corresponding to pitch motion and $R_{yaw}$ is the minimum turning radius corresponding to the yaw motion.  
The objective is to compute the minimum distance trajectory from the initial to final configuration, defined by $\mathbf{X},$ $\mathbf{T},$ $\mathbf{Y},$ and $\mathbf{U}$. Hence, the cost to minimize is $J = \int 1 ds.$

\begin{remark}[\textbf{Model}]
Three angles (roll, pitch, and yaw), or equivalently, $\mathbf{T},$ $\mathbf{Y},$ and $\mathbf{U},$ are required to specify the UAV's orientation in 3D, but only two (pitch and yaw) are directly controlled by the primary actuators on a fixed-wing UAV (aileron, rudder, and elevator). The roll angle evolves naturally as a consequence of coordinated turning \cite{small_unmanned_aircraft}, wherein ailerons, which cause roll, allows the vehicle to turn. From a planning perspective, the two control inputs are sufficient to reach any orientation; this is analogous to rigid-body kinematics, wherein a z-y-z rotation can be used to reach any orientation. The Bishop frame is used to separate vehicle orientation from path geometry, enabling continuous and physically feasible orientation profiles while capturing the aircraft's kinematics.
\end{remark}

Geometrically, the path planning problem can be depicted as shown in Fig.~\ref{fig: 3D_config}. A detailed description of this figure follows.

\begin{figure}[htb!]
    \centering
    \includegraphics[width = \linewidth]{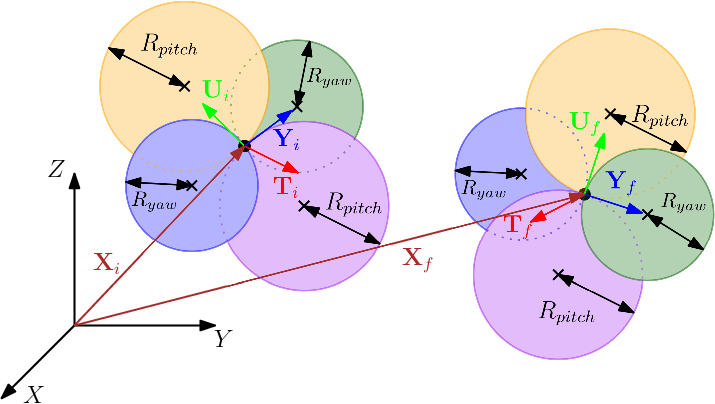}
    \caption{Depiction of spheres corresponding to pitch motion (in orange and magenta) and yaw motion (in blue and green) at the initial and final configuration}
    \label{fig: 3D_config}
\end{figure}

In Fig.~\ref{fig: 3D_config}, it can be observed that four spheres are constructed, which are tangential to the vehicle configuration. To understand how they appear, we need to understand the geometric impact of the curvatures, $\kappa_n$ and $\kappa_g.$ To this end, we derive the closed-form expression for $\mathbf{X},$ $\mathbf{T},$ $\mathbf{Y},$ and $\mathbf{U}$ over an interval wherein $\kappa_g$ and $\kappa_n$ are constants. The obtained expressions are shown in Appendix~\ref{appsubsect: construction_segments}.

We remark here that since the control inputs appear linearly in the differential equations \eqref{eq:kinematics2}-\eqref{eq:kinematics4}, and a minimum time problem is considered, the optimal control actions are expected to be bang-bang from Pontryagin's minimum principle \cite{PMP}.\footnotemark\footnotetext{We remark here that in general, minimizing path length and minimizing time are not equivalent. However, we assume the speed of the vehicle to be constant, which can be set to be a unit speed without loss of generality; if not, the optimal time and optimal path length will differ by a scalar value (which is the speed).}
Therefore, it is sufficient to consider intervals in which $\kappa_g$ and $\kappa_n$ are constant. Furthermore, it suffices to consider $\kappa_g \in \{-\frac{1}{R_{yaw}}, 0, \frac{1}{R_{yaw}} \}$ and $\kappa_n \in \{-\frac{1}{R_{pitch}}, 0, \frac{1}{R_{pitch}} \}$.

Using the closed-form expressions derived in Appendix~\ref{appsubsect: construction_segments}, we state and prove the following two lemmas.
\begin{lemma} \label{lemma: kappan}
    When $\kappa_n = \frac{1}{R_{pitch}}$ or $-\frac{1}{R_{pitch}}$ the corresponding segment lies on spheres with radius $R_{pitch}$ whose center lies along $\mathbf{U}$ or $-\mathbf{U}$, respectively. Furthermore, such segments correspond to a maximum ascent or descent motion of the vehicle with a turning radius of $\frac{1}{\sqrt{\kappa_g^2 + \frac{1}{R_{pitch}^2}}}.$
\end{lemma}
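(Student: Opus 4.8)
The plan is to prove the three assertions directly from the kinematic equations \eqref{eq:motion}--\eqref{eq:kinematics4} with $\kappa_n$ fixed at the constant value $\pm 1/R_{pitch}$. Although the statement is advertised as following from the closed-form expressions in the appendix, I would not actually need them: the cleanest route to the sphere claim is to exhibit the sphere's center as a conserved quantity along the segment, which follows from a one-line differentiation.

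First, for the sphere claim with $\kappa_n = 1/R_{pitch}$, I would introduce the candidate center $\mathbf{C}(s) := \mathbf{X}(s) + R_{pitch}\,\mathbf{U}(s)$ and differentiate in $s$. Using \eqref{eq:motion} for $d\mathbf{X}/ds$ and \eqref{eq:kinematics4} for $d\mathbf{U}/ds$, I get $\frac{d\mathbf{C}}{ds} = \mathbf{T} + R_{pitch}(-\kappa_n \mathbf{T}) = (1 - R_{pitch}\kappa_n)\,\mathbf{T} = \mathbf{0}$, so $\mathbf{C}$ is constant over the segment. Since $\mathbf{U}$ is a unit vector, $\|\mathbf{X}(s) - \mathbf{C}\| = R_{pitch}\,\|\mathbf{U}(s)\| = R_{pitch}$ for every $s$, which shows the entire segment lies on a sphere of radius $R_{pitch}$ whose fixed center lies in the $\mathbf{U}$ direction from the vehicle. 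The case $\kappa_n = -1/R_{pitch}$ is identical upon taking $\mathbf{C} := \mathbf{X} - R_{pitch}\,\mathbf{U}$, which places the center along $-\mathbf{U}$.

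Second, for the turning radius I would read off the path curvature directly from \eqref{eq:kinematics2}. Because $\mathbf{Y}$ and $\mathbf{U}$ are orthonormal, $\left\|\frac{d\mathbf{T}}{ds}\right\| = \sqrt{\kappa_g^2 + \kappa_n^2} = \sqrt{\kappa_g^2 + 1/R_{pitch}^2}$, and since the path is unit-speed in $s$ this magnitude is exactly the curvature of the path. Its reciprocal is the stated turning radius $1/\sqrt{\kappa_g^2 + 1/R_{pitch}^2}$.

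Finally, for the maximum ascent/descent interpretation I would use \eqref{eq: definitions} and \eqref{eq: curvature_constraints}: since $\kappa_n = \omega_y/V_0$ is the scaled pitch rate, the choice $\kappa_n = +1/R_{pitch}$ is precisely the largest admissible value, and the curvature vector $\frac{d\mathbf{T}}{ds}$ then carries its maximal feasible component along $+\mathbf{U}$ (the vehicle's normal/up direction), so $\mathbf{T}$ pitches upward at the maximal rate—maximum ascent—while $\kappa_n = -1/R_{pitch}$ gives maximum descent. I expect the sphere and turning-radius computations to be routine; the only real subtlety is making the ascent/descent statement precise, which reduces to tying the sign of $\kappa_n$ to the direction in which $\mathbf{T}$ rotates relative to $\mathbf{U}$ rather than requiring any separate geometric argument.
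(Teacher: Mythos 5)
Your proof is correct, but it takes a genuinely different route from the paper's. The paper proves the sphere claim by brute force: it specializes the closed-form solution of Appendix~\ref{appsubsect: construction_segments} (with $\mathbf{R}(0)=I$, $\mathbf{X}(0)=\mathbf{0}$) to get $\mathbf{X}(\phi)$ explicitly, and then verifies algebraically that $\|\mathbf{X}(\phi)-(0,0,R_{pitch})^T\|_2^2=R_{pitch}^2$ for all admissible $\kappa_g$; it likewise obtains the turning radius geometrically as half the diameter of the traced circle, $\tfrac{1}{2}\|\mathbf{X}(\pi)-\mathbf{X}(0)\|_2=1/\sqrt{\kappa_g^2+1/R_{pitch}^2}$. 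Your conserved-quantity argument --- differentiating $\mathbf{C}(s)=\mathbf{X}(s)+R_{pitch}\mathbf{U}(s)$ and observing $d\mathbf{C}/ds=(1-R_{pitch}\kappa_n)\mathbf{T}=\mathbf{0}$ --- avoids the closed form entirely and is actually slightly stronger: it shows the segment stays on the sphere for \emph{any} measurable $\kappa_g(s)$, not just constant $\kappa_g$, whereas the paper's verification is tied to the constant-curvature solution. It also makes transparent that the fixed center sits at distance $R_{pitch}$ along the instantaneous $\mathbf{U}(s)$ at every point of the segment. One small caveat on your turning-radius step: $\|d\mathbf{T}/ds\|=\sqrt{\kappa_g^2+\kappa_n^2}$ gives the instantaneous radius of curvature; to match the paper's claim that the segment is a circular arc of that radius you should additionally note the torsion vanishes, which is one more line in your framework --- the unit principal normal is $\mathbf{n}=(\kappa_g\mathbf{Y}+\kappa_n\mathbf{U})/K$ with $K=\sqrt{\kappa_g^2+\kappa_n^2}$, and \eqref{eq:kinematics3}--\eqref{eq:kinematics4} give $d\mathbf{n}/ds=-K\mathbf{T}$, so $\tau=0$ and the constant-curvature curve is indeed a planar circular arc (this does require $\kappa_g$ constant, consistent with the lemma). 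Your treatment of the ascent/descent interpretation, tying the sign of $\kappa_n$ to the component of $d\mathbf{T}/ds$ along $\pm\mathbf{U}$, is at the same level of rigor as the paper's own remark and is fine.
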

\begin{proof}
    The proof is provided in Appendix~\ref{appsect: Proof_Lemma_1}.
\end{proof}

The following lemma states a similar result in a different axis, and the proof follows the same reasoning.
\begin{lemma} \label{lemma: kappag}
    When $\kappa_g = \pm \frac{1}{R_{yaw}},$ the corresponding segment lies on spheres with radius $R_{yaw}$ whose center lies along $\mathbf{Y}$ or $-\mathbf{Y}$. Furthermore, such segments correspond to maximum turn (left or right) motion of the vehicle with a turning radius of $\frac{1}{\sqrt{\frac{1}{R_{yaw}^2} + \kappa_n^2}}.$
\end{lemma}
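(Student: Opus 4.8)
The plan is to mirror the argument for Lemma~\ref{lemma: kappan}, interchanging the roles of $\kappa_g$ and $\kappa_n$ and of the frame vectors $\mathbf{Y}$ and $\mathbf{U}$. The statement splits into two independent claims that I would establish directly from the kinematic equations \eqref{eq:motion}--\eqref{eq:kinematics4}: first, that the arc lies on a sphere of radius $R_{yaw}$ centered along $\pm\mathbf{Y}$; and second, that its turning radius equals $\frac{1}{\sqrt{1/R_{yaw}^2 + \kappa_n^2}}$.

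For the sphere claim, suppose $\kappa_g = \epsilon / R_{yaw}$ is held constant over the segment, where $\epsilon \in \{+1, -1\}$, and propose the candidate center
\begin{equation*}
    \mathbf{C}(s) := \mathbf{X}(s) + \epsilon R_{yaw}\, \mathbf{Y}(s).
\end{equation*}
Differentiating with respect to $s$ and substituting \eqref{eq:motion} and \eqref{eq:kinematics3} gives
\begin{equation*}
    \frac{d\mathbf{C}}{ds} = \mathbf{T} + \epsilon R_{yaw}\,(-\kappa_g \mathbf{T}) = \mathbf{T} - \epsilon^2 \mathbf{T} = \mathbf{0},
\end{equation*}
so $\mathbf{C}$ is a fixed point independent of $s$. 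Since $\mathbf{Y}$ is a unit vector of the Bishop frame, $\lVert \mathbf{X}(s) - \mathbf{C}\rVert = R_{yaw}\lVert \mathbf{Y}(s)\rVert = R_{yaw}$ for all $s$, so the entire arc lies on the sphere of radius $R_{yaw}$ centered at $\mathbf{C}$. The center is displaced from the vehicle along $+\mathbf{Y}$ when $\epsilon = +1$ and along $-\mathbf{Y}$ when $\epsilon = -1$, as claimed.

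For the turning-radius claim, I would use that the curvature of the arc-length--parameterized curve $\mathbf{X}(s)$ is $\lVert d\mathbf{T}/ds\rVert$, whose reciprocal is the turning radius. From \eqref{eq:kinematics2} and the orthonormality of $\mathbf{Y}$ and $\mathbf{U}$,
\begin{equation*}
    \left\lVert \frac{d\mathbf{T}}{ds} \right\rVert = \lVert \kappa_g \mathbf{Y} + \kappa_n \mathbf{U} \rVert = \sqrt{\kappa_g^2 + \kappa_n^2} = \sqrt{\frac{1}{R_{yaw}^2} + \kappa_n^2},
\end{equation*}
using $\kappa_g^2 = 1/R_{yaw}^2$; taking the reciprocal gives the stated turning radius. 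The phrase ``maximum turn'' follows because $\kappa_g = \pm 1/R_{yaw}$ saturates the yaw-rate bound in \eqref{eq: curvature_constraints}, with the sign of $\epsilon$ selecting the left or right sense. The identical conclusions can be read off from the closed-form expressions in Appendix~\ref{appsubsect: construction_segments}, which serves as a cross-check.

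I do not anticipate a genuine obstacle: each step is an elementary consequence of the kinematic ODEs. The only points requiring mild care are confirming that the two $\mathbf{T}$ contributions in $d\mathbf{C}/ds$ cancel exactly (this is where the precise value $\kappa_g = \epsilon/R_{yaw}$ is used) and noting that $\lVert \mathbf{Y}(s)\rVert \equiv 1$ is preserved along the flow---a consequence of the skew-symmetric structure of \eqref{eq:kinematics2}--\eqref{eq:kinematics4} inherited from the RMF---so that the distance to $\mathbf{C}$ stays constant.
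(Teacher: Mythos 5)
Your proof is correct, but it takes a genuinely different route from the paper's. The paper gives no separate proof of this lemma; it points to the proof of Lemma~\ref{lemma: kappan} in Appendix~\ref{appsect: Proof_Lemma_1}, which works entirely from the closed-form solution of Appendix~\ref{appsubsect: construction_segments}: after normalizing $\mathbf{R}(0)=\mathbf{I}$ and $\mathbf{X}(0)=\mathbf{0}$, it writes out $\mathbf{X}(\phi)$ explicitly as in \eqref{eq: position_function}, verifies algebraically that $\|\mathbf{X}(\phi)-\mathbf{c}\|_2^2$ equals the squared radius for the candidate center $\mathbf{c}$, and obtains the turning radius as the half-chord $\tfrac{1}{2}\|\mathbf{X}(\pi)-\mathbf{X}(0)\|_2 = 1/\sqrt{\kappa_g^2+\kappa_n^2}$. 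You instead argue intrinsically from the ODEs: your observation that $\mathbf{C}(s)=\mathbf{X}(s)+\epsilon R_{yaw}\mathbf{Y}(s)$ has $d\mathbf{C}/ds=\mathbf{0}$ precisely when $\kappa_g=\epsilon/R_{yaw}$ is a clean first-integral argument that avoids the matrix exponential altogether and makes transparent why the saturated value of $\kappa_g$ is what produces a sphere of exactly radius $R_{yaw}$; and you get the turning radius as the reciprocal of the curvature $\|d\mathbf{T}/ds\|=\sqrt{\kappa_g^2+\kappa_n^2}$. The paper's computation buys something your curvature step technically presupposes: identifying the instantaneous radius of curvature with a \emph{turning radius} (the radius of a circle) requires knowing the segment is a circular arc, which the closed form (periodic rotation about a fixed axis) exhibits directly, whereas you would want one extra remark — e.g., that a spherical curve with constant curvature has vanishing torsion and is hence a planar circle, or simply the cross-check against Appendix~\ref{appsubsect: construction_segments} that you already mention. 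With that sentence added, your argument is complete, more coordinate-free, and arguably more illuminating than the paper's verification.
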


From these two lemmas, we see that the normal curvature $\kappa_n$ governs the pitch motion, while the geodesic curvature $\kappa_g$ governs the yaw motion. In fact, these curvatures directly correspond to the vehicle’s pitch rate and yaw rate, respectively (which is expected based on the Bishop frame setup and the definitions in \eqref{eq: definitions}). 
When the vehicle moves with its maximum pitch rate and zero yaw rate, it follows a great circle of radius $R_{pitch}$ on the orange or purple sphere shown in Fig.~\ref{fig: 3D_config}. This result comes from Lemma~\ref{lemma: kappan}. Since the vehicle travels at unit speed, the time to complete the circle is $t_{pitch} = 2 \pi R_{pitch}.$ Over this time, the pitch angle changes by $2 \pi$, so the pitch rate is $\frac{2 \pi}{t_{pitch}} = \frac{1}{R_{pitch}}.$ A similar argument holds for the yaw rate, giving a maximum value of $\frac{1}{R_{yaw}}$. Therefore, $\kappa_n$ and $\kappa_g$ represent the vehicle's pitch and yaw rates, respectively.

By varying $\kappa_n \in \{-\frac{1}{R_{pitch}}, 0, \frac{1}{R_{pitch}} \}$ and $\kappa_g \in \{-\frac{1}{R_{yaw}}, 0, \frac{1}{R_{yaw}} \}$, we obtain nine distinct motion primitives, shown in Fig.~\ref{fig: optimal_segments}. 
These were generated using the closed-form expressions, presented in Appendix~\ref{appsubsect: construction_segments}. Using Lemma~\ref{lemma: kappan}, we find that the segments $L_{si},$ $R_{si},$ $L_{so},$ and $R_{so}$ have radius $\frac{1}{\sqrt{\frac{1}{R_{yaw}^2} + \frac{1}{R_{pitch}^2}}}$, corresponding to motion with maximum absolute pitch and yaw rates. Here, $L$ and $R$ denote a left turn and right turn, respectively, which correspond to $\kappa_g = \frac{1}{R_{yaw}}$ and $\kappa_g = -\frac{1}{R_{yaw}},$ respectively. Additionally, subscripts ``$si$" and ``$so$" are used to refer to the segments that lie on the ``inner" sphere and ``outer" sphere, respectively; the inner sphere corresponds to $\kappa_n = \frac{1}{R_{pitch}},$ and the outer sphere corresponds to $\kappa_n = -\frac{1}{R_{pitch}}.$
The segments $G_{si}$ and $G_{so}$ result from pure pitch motion ($\kappa_g = 0$), while $L_p$ and $R_p$ result from pure yaw motion. When both curvatures are zero ($\kappa_n = \kappa_g = 0$), the vehicle moves in a straight line segment $S$.


Using the obtained motion primitives and the observation that $\kappa_n$ and $\kappa_g$ attaining values of $\pm \frac{1}{R_{pitch}}$ and $\pm \frac{1}{R_{yaw}}$ yields two spheres each (a pair along $\mathbf{U}$ and a pair along $\mathbf{Y}$), we can observe that at both the initial and final configuration, four spheres exist around the vehicle.
Additionally, portions of the optimal path will lie on one of the four spheres at the initial configuration and one of the four spheres at the final configuration\footnotemark. Hence, we propose three classes of paths to construct a feasible path connecting one of the initial spheres with one of the final spheres. 
We construct the path using three types of intermediary surfaces (or classes): a cylindrical envelope, a planar surface, or a spherical surface. These three classes of paths are a generalization of the classical $CSC$ and $CCC$ paths for the planar Dubins problems. In our algorithm, we consider a sphere at the initial or final configuration to serve as a generalization of the turn segment ($C$) in a plane; furthermore, we consider the cylindrical envelope and planar surface to generalize the $S$ segment. In the following section, we describe the three classes of paths in more detail.

\footnotetext{The only motion primitive that does not lie on a sphere is a straight line segment $S.$ However, even in this case, a portion of the optimal path can be modeled to lie on one of the four spheres; however, the path will be trivial, i.e., of zero length.}

\begin{figure}[htb!]
    \centering
    \subfloat[Segments on spheres corresponding to max. pitch rate]{\includegraphics[width = 0.4\linewidth]{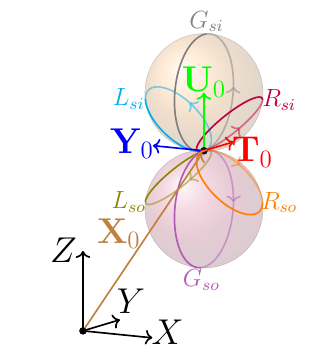}\label{subfig: sphere_segments}}
    \hfil
    \subfloat[Segments on spheres corresponding to max. yaw rate (and straight line segment)]{\includegraphics[width = 0.48\linewidth]{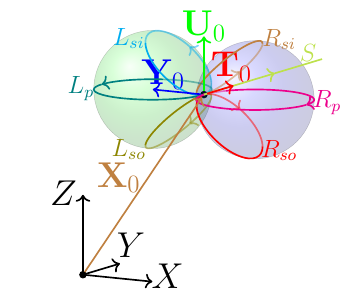}\label{subfig: planar_segments}}
    \caption{Visualization of segments \cite{kumar2025newapproachmotionplanning}. We note that $L_{si},$ $R_{si},$ $L_{so},$ and $R_{so}$ are shown in both subfigures, since each of these segments lies on two spheres.}
    \label{fig: optimal_segments}
\end{figure}

\begin{remark}
In general, the yaw and pitch rates for different UAVs may vary and can be coupled. The problem posed here is still of significant interest in obtaining lower and upper bounds for the shortest path length. One such case is illustrated by the region within the boundaries, shown in brown, in Fig.~\ref{fig: general_control_input}. However, by replacing the boundary with a rectangular region inscribed within this area, one can derive an upper bound that is a feasible solution. Similarly, by outer approximating the allowable region with a larger rectangular region, shown in green in Fig.~\ref{fig: general_control_input}, a lower bound for the optimal path length can be obtained.
\end{remark}

\begin{figure}[htb!]
    \centering
    \includegraphics[width=0.6\linewidth]{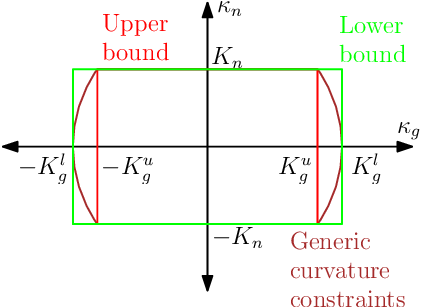}
    \caption{Generic control inputs region and obtaining bounds for rectangular control input region considered in this paper}
    \label{fig: general_control_input}
\end{figure}

\section{Shortest Path Construction} \label{sect: feasible_solution_construction}

The constructed paths start and end on a spherical surface at the initial and final configurations. Three distinct classes of paths are presented, each using a different intermediary surface for the sub-path between the spheres, which can be cylindrical, planar, or spherical. We refer to the spheres centered along the $\mathbf{U}$-axis as the inner (orange, along $\mathbf{U}$) and outer (purple, along $-\mathbf{U}$) spheres, as shown in Fig.~\ref{fig: 3D_config}. Similarly, the spheres centered along the $\mathbf{Y}$-axis are called the left (green, along $\mathbf{Y}$) and right (blue, along $-\mathbf{Y}$) spheres.

The intermediary sub-paths considered are as follows:
\begin{enumerate}
    \item In the first class, the sub-path is constructed using a cylindrical envelope. In this case, we connect the pair of spheres of the same type at the initial and final configurations. There are four such pairs: inner-to-inner, outer-to-outer, left-to-left, and right-to-right. Fig.~\ref{subfig: cylindrical_envelope} illustrates the cylindrical envelope between inner and outer spheres. We will later show that these paths satisfy the curvature constraints in \eqref{eq: curvature_constraints}. The full construction is detailed in Section~\ref{sect: cylindrical_envelope}.
    \item In the second class of paths, a sub-path between a pair of spheres is constructed on a cross-tangent plane.  For spheres of opposite type, such as inner-to-outer, outer-to-inner, left-to-right, or right-to-left, a path through a cylindrical envelope is not feasible. This is because the normal vector of the cylindrical surface, $\mathbf{U}$ for pitch spheres and $\mathbf{Y}$ for yaw spheres, remains constant along the envelope and does not support a continuous feasible orientation between opposing directions. Hence, such spheres are connected using a cross-tangent plane. An example for the inner-to-outer case is shown in Fig.~\ref{subfig: cross-tangent_planes}. Details of this construction are provided in Section~\ref{sect: cross_tangent_envelope}.
    \item In the third class of paths, we construct sub-paths between pairs of spheres of the same type using an intermediary sphere. There are four such configurations: inner–outer–inner, outer–inner–outer, left–right–left, and right–left–right. These paths are designed for initial and final locations that are close to each other. An example of this type is illustrated in Fig.~\ref{subfig: int_sphere_envelope}, and the full construction is described in Section~\ref{sect: sphere_envelope}.
\end{enumerate}

\begin{figure*}[htb!]
    \centering
    \subfloat[Cylindrical envelope]{\includegraphics[width = 0.34\linewidth]{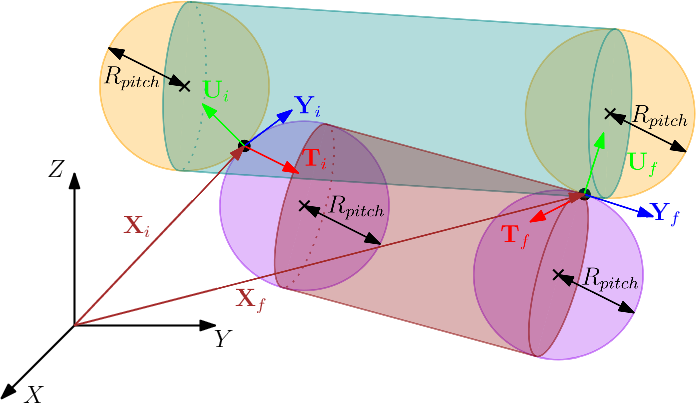}
    \label{subfig: cylindrical_envelope}}
    \hfil
    \subfloat[Sample cross-tangent plane]{\includegraphics[width = 0.34\linewidth]{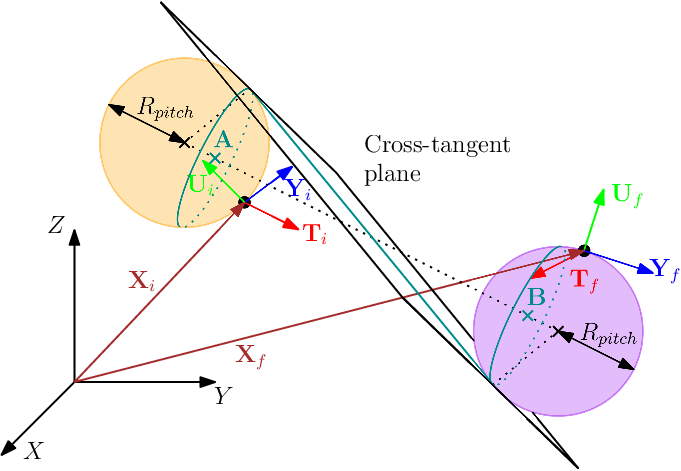}
    \label{subfig: cross-tangent_planes}}
    \hfil
    \subfloat[Connection through an intermediary sphere]{\includegraphics[width = 0.28\linewidth]{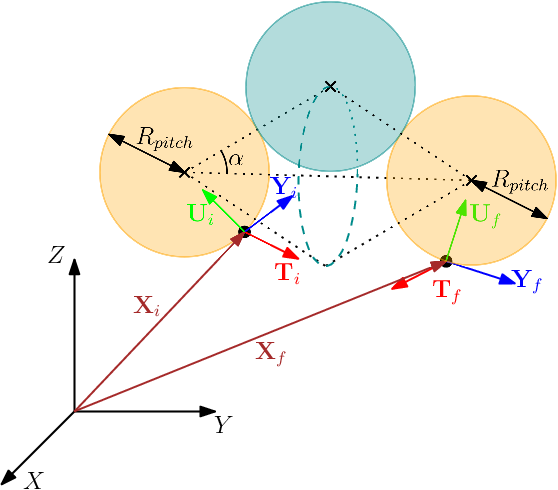}
    \label{subfig: int_sphere_envelope}}
    \caption{Depiction of surfaces used to connect spheres at the initial and final configurations}
    \label{fig: cylindrical_conical_envelopes}
\end{figure*}

\begin{remark}
    Note that only the listed classes are possible using a single intermediary surface. For example, connecting an inner sphere at the initial configuration with a left sphere at the final configuration is not possible. A cylindrical surface cannot be used because the outward normal directions differ: $-\mathbf{U}$ for the inner sphere and $-\mathbf{Y}$ for the left sphere. Since the normal vector remains constant across cylindrical, spherical, and frustum surfaces, none of these can bridge the two spheres. A planar surface also doesn’t work, as there is no common tangent plane between the two. For instance, the $\mathbf{T}-\mathbf{Y}$ plane is tangent to the inner sphere but not to the left sphere (see Fig.~\ref{fig: 3D_config}). Therefore, using a single intermediary surface, we can only connect either a pair of pitch spheres or a pair of yaw spheres, not a mix of both. The underlying reason is that $\mathbf{X},$ $\mathbf{T},$ $\mathbf{Y},$ and $\mathbf{U}$ must be continuous. The same argument applies for using an intermediary sphere as the tangential sphere to the initial and final spheres, i.e., only a right sphere can be used to connect two left spheres.
\end{remark}

\begin{remark}
    Although it is possible to connect a pair of spheres of the same type (e.g., inner–inner) using a plane, we do not consider such paths in this paper. This is because the planar connection is a special case of the cylindrical connection. This is because a plane can be wrapped into a cylinder without changing the path length or violating the curvature constraints. We discuss this preservation property in more detail when introducing the cylindrical path construction in the next section.
\end{remark}

\begin{remark}
    In Sections~\ref{sect: cylindrical_envelope}, \ref{sect: cross_tangent_envelope}, and \ref{sect: sphere_envelope}, we present the methodology for constructing three types of paths by introducing parameters that describe each path, which are subsequently discretized. Whenever we refer to the ``shortest'' path, we mean the least-length path obtained by our construction methodology under the chosen discretization; global optimality is not claimed. However, we note that the subpath constructed on each individual surface (cylinders, spheres, and planes) is optimal for that surface. Our methodology will always yield a feasible path, as at least the first class of path (through the cylindrical envelope) always exists.
\end{remark}

\section{Path Synthesis on Cylindrical Envelope} \label{sect: cylindrical_envelope}

To generate a feasible path connecting two spheres of the same radius and type via a cylindrical envelope (as shown in Fig.~\ref{subfig: cylindrical_envelope}), the vehicle follows this sequence:

\begin{itemize}
\item Step 1: The path starts on the initial sphere and transitions to the cylindrical surface. The transition point, $\mathbf{X}_{ic}$,  lies on the boundary circle formed by the intersection of the sphere and the cylindrical envelope. At this point, its longitudinal direction aligns with $\mathbf{T}_{ic},$ as illustrated in Fig.~\ref{fig: notation_discretization}.
\item Step 2: The path exits the cylindrical envelope at $\mathbf{X}_{oc}$, with longitudinal direction $\mathbf{T}_{oc}$.
\item Step 3: Finally, the path continues on the final sphere to reach the desired final configuration.
\end{itemize}

\begin{figure}[htb!]
    \centering
    \includegraphics[width = 0.75\linewidth]{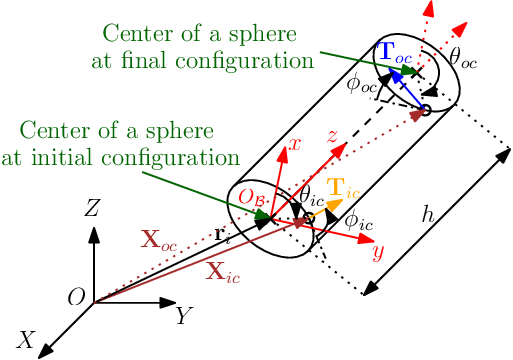}
    \caption{Notation for discretization of position and headings on a cylinder connecting two spheres}
    \label{fig: notation_discretization}
\end{figure}

Note that the entry and exit points on the cylinder, along with their corresponding tangent directions, are not fixed and can be freely chosen. We parameterize these directions using four angles: $\theta_{ic},$ $\phi_{ic},$ $\theta_{oc},$ and $\phi_{oc}.$ The following section describes how the path on the spheres and the cylinder is constructed based on these four parameters.

\begin{remark}
    Since cylinders connect spheres of the same type, the cylinder always has the same radius as the spheres and thus does not expand or shrink.
\end{remark}

\subsection{Origins of the Spheres and Axes of the Cylinders} \label{subsect: origin_sphere_cylinder_axes}

We begin by deriving expressions for the centers of the spheres at the initial and final configurations, denoted by $\mathbf{r}_i$ and $\mathbf{r}_{f},$ respectively. These vectors are given by
\begin{align}
    \mathbf{r}_i &= \mathbf{X}_i + \delta^{initial}_{i, o} R_{pitch} \mathbf{U}_i + \delta^{initial}_{l, r} R_{yaw} \mathbf{Y}_i, \label{eq: initial_sphere_center_expression} \\
    \mathbf{r}_f &= \mathbf{X}_f + \delta^{final}_{i, o} R_{pitch} \mathbf{U}_f + \delta^{final}_{l, r} R_{yaw} \mathbf{Y}_f. \label{eq: final_sphere_center_expression}
\end{align}
Here, $\delta^{initial}_{i, o} = 1, -1$ or $0$ depending on whether the inner sphere, outer sphere, or one of the left/right spheres is selected at the initial configuration, respectively. Similarly, $\delta^{initial}_{l, r} = 1, -1, 0$ if the left sphere, right sphere, or one of the inner/outer spheres is chosen. The same interpretation applies for $\delta^{final}_{i, o}$ and $\delta^{final}_{l, r}.$
For cylindrical envelope constructions, we require that $\delta^{initial}_{i, o} = \delta^{final}_{i, o}$ and $\delta^{initial}_{l, r} = \delta^{final}_{l, r}$.

We can obtain the axis of the cylinder that connects the selected pair of spheres as (refer to Fig.~\ref{subfig: cylindrical_envelope})
\begin{align}
    \mathbf{k} &= \frac{\mathbf{r}_{f} - \mathbf{r}_{i}}{\|\mathbf{r}_{f} - \mathbf{r}_{i}\|_2}. \label{eq: k_definition}
\end{align}
Furthermore, the length of the cylinder is given by $h = \|\mathbf{r}_{f} - \mathbf{r}_{i}\|_2.$ Since the radius of the cylinder is the same as the radius of the selected pair of spheres, the cylinder's radius is $R_{pitch}$ if $\delta^{inner}_{i, o} \neq 0$ and $R_{yaw}$ if $\delta^{inner}_{l, r} \neq 0.$

We now derive expressions for the entry and exit points on the cylinder ($\mathbf{X}_{ic}$ and $\mathbf{X}_{oc}$) as well as their corresponding tangent directions ($\mathbf{T}_{ic}$ and $\mathbf{T}_{oc}$).

\subsection{Parameters for the location and tangent vector on the cylinder}

On the cylindrical envelope, we parameterize the entry point $\mathbf{X}_{ic}$ and the tangent $\mathbf{T}_{ic}$ by two angles, $\theta_{ic}$ and $\phi_{ic}$ (see Fig.~\ref{fig: notation_discretization}). Likewise, $\theta_{oc}$ and $\phi_{oc}$ parameterize the exit point $\mathbf{X}_{oc}$ and the corresponding tangent $\mathbf{T}_{oc}$. To derive these expressions, we introduce a body frame $\mathcal{B} (O_B, x, y, z)$ centered at the cylinder’s base point $\mathbf{r}_i$, with its $z$‑axis aligned along the cylinder axis (also shown in Fig.~\ref{fig: notation_discretization}).\footnotemark

\footnotetext{
Since the cylinder’s $z$-axis is known, but the $x$- and $y$-axes of the body frame are not, we begin by aligning the $x$-axis with the global $X$-axis. We then apply Gram–Schmidt orthogonalization to compute a unit vector perpendicular to the $z$-axis. If the dot product between the global $X$-axis and the cylinder’s $z$-axis is close to one (i.e., they are nearly aligned), we instead use the global $Y$-axis to initialize the process.
}

The expressions for $\mathbf{X}_{ic}$ and $\mathbf{X}_{oc}$ can be derived in the body frame $\mathcal{B}$ to be
\begin{align} \label{eq: expression_Xic_Xoc_B}
    \mathbf{X}_{ic}^\mathcal{B} &= \begin{pmatrix}
        \overline{R} \cos{\theta_{ic}} \\
        \overline{R} \sin{\theta_{ic}} \\ 0
    \end{pmatrix}, \quad
    \mathbf{X}_{oc}^\mathcal{B} = \begin{pmatrix}
        \overline{R} \cos{\theta_{oc}} \\
        \overline{R} \sin{\theta_{oc}} \\
        h
    \end{pmatrix},
\end{align}
where $\overline{R}$ is the radius of the cylinder, and is given by
\begin{align} \label{eq: Rbar_definition}
    \overline{R} = R_{pitch} |\delta^{initial}_{i, o}| + R_{yaw} |\delta^{initial}_{l, r}|.
\end{align}
We can derive the direction cosines of the tangent vector when it enters and exits the cylinder as (refer to Fig.~\ref{fig: notation_discretization})
\begin{align*}
    \mathbf{T}_{ic}^\mathcal{B} &= \mathbf{R}_z (\theta_{ic}) \mathbf{R}_x (\phi_{ic}) \begin{pmatrix} 0 \\ 1 \\ 0 \end{pmatrix} = \begin{pmatrix}
        -\sin{\theta_{ic}} \cos{\phi_{ic}} \\
        \cos{\theta_{ic}} \cos{\phi_{ic}} \\
        \sin{\phi_{ic}}
    \end{pmatrix}, \\
    \mathbf{T}_{oc}^\mathcal{B} &= \mathbf{R}_z (\theta_{oc}) \mathbf{R}_x (\phi_{oc}) \begin{pmatrix} 0 \\ 1 \\ 0 \end{pmatrix} = 
    \begin{pmatrix}
        -\sin{\theta_{oc}} \cos{\phi_{oc}} \\
        \cos{\theta_{oc}} \cos{\phi_{oc}} \\
        \sin{\phi_{oc}}
    \end{pmatrix}.
\end{align*}
Here, $\mathbf{R}_z$ and $\mathbf{R}_x$ are standard {\it elementary rotation matrices} for rotation about the $z$ and $x$ axis, respectively.

The entry position $\mathbf{X}_{ic}$ and tangent direction $\mathbf{T}_{ic}$ in the global frame $\mathcal{G} (O, X, Y, Z)$ can be expressed as
\begin{align}
    \mathbf{X}_{ic}^\mathcal{G} &= \begin{pmatrix}
        \mathbf{x} & \mathbf{y} & \mathbf{z}
    \end{pmatrix} \mathbf{X}_{ic}^\mathcal{B} + \begin{pmatrix}
        X_{O_\mathcal{B}} \\ Y_{O_\mathcal{B}} \\ Z_{O_\mathcal{B}}
    \end{pmatrix}, \label{eq: Eulerian_viewpoint_position} \\
    \mathbf{T}_{ic}^\mathcal{G} &= \begin{pmatrix}
        \mathbf{x} & \mathbf{y} & \mathbf{z}
    \end{pmatrix} \mathbf{T}_{ic}^\mathcal{B}, \label{eq: Eulerian_viewpoint_tangent}
\end{align}
where $\mathbf{x}, \mathbf{y},$ and $\mathbf{z}$ are unit vectors along the $x, y,$ $z$ axes of the body frame $\mathcal{B},$ and $X_{O_\mathcal{B}}, Y_{O_\mathcal{B}}, Z_{O_\mathcal{B}}$ is the location of the body frame's origin. Analogous expressions hold for $\mathbf{X}_{oc}$ and $\mathbf{T}_{oc}$.

With the expressions for the entry and exit locations and their corresponding tangent vectors on the cylindrical envelope now established, we proceed to construct the optimal path on the initial and final spheres, as well as on the cylindrical envelope.

\subsection{Generation of paths on initial and final spheres} \label{subsect: initial_final_spheres_path_generation}

Consider the chosen sphere at the initial configuration. We need to obtain the optimal path connecting the initial configuration to the location $\mathbf{X}_{ic}^\mathcal{G}$ with heading direction given by $\mathbf{T}_{ic}^\mathcal{G}$ to enter the cylindrical envelope (as shown in Fig.~\ref{fig: notation_discretization}). 
We simplify this problem by translating the sphere's center to the origin. This allows us to analyze the motion using a Sabban frame \cite{3D_Dubins_sphere, kumar2025newapproachmotionplanning}. The task becomes finding the optimal path on the sphere's surface that connects an initial location $\mathbf{X}_{sp, 0}$ and tangent $\mathbf{T}_{sp, 0}$ to a final location and tangent.


In \cite{3D_Dubins_sphere, kumar2025newapproachmotionplanning}, the Sabban frame model was used to study motion planning on a \textit{unit} sphere. The configuration of the vehicle was specified by a location $\hat{\mathbf{X}}_{sp}$ (which is a unit vector pointing radially outwards), a tangent vector $\mathbf{T}_{sp}$ along the longitudinal direction of the vehicle, and a normal vector $\mathbf{N}_{sp}$ along the lateral direction. Additionally, the path was parametrized in terms of arc length $\hat{s}.$ The evolution equations for these vectors are given by
\begin{align}
    &\frac{d\hat{\mathbf{X}}_{sp}}{d\hat{s}} (\hat{s}) = \mathbf{T}_{sp} (\hat{s}), \quad
    \frac{d\mathbf{T}_{sp}}{d\hat{s}} (\hat{s}) = - \hat{\mathbf{X}}_{sp} (\hat{s}) + \hat{u}_g \mathbf{N}_{sp} (\hat{s}), \nonumber \\
    &\frac{d\mathbf{N}_{sp}}{d \hat{s}} (\hat{s}) = - \hat{u}_g \mathbf{T}_{sp} (\hat{s}),
\end{align}
where $\hat{u}_g \in [-\hat{U}_{max}, \hat{U}_{max}]$ is the geodesic curvature on the unit sphere and serves as the control input. It relates to the minimum turning radius $\hat{r}$ on the unit sphere by $\hat{r} = \frac{1}{\sqrt{1 + \hat{U}_{max}^2}}.$

We can adapt the previous results for motion planning on a sphere of any radius ($\overline{R}$) by scaling the problem to a unit sphere problem.\footnotemark\, First, we compute the normal vector $\mathbf{N}_{sp} := \frac{1}{\overline{R}} \mathbf{X}_{sp} \times \mathbf{T}_{sp}.$ While scaling does not affect the tangent vector $\mathbf{T}_{sp}$ or the normal vector $\mathbf{N}_{sp}$, other parameters change. The location on the unit sphere becomes $\hat{\mathbf{X}}_{sp} := \frac{1}{\overline{R}} \mathbf{X}_{sp}$, and the corresponding minimum turning radius becomes $\hat{r} = \frac{1}{\overline{R}} r.$ A detailed derivation of this scaling is available in Appendix~\ref{appsubsect: Sabban_frame_generalization}.
\footnotetext{We note here that we perform this scaling since $\mathbf{X}_{sp}, \mathbf{T}_{sp},$ and $\mathbf{N}_{sp}$ do not form a rotation matrix as $\mathbf{X}_{sp}$ is not a unit vector. However, when the problem is scaled to motion planning on a unit sphere, these vectors form a rotation matrix; therefore, the path can be constructed easily.}

\begin{figure}[htb!]
    \centering
    \includegraphics[width=0.7\linewidth]{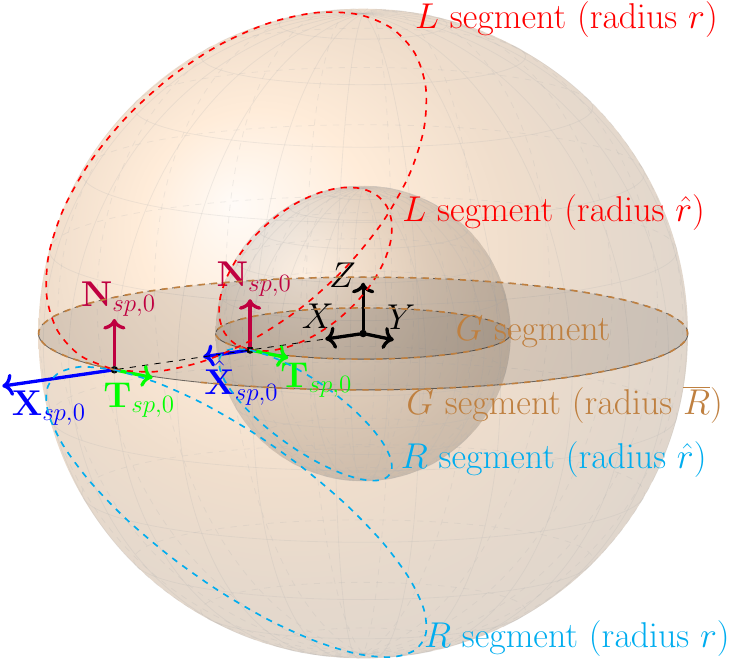}
    \caption{Motion planning on sphere with radius $\overline{R}$ and on a unit sphere}
    \label{fig: sphere_motion_planning}
\end{figure}

From \cite{kumar2025newapproachmotionplanning}, the candidate optimal paths on a unit sphere are of type $CGC, CCC$ for $\hat{r} \leq \frac{1}{2},$ $CGC, CCCC,$ or a degenerate path for $\frac{1}{2} < \hat{r} \leq \frac{1}{\sqrt{2}},$ and $CGC,$ $CCCCC,$ or $CC_\pi C$\footnotemark\footnotetext{$C_\pi$ refers to a $C$ segment (i.e., a left turn or right turn of minimum turning radius) with an arc angle of exactly $\pi$ radians.} for $\frac{1}{\sqrt{2}} < \hat{r} \leq \frac{\sqrt{3}}{2}.$\footnotemark\footnotetext{The optimal path candidates for spherical motion planning with $\hat{r} > \frac{\sqrt{3}}{2}$ remain an open problem. It has been hypothesized that, as $\hat{r} \rightarrow 1$, an increasing number of path concatenations is required due to the progressively limited maneuverability of the vehicle, with the number of concatenations tending to infinity.} The analytical computation of the arc angles for each path is provided in \cite{kumar2025generationpathsmotionplanning}. We note here that the arc angles of the segments of a path on the unit sphere and the corresponding path on the sphere with radius $\overline{R}$ would be the same. Hence, we can obtain the arc angles of the segments for each candidate path on the sphere with radius $\overline{R}$ using \cite{kumar2025generationpathsmotionplanning}.

\begin{remark}
    The arc angle $\phi$ is related to the segment length $l$ by $l = \hat{r} \phi$ for $L$ and $R$ segments, and $l = \phi$ for a $G$ segment on a unit sphere.
\end{remark}


Finally, we want to obtain the expressions for $\mathbf{X}_{sp},$ $\mathbf{T}_{sp},$ and $\mathbf{N}_{sp}$ along the path to describe the instantaneous configuration of the vehicle along the sphere. We can obtain these expressions by solving the Sabban frame equations, derived in Appendix~\ref{appsubsect: Sabban_frame_generalization}, 
using the Euler-Rodriguez formula. Therefore, the configuration of the vehicle on the sphere along the path can be obtained. 

The last step to be performed is to obtain the configuration of the vehicle in 3D (which utilizes the rotation minimizing frame). Since we had shifted the origin of the sphere to coincide with the origin of the global frame $\mathcal{G},$ 
the location ($\mathbf{X}$) and longitudinal direction ($\mathbf{T}$) 
can be easily obtained as
\begin{align*}
    \mathbf{X} (s) = \mathbf{X}_{sp} (s) + \mathbf{r}_i, \quad \mathbf{T} (s) = \mathbf{T}_{sp} (s).
\end{align*}
Furthermore, depending on the type of sphere chosen at the initial configuration, $\mathbf{Y}$ and $\mathbf{U}$ can be computed (refer to Fig.~\ref{fig: sphere_motion_planning}). If $\delta_{i, o}^{initial} \neq 0$ or $\delta_{l, r}^{initial} \neq 0,$ the expressions for $\mathbf{U}$ (the surface normal) and $\mathbf{Y}$ (the tangent normal) are obtained, respectively, as (refer to Figs.~\ref{fig: 3D_config} and \ref{fig: sphere_motion_planning})
\begin{align*}
    \mathbf{U} (s) &= -\delta_{i, o}^{initial} \frac{1}{\overline{R}} \mathbf{X}_{sp} (s), \quad \delta_{i, o}^{initial} \neq 0, \\
    \mathbf{Y} (s) &= -\delta_{l, r}^{initial} \frac{1}{\overline{R}} \mathbf{X}_{sp} (s), \quad \delta_{l, r}^{initial} \neq 0.
\end{align*}
When $\delta_{i, o}^{initial} \neq 0,$ $\mathbf{Y} = \mathbf{U} \times \mathbf{T};$ when $\delta_{l, r}^{initial} \neq 0,$ $\mathbf{U} = \mathbf{T} \times \mathbf{Y}.$ The path on the sphere at the final configuration is constructed similarly.

\subsection{Generation of path on cylinder}

In this section, we describe the construction of the optimal Dubins path on a cylindrical surface. 
This path connects an initial and final configuration on the cylinder, which we had previously parameterized in terms of $\theta_{ic},$ $\phi_{ic},$ $\theta_{oc},$ and $\phi_{oc}.$ 
The path we construct on the cylinder must obey the geodesic curvature (yaw rate) and normal curvature (pitch rate) constraints for the 3D model. We note that the radius of the cylinder is $\overline{R},$ whose definition is given in \eqref{eq: Rbar_definition}. We choose the bound on the geodesic curvature for motion over the cylinder to be
\begin{align} \label{eq: geodesic_curvature_cylinder}
    |\kappa_{g, cyc}| \leq \begin{cases}
        \frac{1}{R_{pitch}}, & \delta_{i, o}^{initial} = 0, \\
        \frac{1}{R_{yaw}}, & \delta_{l, r}^{initial} = 0.
    \end{cases}
\end{align}
We claim that the considered radius for the cylinder and the geodesic curvature bounds satisfy the geodesic curvature and normal curvature constraints for the 3D problem.

\begin{lemma} \label{lemma: cylinder}
The optimal path on a cylinder of radius $\overline{R},$ defined in \eqref{eq: Rbar_definition}, with geodesic curvature bounds given by \eqref{eq: geodesic_curvature_cylinder} satisfies the geodesic curvature and normal curvature bounds for the proposed rotation minimizing frame model.
\end{lemma}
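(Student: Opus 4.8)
The plan is to exploit the fact that the path on the cylinder is the isometric lift of a planar Dubins path, so that the two controlled curvatures decouple: the geodesic curvature is intrinsic, hence governed by the unwrapped Dubins turning radius, whereas the normal curvature is fixed by the surface geometry and is automatically bounded by the reciprocal of the cylinder radius. First I would parametrize the lifted curve on the cylinder of radius $\overline{R}$ by its angular coordinate $\phi(s)$ and axial coordinate $z(s)$, imposing the unit-speed condition $\overline{R}^2 (\phi')^2 + (z')^2 = 1$. I would then record which control each surface curvature corresponds to. For a pitch cylinder ($\delta^{initial}_{i,o} \neq 0$, so $\overline{R} = R_{pitch}$) the outward surface normal is $\mathbf{U}$; since $\mathbf{Y} = \mathbf{U} \times \mathbf{T}$, the surface normal curvature equals $\kappa_n$ and the surface geodesic curvature equals $\kappa_g$. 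For a yaw cylinder ($\delta^{initial}_{l,r} \neq 0$, so $\overline{R} = R_{yaw}$) the surface normal is $\pm\mathbf{Y}$, which interchanges the two roles (up to sign): the surface geodesic curvature corresponds to $\kappa_n$ and the surface normal curvature to $\kappa_g$.

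Next I would bound the geodesic curvature. Because unwrapping the cylinder onto the plane is a local isometry, it preserves geodesic curvature, so the lifted curve has geodesic curvature equal to the signed curvature of the planar Dubins path. That Dubins path is generated with minimum turning radius equal to the reciprocal of the bound in \eqref{eq: geodesic_curvature_cylinder}, so its curvature never exceeds that bound, and neither does the surface geodesic curvature. Matching this to the correspondence above yields one of the two bounds in \eqref{eq: curvature_constraints} in each case ($|\kappa_g| \le 1/R_{yaw}$ for a pitch cylinder, $|\kappa_n| \le 1/R_{pitch}$ for a yaw cylinder). For the normal curvature I would invoke Meusnier's and Euler's theorems: the cylinder has principal curvatures $0$ along its axis and $1/\overline{R}$ around its cross-section, so a unit tangent making angle $\theta$ with the axis has normal curvature $\frac{1}{\overline{R}}\sin^2\theta$; equivalently the direct computation gives $-\overline{R}(\phi')^2$, whose magnitude is at most $1/\overline{R}$ by the unit-speed condition $\overline{R}^2(\phi')^2 \le 1$. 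Hence $|\kappa_n| \le 1/\overline{R}$ for \emph{every} curve on the cylinder, regardless of the chosen Dubins path. Substituting $\overline{R} = R_{pitch}$ (pitch cylinder) or $\overline{R} = R_{yaw}$ (yaw cylinder) supplies the remaining bound in each case.

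The step I expect to be the main obstacle is the identification of the surface (Darboux) curvatures with the control inputs $\kappa_g, \kappa_n$ of the rotation-minimizing frame. On a sphere this identification is immediate, since the surface is umbilic and the relative (geodesic) torsion vanishes, so the Sabban frame is already rotation-minimizing; on a cylinder the geodesic torsion $\phi' z'$ is generically nonzero, so the surface normal $\mathbf{U}$ (respectively $\mathbf{Y}$) and the RMF vectors need not coincide away from the entry and exit circles. I would therefore take care to justify the correspondence using the boundary alignment of $\mathbf{U}$ (respectively $\mathbf{Y}$) with the surface normal at the sphere--cylinder junctions, together with the decoupling established above, so that the controlled geodesic curvature and the geometrically forced normal curvature are exactly the quantities constrained in \eqref{eq: curvature_constraints}. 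Once that correspondence is in place, the two short bounds from the preceding paragraph close the proof.
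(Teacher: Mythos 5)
Your proposal is correct and follows essentially the same route as the paper's proof in Appendix~E: the geodesic curvature bound comes from the bending invariance of geodesic curvature under unwrapping to the planar Dubins path, and the normal curvature bound from Euler's theorem with principal curvatures $0$ and $1/\overline{R}$, together with the same pitch-/yaw-cylinder case split identifying the surface curvatures with $\kappa_g$ and $\kappa_n$. The geodesic-torsion subtlety you flag in your last paragraph is a legitimate observation but is moot for the lemma as stated, because the construction \emph{defines} the vehicle's $\mathbf{U}$ (respectively $\mathbf{Y}$) to coincide with the cylinder's surface normal along the entire path, so the Darboux-frame components are by construction the quantities bounded in \eqref{eq: curvature_constraints}.
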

\begin{proof}
    The proof is provided in Appendix~\ref{appsubsect: proof_lemma_cylinder}.
\end{proof}

We present the construction of the optimal path on the cylinder. Note that geodesic curvature is bending invariant \cite{struik}. Hence, we can unwrap the cylinder onto a plane, as shown in Fig.~\ref{fig: cylinder_unwrapping}, and construct the optimal path on the plane. Finally, the constructed path on the plane can be wrapped back onto the cylinder. 

\subsubsection{Unwrapping frame for cylinder}

For unwrapping the cylinder, we consider a frame $\mathcal{U},$ referred to as the unwrapping frame, with axes $x_\mathcal{U},$ $y_\mathcal{U},$ and $z_\mathcal{U}$. The origin for $\mathcal{U}$ is at the point of entry of the cylinder ($\mathbf{X}_{ic}$). Furthermore, $z_\mathcal{U}$ is parallel to $z$ and $y_\mathcal{U}$ points radially inwards to the cylinder. 
We will use the unwrapping frame for constructing the path on the plane. Once we construct such a path, we will represent it in the body frame $\mathcal{B},$ and finally obtain the vehicle's configuration in the global frame $\mathcal{G}$ for the 3D problem.

Consider a point $Q$ on the cylinder.
The relationship between its location in the unwrapping frame ($\mathbf{X}_Q^\mathcal{U}$) and the body frame ($\mathbf{X}_Q^\mathcal{B}$) is given by\footnotemark\footnotetext{For motion planning on a cylinder where the initial location (equivalent to the origin of $\mathcal{U}$) is not in the $xy$ plane, the last term in \eqref{eq: coordinate_transformation_cylinder_body} can be replaced with $(\overline{R} \cos{\theta_{ic}}, \overline{R} \sin{\theta_{ic}}, d_{ic})^T.$ Here, $d_{ic}$ is the distance from the $xy$ plane.}
\begin{align} \label{eq: coordinate_transformation_cylinder_body}
    \mathbf{X}_Q^\mathcal{B} &= \begin{pmatrix}
        -\sin{\theta_{ic}} & -\cos{\theta_{ic}} & 0 \\
        \cos{\theta_{ic}} & -\sin{\theta_{ic}} & 0 \\
        0 & 0 & 1
    \end{pmatrix} \mathbf{X}_Q^\mathcal{U} + \begin{pmatrix}
        \overline{R} \cos{\theta_{ic}} \\
        \overline{R} \sin{\theta_{ic}} \\
        0
    \end{pmatrix}.
\end{align}

\begin{figure}[htb!]
    \centering
    \subfloat[Unwrapping frame chosen]{\includegraphics[width = 0.35\linewidth]{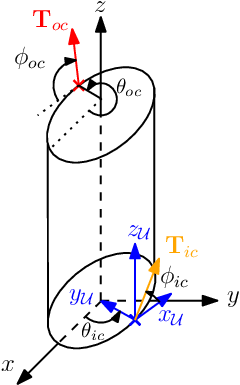}
    \label{subfig: body_frame_cylinder}} \hfill
    \subfloat[Unwrapping plane chosen]{\includegraphics[width = 0.5\linewidth]{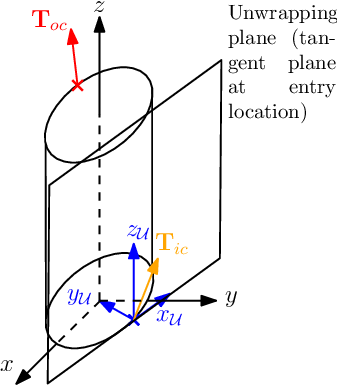}
    \label{subfig: cylinder_unwrapping_plane}}
    \caption{Frames on the cylinder and the unwrapping plane chosen for the cylinder}
    \label{fig: cylinder_unwrapping}
\end{figure}

Using \eqref{eq: coordinate_transformation_cylinder_body}, we can represent the entry location $\mathbf{X}_{ic}$ and the exit location $\mathbf{X}_{oc}$ in the unwrapping frame $\mathcal{U}$ as
$\mathbf{X}_{ic}^\mathcal{U} = (0, 0, 0)^T$ and $\mathbf{X}_{oc}^\mathcal{U} = (\overline{R} \sin{(\delta \theta)}, \overline{R} (1 - \cos{(\delta \theta)}), h)^T.$
Here, the expression for $\mathbf{X}_{oc}^\mathcal{B}$ from \eqref{eq: expression_Xic_Xoc_B} was used, and $\delta \theta := \theta_{oc} - \theta_{ic}.$

We now aim to unwrap the cylindrical surface onto a plane, selecting the tangent plane at $\mathbf{X}_{ic}$ as reference. Therefore, the unwrapping plane is defined by $x_\mathcal{U}$ and $z_\mathcal{U}$ axes, as shown in Fig.~\ref{subfig: cylinder_unwrapping_plane}. We will now describe the mapping of the initial and final configurations of the cylinder to the unwrapping plane. 

\subsubsection{Configurations after unwrapping cylinder}
Consider unwrapping a point on the cylinder as shown in Fig.~\ref{fig: unwrapping_point_right_circular_cylinder}. A point $P,$ whose coordinates are $(\overline{R} \sin{(\delta \theta)}, \overline{R} (1 - \cos{(\delta \theta)}), \delta d)$ in $\mathcal{U},$ gets mapped to two points on the plane due to periodicity of the angle $\delta \theta \in (-\pi, \pi]$\footnotemark. Hence, the two images of $P$ obtained on the plane, shown in Fig.~\ref{fig: unwrapping_point_right_circular_cylinder}, are given by $P_1 (\overline{R} \theta_1, \delta d)$ and $P_2 (\overline{R} \theta_2, \delta d),$ where
\begin{align}
    \theta_1 &= \begin{cases}
        \delta \theta, & \delta \theta \geq 0 \\
        \delta \theta + 2 \pi, & \delta \theta < 0
    \end{cases}, \quad
    \theta_2 = \begin{cases}
        \delta \theta - 2 \pi, & \delta \theta > 0 \\
        \delta \theta, & \delta \theta \leq 0
    \end{cases}.
\end{align}
\footnotetext{In principle, there are infinitely many images due to the periodicity of the angle $\delta \theta.$ However, we consider only two images for simplicity.}

\begin{figure}[htb!]
    \centering
    \includegraphics[width = 0.9\linewidth]{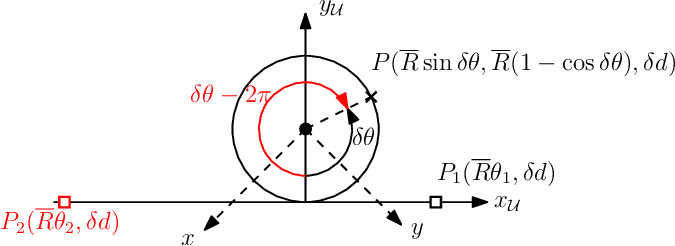}
    \caption{Unwrapping point lying on the cylinder}
    \label{fig: unwrapping_point_right_circular_cylinder}
\end{figure}


The two images corresponding to the final configuration, which is the exit location of the cylinder, obtained on the unwrapping plane, are shown in Fig.~\ref{fig: initial_final_configs_unwrapping_plane}. It can be observed that the heading angles for the entry and exit locations are $\phi_{ic}$ and $\phi_{oc}$ on the plane, respectively (compare with Fig.~\ref{fig: cylinder_unwrapping}).
\begin{figure}[htb!]
    \centering
    \includegraphics[width = 0.4\linewidth]{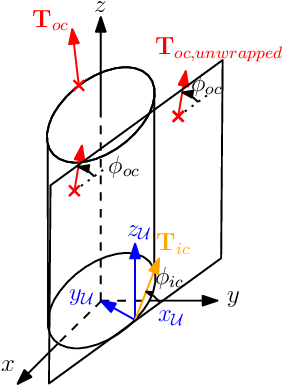}
    \caption{Initial configuration and two images of the final configuration obtained on the unwrapping plane}
    \label{fig: initial_final_configs_unwrapping_plane}
\end{figure}

We can now plan the optimal path to each image of the final configuration on the plane. To this end, we generate the six 2D Dubins candidate paths ($CSC$ and $CCC$) using the analytical expressions provided in \cite{dubins_classification} to each image, and pick the shortest path. Let this shortest path be
\begin{align} \label{eq: Pplane}
    \mathbf{X}_{plane} (s) = (u (s), v (s))^T,    
\end{align}
where $s$ is the arc length. Additionally, let the instantaneous heading angle on the plane be $\psi (s),$ which is the angle made with respect to $x_\mathcal{U}.$

\subsubsection{Wrapping path onto cylinder}
After the path is generated on the unwrapped plane, the corresponding path on the cylinder is retrieved by inverse mapping. To this end, consider a point given by $(\overline{R} \theta, d)$ on the unwrapping plane. The corresponding image of this point on the cylinder will be $(\overline{R} \sin{\theta}, \overline{R} (1 - \cos{\theta}), d)$ in $\mathcal{U}$ using the previously established procedure (refer to Fig.~\ref{fig: unwrapping_point_right_circular_cylinder}). 

Now, consider the curve on the plane given by \eqref{eq: Pplane}. Using the previous argument, the corresponding curve obtained on the cylinder in the unwrapping frame $\mathcal{U}$ is given by
\begin{align} \label{eq: Pcyc}
    \mathbf{X}_{cyl}^\mathcal{U} (s) = \left(\overline{R} \sin{\left(\frac{u (s)}{\overline{R}} \right)}, \overline{R} \left(1 - \cos{\left(\frac{u (s)}{\overline{R}} \right)} \right), v (s) \right)^T.
\end{align}
We now prove that the proposed mapping preserves the length of the curve.

\begin{lemma} \label{lemma: mapping_preservation}
    The proposed mapping between a planar curve and the wrapped curve on the cylindrical surface preserves the length of the curve.
\end{lemma}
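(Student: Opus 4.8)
The plan is to show that the two curves have the same speed at every value of the common parameter $s$, from which equality of their lengths follows immediately by integration. Recall that the length of a parameterized curve is $\int \left\| \tfrac{d}{ds}(\,\cdot\,) \right\|_2 \, ds$ over the relevant parameter interval, and that this value is unchanged whenever the integrand (the instantaneous speed) is unchanged. Since the planar curve $\mathbf{X}_{plane}(s) = (u(s), v(s))^T$ from \eqref{eq: Pplane} and its cylindrical image $\mathbf{X}_{cyl}^\mathcal{U}(s)$ from \eqref{eq: Pcyc} are expressed in terms of the same parameter $s$, it suffices to verify the pointwise identity $\left\| \tfrac{d\mathbf{X}_{cyl}^\mathcal{U}}{ds} \right\|_2 = \left\| \tfrac{d\mathbf{X}_{plane}}{ds} \right\|_2$.

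First I would differentiate \eqref{eq: Pcyc} componentwise via the chain rule. Writing $\theta(s) := u(s)/\overline{R}$ so that $\theta'(s) = u'(s)/\overline{R}$, the first two components yield $\overline{R}\cos\theta \cdot \theta' = u'\cos\theta$ and $\overline{R}\sin\theta \cdot \theta' = u'\sin\theta$, while the third differentiates simply to $v'$. The crucial cancellation is that the explicit factor of $\overline{R}$ multiplying the trigonometric terms is exactly undone by the factor $1/\overline{R}$ arising from the derivative of the scaled argument. This cancellation is the algebraic manifestation of the fact that a right circular cylinder is a developable surface, on which the unwrapping is a local isometry.

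It then remains to compute the squared speed $\left\| \tfrac{d\mathbf{X}_{cyl}^\mathcal{U}}{ds} \right\|_2^2 = (u')^2\cos^2\theta + (u')^2\sin^2\theta + (v')^2$ and apply the Pythagorean identity $\cos^2\theta + \sin^2\theta = 1$ to reduce it to $(u')^2 + (v')^2$, which is precisely the squared speed of the planar curve. Integrating the common speed over the parameter interval then gives equality of the two arc lengths. I do not anticipate a genuine obstacle here: the whole argument collapses to a one-line chain-rule computation together with a single trigonometric identity. The only point deserving care is careful bookkeeping of the $\overline{R}$ factors so that the cancellation is displayed cleanly, thereby confirming the length-preserving (isometric) nature of the wrapping map.
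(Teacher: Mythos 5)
Your proof is correct and is essentially the paper's argument in unwound form: the paper computes the first fundamental form coefficients $E_{cyl}=1$, $F_{cyl}=0$, $G_{cyl}=1$ of the parametrization \eqref{eq: Pcyc} and matches them to the plane's, which is exactly your chain-rule computation since $\left\|\tfrac{d\mathbf{X}_{cyl}^\mathcal{U}}{ds}\right\|_2^2 = E\,(u')^2 + 2F\,u'v' + G\,(v')^2$. Your direct speed comparison and the paper's coefficient comparison rest on the same cancellation of the $\overline{R}$ factors, so no substantive difference or gap exists.
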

\begin{proof}
    The proof is provided in Appendix~\ref{appsubsect: Lemma_mapping_proof}.
\end{proof}
Using \eqref{eq: coordinate_transformation_cylinder_body}, the equation of the considered curve in \eqref{eq: Pcyc} can be obtained in the body frame $\mathcal{B}$ as
\begin{align}
\begin{split}
    \mathbf{X}^\mathcal{B} (s) 
    &= \begin{pmatrix}
        \overline{R} \cos{\left(\theta_{ic} + \frac{u (s)}{\overline{R}} \right)} \\
        \overline{R} \sin{\left(\theta_{ic} + \frac{u (s)}{\overline{R}} \right)} \\
        v (s)
    \end{pmatrix}.
\end{split}
\end{align}

We compute the tangent vector along the path using the instantaneous heading angle $\psi (s)$ obtained from the 2D Dubins path on the $x_\mathcal{U} z_{\mathcal{U}}$ plane.
Hence, the angle made by $\mathbf{T}$ in the unwrapped plane with respect to $x_\mathcal{U}$, which is the heading angle, is known (refer to Fig.~\ref{fig: initial_final_configs_unwrapping_plane}). From Fig.~\ref{subfig: body_frame_cylinder}, the direction cosines of $\mathbf{T}$ expressed in the body frame can be obtained as
\begin{align*}
    \mathbf{T}^\mathcal{B} (s) = \begin{pmatrix}
        -\sin{\left(\theta_{ic} + \frac{u (s)}{\overline{R}} \right)} \cos{\left(\psi (s) \right)} \\
        \cos{\left(\theta_{ic} + \frac{u (s)}{\overline{R}} \right)} \cos{\left(\psi (s) \right)} \\
        \sin{\left(\psi (s) \right)}
    \end{pmatrix}.
\end{align*}
If the inner or outer sphere was chosen at the initial configuration, the expression for $\mathbf{U}$ in $\mathcal{B}$ can be obtained by noting that it is radially outwards or inwards to the cylinder, as (refer to Fig.~\ref{subfig: cylindrical_envelope})
\begin{align*}
    \mathbf{U}^\mathcal{B} = -\delta_{i, o}^{initial} \left(
        \cos{\left(\theta_{ic} + \frac{u (s)}{\overline{R}} \right)}, \sin{\left(\theta_{ic} + \frac{u (s)}{\overline{R}} \right)}, 0 \right)^T.
\end{align*}
Alternatively, if the left or right spheres were chosen, the same expression for $\mathbf{Y}^\mathcal{B}$ is obtained with $\delta_{i, o}^{initial}$ replaced with $\delta_{l, r}^{initial}.$ The expression for $\mathbf{Y}$ when $\delta_{i, o}^{initial} \neq 0$ and $\mathbf{U}$ when $\delta_{l, r}^{initial} \neq 0$ can be obtained as $\mathbf{Y}^\mathcal{B} = \mathbf{U}^\mathcal{B} \times \mathbf{T}^\mathcal{B}$ and $\mathbf{U}^\mathcal{B} = \mathbf{T}^\mathcal{B} \times \mathbf{Y}^\mathcal{B},$ respectively.

Finally, the expressions for $\mathbf{X},$ $\mathbf{T},$ $\mathbf{Y},$ and $\mathbf{U}$ can be obtained in the global frame $\mathcal{G}$ using \eqref{eq: Eulerian_viewpoint_position} and \eqref{eq: Eulerian_viewpoint_tangent} (refer to Fig.~\ref{fig: notation_discretization})\footnotemark.\, Hence, we have obtained the configuration of the vehicle along the shortest path on the cylinder for chosen $\theta_{ic},$ $\phi_{ic},$ $\theta_{oc},$ and $\phi_{oc}$ values; this path satisfies the pitch and yaw rate constraints of the vehicle.

\footnotetext{Similar to the transformation for the tangent vector in \eqref{eq: Eulerian_viewpoint_tangent} between the body and global frames, equations for the tangent-normal and surface-normal vectors can be obtained.}

\begin{remark}
    Though four parameters were introduced for path construction using a cylindrical envelope in the beginning of Section~\ref{sect: cylindrical_envelope}, the initial and final sphere computations depend only on two parameters each ($\theta_{ic}$ and $\phi_{ic},$ or $\theta_{oc}$ and $\phi_{oc}$). Furthermore, the motion planning on the cylinder depends on $\phi_{ic},$ $\phi_{oc},$ and the difference between $\theta_{ic}$ and $\theta_{oc}.$
\end{remark}

To compute the best feasible path for a selected pair of spheres at the initial and final configuration, we discretize $\theta_{ic}$ and $\theta_{oc}$ in $[0, 2 \pi).$ We discretize $\phi_{ic}$ and $\phi_{oc}$ over the interval $[0, \pi]$, representing the feasible interval of heading angles that allow the path to enter the cylinder at $\mathbf{X}_{ic}$ and exit at $\mathbf{X}_{oc}$ (refer to Fig.~\ref{fig: notation_discretization}). The number of discretizations of $\theta_{ic}$ and $\theta_{oc}$ are determined by a parameter $\theta_{disc},$ whereas $\phi_{disc}$ dictates the number of discretizations for $\phi_{ic}$ and $\phi_{oc}$.
We choose the combination that yields the shortest feasible path.


\section{Constructing Feasible Solution using Cross-Tangent Plane} \label{sect: cross_tangent_envelope}

In this section, we describe the second class of paths where the sub-path between the initial and final spheres is constructed on a cross-tangent plane.
There exist infinitely many cross-tangent planes between these two spheres; the locus of the point of intersection of these cross-tangent planes with the initial/final sphere will be a circle, as shown in Fig.~\ref{fig: cross-tangent_planes_parametrization}. 
To uniquely define a cross-tangent plane, we use angle $\theta$ as a parameter (see Fig.~\ref{fig: cross-tangent_planes_parametrization}).

\begin{figure}[htb!]
    \centering
    \includegraphics[width = 0.9\linewidth]{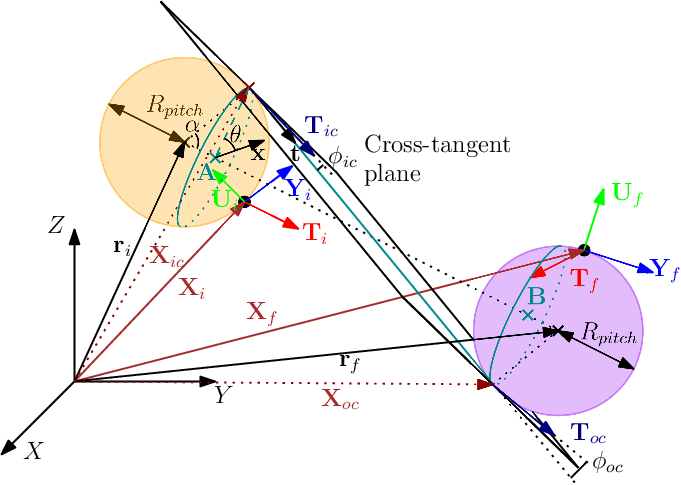}
    \caption{Parameterization of family of planes and configurations at entry and exit from cross-tangent plane}
    \label{fig: cross-tangent_planes_parametrization}
\end{figure}

\begin{remark}
    From Fig.~\ref{fig: cross-tangent_planes_parametrization}, we can observe that the considered cross-tangent plane exists when $\|\mathbf{r}_f - \mathbf{r}_i \|_2 \geq 2 \overline{R},$ i.e., when the spheres at the initial and final configurations do not intersect.
\end{remark}

We denote the center of the locus at the initial and final spheres by $\mathbf{A}$ and $\mathbf{B}$, respectively, as shown in Fig.~\ref{fig: cross-tangent_planes_parametrization}. The location of $\mathbf{A}$ and $\mathbf{B}$ can be obtained as
\begin{align}
    \mathbf{A} &= \mathbf{r}_i + \overline{R} \cos{\alpha} \left(\frac{\mathbf{r}_f - \mathbf{r}_i}{\|\mathbf{r}_f - \mathbf{r}_i \|_2} \right), \\
    \mathbf{B} &= \mathbf{r}_f - \overline{R} \cos{\alpha} \left(\frac{\mathbf{r}_f - \mathbf{r}_i}{\|\mathbf{r}_f - \mathbf{r}_i \|_2} \right),
\end{align}
where $\alpha := \cos^{-1} \left(\frac{2 \overline{R}}{\|\mathbf{r}_f - \mathbf{r}_i \|_2} \right)$. 
Here, the expressions for $\mathbf{r}_i$ and $\mathbf{r}_f$ 
are given in \eqref{eq: initial_sphere_center_expression} and \eqref{eq: final_sphere_center_expression}, respectively, and $\overline{R}$ is given in \eqref{eq: Rbar_definition}. The cross-tangent plane between the initial and final spheres is needed for inner-outer, outer-inner, left-right, and right-left pairings. It follows that $\delta^{initial}_{i, o} = -\delta^{final}_{i, o}$ and $\delta^{initial}_{l, r} = -\delta^{final}_{l, r}.$

To define the parameter $\theta$, we first designate a unit vector $\mathbf{x}$ perpendicular to $\mathbf{r}_f - \mathbf{r}_i$, as shown in Fig.~\ref{fig: cross-tangent_planes_parametrization}.\footnotemark\footnotetext{The generation of $\mathbf{x}$ is similar to the procedure described for the cylindrical envelope.} The angle $\theta$ specifies the point of tangency on the initial sphere, with respect to $\mathbf{x}$. In other words, $\theta$ describes the point of intersection of the cross-tangent plane with the circular locus of cross-tangent planes (shown in green) on the initial sphere; since the point of tangency is on the circular locus and $\mathbf{x}$ lies on the circular locus, $\theta,$ which is measured as the angle from $\mathbf{x},$ uniquely describes the point. We then define another unit vector  $\mathbf{y} := \left(\frac{\mathbf{r}_f - \mathbf{r}_i}{\|\mathbf{r}_f - \mathbf{r}_i \|_2} \right) \times \mathbf{x}$, which is orthogonal to both $\mathbf{x}$ and the axis ($\mathbf{k}$, defined in \eqref{eq: k_definition}) between the spheres. Using $\mathbf{x}$ and $\mathbf{y}$, we now express the entry and exit points $\mathbf{X}_{ic}$ and $\mathbf{X}_{oc}$, which are the points of tangency (see Fig.~\ref{fig: cross-tangent_planes_parametrization}).
\begin{align*}
    \mathbf{X}_{ic} (\theta) &= \mathbf{A} + \overline{R} \sin{\alpha} \cos{\theta} \mathbf{x} + \overline{R} \sin{\alpha} \sin{\theta} \mathbf{y}, \\
    \mathbf{X}_{oc} (\theta) &= \mathbf{B} + \overline{R} \sin{\alpha} \cos{(\theta + \pi)} \mathbf{x} + \overline{R} \sin{\alpha} \sin{(\theta + \pi)}  \mathbf{y}.
\end{align*}

Next, we parameterize the tangent vectors at the entry and exit points using angles $\phi_{ic}$ and $\phi_{oc}$, defined relative to the axis $\mathbf{t}$, where $\mathbf{t}$ is a unit vector pointing from $\mathbf{X}_{ic} (\theta)$ to $\mathbf{X}_{oc} (\theta)$ (see Fig.~\ref{fig: cross-tangent_planes_parametrization}). The tangent vectors $\mathbf{T}_{ic}$ and $\mathbf{T}_{oc}$ at the entry and exit points are derived as below:
\begin{align*}
    \mathbf{T}_{ic} (\phi_{ic}) &= \cos{\phi_{ic}} \mathbf{t} (\theta) + \sin{\phi_{ic}} \left(\frac{\mathbf{X}_{ic} (\theta) - \mathbf{r}_{i}}{\overline{R}} \times \mathbf{t} (\theta) \right), \\
    \mathbf{T}_{oc} (\phi_{oc}) &= \cos{\phi_{oc}} \mathbf{t} (\theta) + \sin{\phi_{oc}} \left(\frac{\mathbf{X}_{ic} (\theta) - \mathbf{r}_{i}}{\overline{R}} \times \mathbf{t} (\theta) \right).
\end{align*}
Given the location and tangent vectors at the exit point from the initial sphere and the entry point of the final sphere, we can construct the optimal path over each sphere using the approach in Section~\ref{subsect: initial_final_spheres_path_generation}. We can also construct the path on the cross-tangent plane using the 2D Dubins result \cite{Dubins, dubins_classification}, illustrated in Fig.~\ref{fig: configurations_cross_tangent_plane}. In this figure, the minimum turning radius $R_{plane}$ is dictated by the type of spheres considered at the initial and final configurations. If we are considering inner-outer or outer-inner connections, $R_{plane} = R_{yaw}$ since the vehicle moves in the $\mathbf{T}-\mathbf{Y}$ plane (as observed from Fig.~\ref{fig: cross-tangent_planes_parametrization}). Alternatively, $R_{plane} = R_{pitch}$ if left-right or right-left sphere connections are considered.

\begin{figure}[htb!]
    \centering
    \includegraphics[width=0.8\linewidth]{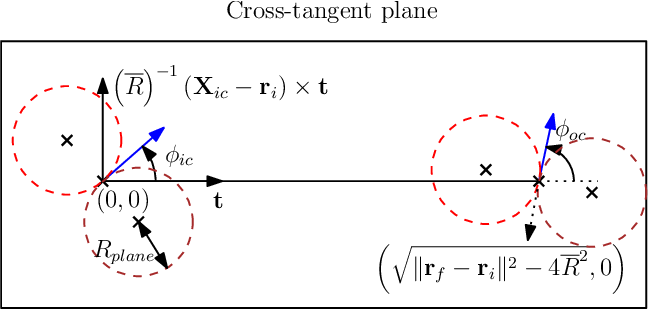}
    \caption{Configurations on cross-tangent plane}
    \label{fig: configurations_cross_tangent_plane}
\end{figure}

After the path on the plane is constructed, the vehicle's coordinates ($u$ and $v$) and the heading angle ($\psi$) are defined. We can reconstruct the configuration of the vehicle along the path in 3D. First, we compute the location and the tangent vector along the plane as
\begin{align*}
    \mathbf{X} (s) &= \mathbf{X}_{ic} + u (s) \mathbf{t} (\theta) + v (s) \left(\frac{\mathbf{X}_{ic} (\theta) - \mathbf{r}_{i}}{\overline{R}} \times \mathbf{t} (\theta) \right), \\
    \mathbf{T} (s) &= \cos{\left(\psi (s) \right)} \mathbf{t} (\theta) \\
    & \quad\, + \sin{\left(\psi (s) \right)} \left(\frac{\mathbf{X}_{ic} (\theta) - \mathbf{r}_{i}}{\overline{R}} \times \mathbf{t} (\theta) \right).
\end{align*}
The vectors $\mathbf{U}$ and $\mathbf{Y}$ are computed depending on $\delta_{i, o}^{initial} \neq 0$ or $\delta_{l, r}^{initial} \neq 0,$ as shown below:
\begin{align*}
    \mathbf{U} (s) = -\delta_{i, o}^{initial} \frac{\mathbf{X}_{ic} (\theta) - \mathbf{r}_{i}}{\overline{R}}, \quad \delta_{i, o}^{initial} \neq 0, \\
    \mathbf{Y} (s) = - \delta_{l, r}^{initial} \frac{\mathbf{X}_{ic} (\theta) - \mathbf{r}_{i}}{\overline{R}}, \quad \delta_{l, r}^{initial} \neq 0.
\end{align*}
Further, we can compute these vectors as $\mathbf{Y} = \mathbf{U} \times \mathbf{T}$ when $\delta_{i, o}^{initial} \neq 0,$ and $\mathbf{U} = \mathbf{T} \times \mathbf{Y}$ when $\delta_{l, r}^{initial} \neq 0$. Thus, the vehicle's configuration in 3D is completely described on the initial sphere, cross-tangent plane, and final sphere.

Note that the path construction for this class is a function of the three parameters: $\theta,$ $\phi_{ic},$ and $\phi_{oc}$. However, motion planning on each of the surfaces depends only on two parameters. We optimize on these parameters by discretizing $\theta \in [0, 2 \pi),$ and $\phi_{ic}$ and $\phi_{oc}$ in $\left[-\frac{\pi}{2}, \frac{\pi}{2} \right]$ (refer to Fig.~\ref{fig: cross-tangent_planes_parametrization}). The number of discretizations of $\theta$ is dictated by $\theta_{disc},$ whereas $\phi_{disc}$ represents the number of discretizations for $\phi_{ic}$ and $\phi_{oc}$.
We choose the parameter set that yields the shortest path length for each pairing of the initial and final spheres. 

\section{Constructing Feasible Solution using Intermediary Sphere} \label{sect: sphere_envelope}

In this section, we present the construction of the third class of paths.
When the initial and the final positions are sufficiently close, we construct a path that goes through an intermediary spherical surface. We consider the four possible combinations in this regard, as outlined in Section~\ref{sect: feasible_solution_construction}. This class of paths exists only when the Euclidean distance between the initial and final position satisfies $\|\mathbf{r}_{f} - \mathbf{r}_{i} \|_2 \leq 4 \overline{R}$, as illustrated in Fig.~\ref{subfig: int_sphere_envelope}.

We parameterize the center of the intermediary sphere using a parameter $\alpha$. The locus of the center of the intermediary sphere is a circle, as shown in Fig.~\ref{fig: computation_xic_intermediary_sphere}. The value of $\alpha$ is related to the radius of this circular locus, and can be derived to be
\begin{align*}
	\alpha = \cos^{-1} \left(\frac{\|\mathbf{r}_{f} - \mathbf{r}_{i} \|_2}{4 \overline{R}} \right),
\end{align*}
and the radius of the circular locus is $2 \overline{R} \sin{\alpha}.$

\begin{figure}[htb!]
    \centering
    \includegraphics[width=0.8\linewidth]{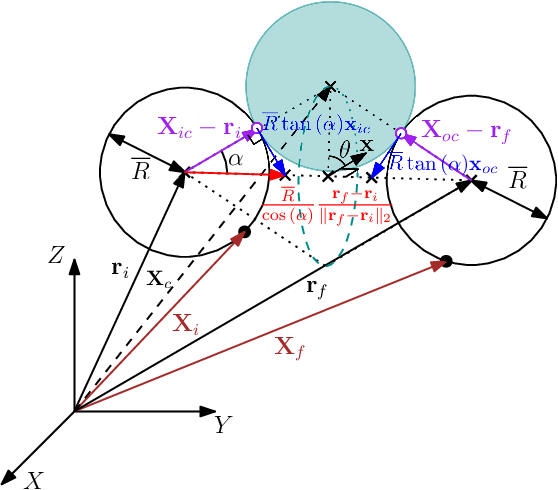}
    \caption{Parameterization of the locus of intermediary spheres and the configurations at entry and exit from the intermediary sphere. (Note that, instead of the vectors $\mathbf{x}_{ic}$ and $\mathbf{x}_{oc}$, we show the same vectors scaled by $\overline{R} \tan(\alpha)$.)}
    \label{fig: computation_xic_intermediary_sphere}
\end{figure}

To parameterize the center of the intermediary sphere, we generate a unit vector $\mathbf{x}$ perpendicular to $\mathbf{r}_f - \mathbf{r}_i$ (similar to the one in Section~\ref{sect: cylindrical_envelope}). We define $\theta \in [0, 2 \pi)$ as the angle made by the center of the intermediary sphere on the circular locus with respect to $\mathbf{x},$ similar to the definition for the cross-tangent plane case. This parameter specifies the location of the intermediary sphere. Hence, we can define the center of the intermediary sphere as (refer to Fig.~\ref{fig: computation_xic_intermediary_sphere})
\begin{align*}
	\mathbf{X}_{c} (\theta) &= \mathbf{r}_{i} + \frac{1}{2} (\mathbf{r}_{f} - \mathbf{r}_{i}) \\
	& \quad + 2 \overline{R} \sin{\alpha} \left(\cos{\theta} \mathbf{x} + \sin{\theta} \left(\frac{\mathbf{r}_f - \mathbf{r}_i}{\|\mathbf{r}_f - \mathbf{r}_i \|_2} \times \mathbf{x} \right) \right).
\end{align*}
Given $\mathbf{X}_{c}$, entry point ($\mathbf{X}_{ic}$) and exit point ($\mathbf{X}_{oc}$) for the intermediary sphere are derived as shown below,
\begin{align*}
    \mathbf{X}_{ic} (\theta) &= \frac{1}{2} \left(\mathbf{r}_{i} + \mathbf{X}_c (\theta) \right), \quad
    \mathbf{X}_{oc} (\theta) = \frac{1}{2} \left(\mathbf{r}_{f} +  \mathbf{X}_c (\theta) \right).
\end{align*}

Finally, to parameterize the tangent vectors at $\mathbf{X}_{ic}$ and $\mathbf{X}_{oc}$, we define the unit vectors $\mathbf{x}_{ic}$ and $\mathbf{x}_{fc}$. These vectors are perpendicular to $\mathbf{X}_{ic} - \mathbf{r}_{i}$ and $\mathbf{X}_{oc} - \mathbf{r}_{f},$ as illustrated in Fig.~\ref{fig: computation_xic_intermediary_sphere}, and are derived as follows:
\begin{align*}
	\mathbf{x}_{ic} &= \frac{\frac{\overline{R}}{\cos{\alpha}} \frac{\mathbf{r}_f - \mathbf{r}_i}{\|\mathbf{r}_f - \mathbf{r}_i \|_2} - \left(\mathbf{X}_{ic} - \mathbf{r}_{i} \right)}{\overline{R} \tan{\alpha}}, \\
	\mathbf{x}_{oc} &= \frac{-\frac{\overline{R}}{\cos{\alpha}} \frac{\mathbf{r}_f - \mathbf{r}_i}{\|\mathbf{r}_f - \mathbf{r}_i \|_2} - \left(\mathbf{X}_{oc} - \mathbf{r}_{f} \right)}{\overline{R} \tan{\alpha}}.
\end{align*}
We parameterize the tangent vectors at $\mathbf{X}_{ic}$ and $\mathbf{X}_{oc}$ denoted by $\mathbf{T}_{ic}$ and $\mathbf{T}_{oc}$ respectively, using the parameters $\phi_{ic}$ and $\phi_{oc},$ as shown below:
\begin{align*}
    \mathbf{T}_{ic} (\phi_{ic}) &= \cos{\left(\phi_{ic} \right)} \mathbf{x}_{ic} + \sin{\left(\phi_{ic} \right)} \frac{\left(\left(\mathbf{X}_{ic} - \mathbf{r}_{i} \right) \times \mathbf{x}_{ic} \right)}{\overline{R}}, \\
    \mathbf{T}_{oc} (\phi_{oc}) &= \cos{\left(\phi_{oc} \right)} \mathbf{x}_{oc} + \sin{\left(\phi_{oc} \right)} \frac{\left(\left(\mathbf{X}_{oc} - \mathbf{r}_{f} \right) \times \mathbf{x}_{oc} \right)}{\overline{R}}.
\end{align*}

We optimize the path length over the parameters $\theta \in [0, 2 \pi),$ $\phi_{ic} \in [0, 2 \pi),$ and $\phi_{oc} \in [0, 2 \pi)$. For a given set of parameters, we can compute $\mathbf{X}_{ic},$ $\mathbf{X}_{oc},$ $\mathbf{T}_{ic},$ and $\mathbf{T}_{oc}$. Similar to the methodology in Section~\ref{subsect: initial_final_spheres_path_generation}, we generate the optimal path on the initial sphere, intermediary sphere, and the final sphere. Note that motion planning on each of the surfaces depends only on two parameters. For instance, in the case of the intermediary sphere, the path length depends only on $\phi_{ic}$ and $\phi_{oc}$. Similar to the path construction using an intermediary plane, $\theta_{disc}$ dictates the number of discretizations of $\theta$ and $\phi_{disc}$ represents the number of discretizations for $\phi_{ic}$ and $\phi_{oc}$. Finally, among all the combinations of the discretized parameter values, we select the path with the minimal total length.

\section{Summary of Path Construction Algorithm}

In this section, we summarize the path construction comprising the three classes of paths, presented in Sections~\ref{sect: cylindrical_envelope}, \ref{sect: cross_tangent_envelope}, and \ref{sect: sphere_envelope} as a pseudocode in Algorithm~\ref{alg: algorithm}.

The initial configuration and final configuration, each of which is defined by the four vectors $\mathbf{X},$ $\mathbf{T},$ $\mathbf{Y},$ and $\mathbf{U},$ are the inputs to the algorithm. We compactly represent them by the homogeneous transformation matrices $\mathbf{H}_0$ and $\mathbf{H}_f,$ respectively.\footnotemark\footnotetext{The homogeneous transformation matrix $\mathbf{H}$ in terms of $\mathbf{X},$ $\mathbf{T},$ $\mathbf{Y},$ and $\mathbf{U}$ is given by $\mathbf{H} = \begin{bmatrix}
    \mathbf{T} & \mathbf{Y} & \mathbf{U} & \mathbf{X} \\
    0 & 0 & 0 & 1
\end{bmatrix}$.}\, The other inputs to the algorithm include the pitch rate (captured by $R_{pitch}$), yaw rate (captured by $R_{yaw}$), and the discretization parameters, ($\theta_{disc}$ and $\phi_{disc}$). Recall that $\theta_{disc}$ and $\phi_{disc}$ are the discretization parameters corresponding to the position and the heading, respectively (which were discussed in Sections~\ref{sect: cylindrical_envelope}, \ref{sect: cross_tangent_envelope}, and \ref{sect: sphere_envelope}).

In line 1 of the algorithm, the minimum cylinder path is computed, which constructs the shortest path through an intermediary cylindrical envelope (described in Section~\ref{sect: cylindrical_envelope}). The function returns the length of the shortest path and the configuration of the vehicle along the path. Similarly, paths through a cross-tangent plane (described in Section~\ref{sect: cross_tangent_envelope}) and through an intermediary sphere (described in Section~\ref{sect: sphere_envelope}), are constructed in lines $2$ and $3$, respectively. Finally, the shortest of all the three classes of paths is determined in line $4$, and the configuration of the vehicle along the shortest path is returned by the algorithm.

\renewcommand{\algorithmicrequire}{\textbf{Input:}}
\newcommand{\NoNumberComment}[1]{\item[] \textit{/* #1 */}}
\begin{algorithm}[htb!]
\caption{Path construction algorithm for 3D Dubins} \label{alg: algorithm}
\begin{algorithmic}[1]
\REQUIRE $\mathbf{H}_{0},$ $\mathbf{H}_{f},$ $R_{pitch},$ $R_{yaw},$ $\theta_{disc},$ $\phi_{disc}$ 
\NoNumberComment{Computing the shortest path through cylindrical envelope}
\STATE $l_{SCS}, \mathbf{H}_{SCS}$ $\gets$ \texttt{MinimumCylinderPath}($\mathbf{H}_{0},$ $\mathbf{H}_{f},$ $R_{pitch},$ $R_{yaw},$ $\theta_{disc},$ $\phi_{disc}$)
\NoNumberComment{Computing the shortest path through cross-tangent plane}
\STATE $l_{SPS}$, $\mathbf{H}_{SPS}$ $\gets$ \texttt{MinimumPlanePath}($\mathbf{H}_{0},$ $\mathbf{H}_{f},$ $R_{pitch},$ $R_{yaw},$ $\theta_{disc},$ $\phi_{disc}$)
\NoNumberComment{Computing the shortest path through intermediary spherical envelope}
\STATE $l_{SSS}$, $\mathbf{H}_{SSS}$ $\gets$ \texttt{MinimumSpherePath}($\mathbf{H}_{0},$ $\mathbf{H}_{f},$ $R_{pitch},$ $R_{yaw},$ $\theta_{disc},$ $\phi_{disc}$)
\NoNumberComment{Computing the shortest overall path}
\STATE $l^*,$ $\mathbf{H}^*\gets$ \texttt{MinimumLength} ($\mathbf{H}_{SCS}$,  $\mathbf{H}_{SPS}$,  $\mathbf{H}_{SSS})$
\RETURN $l^*$, $\mathbf{H}^*$
\end{algorithmic}
\end{algorithm}

\begin{remark}
    If an existence condition is violated for a candidate path (such as an inner sphere -- plane -- outer sphere connection), its path length is returned as \texttt{NaN}. The functions \texttt{MinimumPlanePath} and \texttt{MinimumSpherePath} select the shortest path among those with finite length for each path type. Notably, $SCS$ paths always exist. If no valid path exists for a particular connection type (e.g., $SPS$), the corresponding function (such as \texttt{MinimumPlanePath}) returns \texttt{NaN}. Since the \texttt{MinimumLength} function considers only paths with finite length, any non-existent paths are automatically ignored.
\end{remark}

\section{Results}

In this section, we present computational results to study the performance of Algorithm~\ref{alg: algorithm} and the effect of the vehicle's configuration and the motion constraints on the path. To this end, we consider the scenarios provided in \cite{minimal_3D_Dubins_path_bounded_curvature_pitch}, where five ``Long" and five ``Short" instances were considered depending on the distance between the initial and final configurations. The minimum turning radius was chosen to be $40$ m in \cite{minimal_3D_Dubins_path_bounded_curvature_pitch}, and correspondingly, we consider $R_{pitch} = 40$ m. In \cite{minimal_3D_Dubins_path_bounded_curvature_pitch}, the orientation is defined by only the pitch and heading angles. To describe the complete orientation, we additionally specify the roll angle at the initial and final positions to be one of the values from the set $\{-15^\circ, 0^\circ, 15^\circ\}$\footnotemark. To study the effect of the motion constraints, we run the experiments for different values of $R_{yaw} \in \{30 \,\text{m}, 40 \,\text{m}, 50 \,\text{m}\}$.  
\footnotetext{It should be noted that the orientation of the vehicle can be uniquely prescribed by the vectors $\mathbf{T},$ $\mathbf{Y},$ and $\mathbf{U},$ or by specifying the yaw (rotation about $z$), pitch (rotation about $y$), and roll (rotation about $x$) angles. Fixing a $ZYX$ rotation sequence, one can uniquely compute the expressions for $\mathbf{T},$ $\mathbf{Y},$ and $\mathbf{U}$ for given angles. It should be noted that positive pitch angle is considered to be about $-y$ axis since geometrically, when the vehicle has its nose pointing up, the pitch angle is desired to be considered to be positive.}

Furthermore, we consider two sets of scenarios referred to as  ``Additional 1" and ``Additional 2" described shortly, based on Figs.~\ref{subfig: yaw_rate_violation} and \ref{subfig: pitch_rate_violation}, respectively. In the first set of scenarios, the vehicle needs to perform a turn maneuver with a marginal altitude change. The initial location is $(120, 40, 20)$ with initial heading and pitch angles of $90^\circ$ and $-5^\circ,$ and the final location is $(300, 40, 15)$ with heading and pitch angles of $-90^\circ$ and $-5^\circ$. In ``Additional 2", the vehicle needs to perform an ascent maneuver from an initial location of $(120, 40, 20)$ with initial heading and pitch angles of $90^\circ$ and $-15^\circ$ to a final location of $(130, 120, 41)$ with heading and pitch angles of $85^\circ$ and $20^\circ$.\footnotemark
\footnotetext{All coordinates specified in this paragraph are in meters.}

In addition to considering the algorithm from~\cite{minimal_3D_Dubins_path_bounded_curvature_pitch}, we also use the path construction methodologies described in~\cite{analytic_solution_3D} and~\cite{mathworks_uavdubinsconnection}; the latter implementation utilizes the model proposed in~\cite{Dubins_airplane_fixed_wing_UAVs}. The implementation for the former algorithm is available on the GitHub page of the authors (link in~\cite{analytic_solution_3D}). For the algorithm from~\cite{analytic_solution_3D}, since only a single turning radius parameter is used, we set $R = R_{pitch} = 40$\,m.\footnotemark\footnotetext{In this implementation, the turning radius is assumed to be $1$, and the initial heading vector is aligned with the $z$-axis. Thus, for all of our instances, we scale down the problem and use coordinate transforms to align the initial location with the origin and the initial heading vector with the $+z$-axis. With this scaled instance, the code provided by the authors can be used to construct the $CSC$ paths. In our implementation, we supply the code for scaling down the problem, and for recovering the $CSC$ path for the original problem instance using the authors' parameters.}
For the implementation in~\cite{mathworks_uavdubinsconnection}, we select the maximum roll rate so that the turning radius matches $40$\,m (which is our $R_{pitch}$), and retain the default bounds for the maximum flight path angle of $\pm 0.5$ radians. We note that the model considered in~\cite{mathworks_uavdubinsconnection}, which implements~\cite{Dubins_airplane_fixed_wing_UAVs}, utilizes the initial and final heading angles, rather than the full heading vector.

For Algorithm~\ref{alg: algorithm}, to optimize the path length with respect to the parameters, we consider $15$ discretizations for all parameters that describe the positions and tangent vectors for entry and exit between the intermediary surfaces. We implemented the algorithms in Python~$3.8$ on a computer with AMD Ryzen $9$ $5900$HS CPU running at $3.30$ GHz with $16$ GB RAM. For all the classes of the paths constructed, we parallelized the functions that compute the sub-paths on each individual surface. The computational results of the algorithm are summarized in Table~\ref{tab: results}. \footnotemark
\footnotetext{The first instance of running each function takes a higher time than the reported times in Table~\ref{tab: results}, due to the overheads associated with spawning different processes to implement the functions in a parallel manner. To circumvent this issue, we ``warm start" our algorithm using a dummy instance.}

\begin{table*}[htb!]
\centering
\caption{Summary of best feasible path and computation time for different instances with varying initial roll angle, final roll angle, and $R_{yaw}.$ Path length obtained using algorithms from \cite{minimal_3D_Dubins_path_bounded_curvature_pitch}, \cite{analytic_solution_3D}, and \cite{mathworks_uavdubinsconnection} are shown under the instance name. In the table, $S$ denotes a sphere, $C$ denotes a cylinder, and $P$ denotes a plane. For spheres, subscripts $i,$ $o,$ $l,$ and $r$ represent the inner sphere, outer sphere, left sphere, and right sphere, respectively.}
\label{tab: results}
\begin{tabular}{|c|c|c|c|c|c|c@{\hspace{0.005cm}}|c|c|c|c|c|c|}
\cline{1-6} \cline{8-13}
\textbf{Inst.} & \textbf{\begin{tabular}[c]{@{}c@{}}Roll$^*$\\ ($^\circ$,$^\circ$)\end{tabular}} & \textbf{\begin{tabular}[c]{@{}c@{}}$R_{yaw}$\\ (m)\end{tabular}} & \textbf{\begin{tabular}[c]{@{}c@{}}Length\\ (m)\end{tabular}} & \textbf{\begin{tabular}[c]{@{}c@{}}Path\\ type\end{tabular}} & \textbf{\begin{tabular}[c]{@{}c@{}}Time\\ (s)\end{tabular}} & & \textbf{Inst.} & \textbf{\begin{tabular}[c]{@{}c@{}}Roll$^*$\\ ($^\circ$,$^\circ$)\end{tabular}} & \textbf{\begin{tabular}[c]{@{}c@{}}$R_{yaw}$\\ (m)\end{tabular}} & \textbf{\begin{tabular}[c]{@{}c@{}}Length\\ (m)\end{tabular}} & \textbf{\begin{tabular}[c]{@{}c@{}}Path\\ type\end{tabular}} & \textbf{\begin{tabular}[c]{@{}c@{}}Time\\ (s)\end{tabular}} \\ \cline{1-6} \cline{8-13}
\multirow{9}{*}{\begin{tabular}[c]{@{}c@{}}Long 1\\ 446.04 \cite{minimal_3D_Dubins_path_bounded_curvature_pitch}\\ 437.86 \cite{analytic_solution_3D}\\ 445.10 \cite{mathworks_uavdubinsconnection}\end{tabular}} & \multirow{3}{*}{$(-15,0)$} & 30 & 478.06 & $S_r P S_l$ & 9.91 & & \multirow{9}{*}{\begin{tabular}[c]{@{}c@{}}Short 2\\ 668.17 \cite{minimal_3D_Dubins_path_bounded_curvature_pitch}\\ 281.96 \cite{analytic_solution_3D}\\ 506.05 \cite{mathworks_uavdubinsconnection}\end{tabular}} & \multirow{3}{*}{$(-15,0)$} & 30 & 302.02 & \multirow{9}{*}{$S_o P S_i$} & 9.54 \\ \cline{3-6} \cline{10-11} \cline{13-13}
&  & 40 & 510.64 & \multirow{2}{*}{$S_o P S_i$} & 9.60 & &  &  & 40 & 323.47 &  & 9.34 \\ \cline{3-4} \cline{6-6} \cline{10-11} \cline{13-13}
&  & 50 & 533.24 &  & 9.77 & &  &  & 50 & 356.72 &  & 9.50 \\ \cline{2-6} \cline{9-11} \cline{13-13}
& \multirow{3}{*}{$(0,15)$} & 30 & 475.98 & $S_r P S_l$ & 9.87 & &  & \multirow{3}{*}{$(0,15)$} & 30 & 298.45 &  & 9.59 \\ \cline{3-6} \cline{10-11} \cline{13-13}
&  & 40 & 412.17 & $S_l C S_l$ & 10.17 & &  &  & 40 & 313.69 &  & 9.33 \\ \cline{3-6} \cline{10-11} \cline{13-13}
&  & 50 & 545.09 & $S_r P S_l$ & 10.46 & &  &  & 50 & 335.89 &  & 9.51 \\ \cline{2-6} \cline{9-11} \cline{13-13}
& \multirow{3}{*}{$(15,-15)$} & 30 & 447.36 & \multirow{3}{*}{$S_i C S_i$} & 9.68 & &  & \multirow{3}{*}{$(15,-15)$} & 30 & 313.70 &  & 9.44 \\ \cline{3-4} \cline{6-6} \cline{10-11} \cline{13-13}
&  & 40 & 470.08 &  & 9.53 & &  &  & 40 & 336.86 &  & 9.79 \\ \cline{3-4} \cline{6-6} \cline{10-11} \cline{13-13}
&  & 50 & 473.59 &  & 9.87 & &  &  & 50 & 349.95 &  & 10.14 \\ \cline{1-6} \cline{8-13}
\multirow{9}{*}{\begin{tabular}[c]{@{}c@{}}Long 2\\ 638.45 \cite{minimal_3D_Dubins_path_bounded_curvature_pitch}\\ 631.73 \cite{analytic_solution_3D}\\ 637.22 \cite{mathworks_uavdubinsconnection}\end{tabular}} & \multirow{3}{*}{$(-15,0)$} & 30 & 587.86 & \multirow{9}{*}{$S_r C S_r$} & 9.72 & & \multirow{9}{*}{\begin{tabular}[c]{@{}c@{}}Short 3\\ 976.79 \cite{minimal_3D_Dubins_path_bounded_curvature_pitch}\\ 342.52 \cite{analytic_solution_3D}\\ 521.46 \cite{mathworks_uavdubinsconnection}\end{tabular}} & \multirow{3}{*}{$(-15,0)$} & 30 & 364.92 & \multirow{7}{*}{$S_o P S_i$} & 9.83 \\ \cline{3-4} \cline{6-6} \cline{10-11} \cline{13-13}
&  & 40 & 606.64 &  & 9.66 & &  &  & 40 & 386.80 &  & 9.37 \\ \cline{3-4} \cline{6-6} \cline{10-11} \cline{13-13}
&  & 50 & 614.57 &  & 9.90 & &  &  & 50 & 400.55 &  & 9.62 \\ \cline{2-4} \cline{6-6} \cline{9-11} \cline{13-13}
& \multirow{3}{*}{$(0,15)$} & 30 & 590.12 &  & 12.03 & &  & \multirow{3}{*}{$(0,15)$} & 30 & 356.75 &  & 9.49 \\ \cline{3-4} \cline{6-6} \cline{10-11} \cline{13-13}
&  & 40 & 601.87 &  & 10.29 & &  &  & 40 & 368.52 &  & 9.37 \\ \cline{3-4} \cline{6-6} \cline{10-11} \cline{13-13}
&  & 50 & 620.67 &  & 10.87 & &  &  & 50 & 384.82 &  & 10.03 \\ \cline{2-4} \cline{6-6} \cline{9-11} \cline{13-13}
& \multirow{3}{*}{$(15,-15)$} & 30 & 583.47 &  & 10.18 & &  & \multirow{3}{*}{$(15,-15)$} & 30 & 349.90 &  & 10.84 \\ \cline{3-4} \cline{6-6} \cline{10-13}
&  & 40 & 603.90 &  & 10.29 & &  &  & 40 & 416.92 & \multirow{2}{*}{$S_r C S_r$} & 10.29 \\ \cline{3-4} \cline{6-6} \cline{10-11} \cline{13-13}
&  & 50 & 612.32 &  & 10.83 & &  &  & 50 & 408.28 &  & 9.79 \\ \cline{1-6} \cline{8-13}
\multirow{9}{*}{\begin{tabular}[c]{@{}c@{}}Long 3\\ 1068.34 \cite{minimal_3D_Dubins_path_bounded_curvature_pitch}\\ 1059.20 \cite{analytic_solution_3D}\\ 1054.19 \cite{mathworks_uavdubinsconnection}\end{tabular}} & \multirow{3}{*}{$(-15,0)$} & 30 & 1032.72 & \multirow{2}{*}{$S_i C S_i$} & 10.97 & & \multirow{9}{*}{\begin{tabular}[c]{@{}c@{}}Short 4\\ 1169.80 \cite{minimal_3D_Dubins_path_bounded_curvature_pitch}\\ 422.26 \cite{analytic_solution_3D}\\ 625.75 \cite{mathworks_uavdubinsconnection}\end{tabular}} & \multirow{3}{*}{$(-15,0)$} & 30 & 425.17 & \multirow{6}{*}{$S_l C S_l$} & 9.77 \\ \cline{3-4} \cline{6-6} \cline{10-11} \cline{13-13}
&  & 40 & 1141.58 &  & 10.90 & &  &  & 40 & 425.65 &  & 9.53 \\ \cline{3-6} \cline{10-11} \cline{13-13}
&  & 50 & 1161.82 & $S_l C S_l$ & 11.23 & &  &  & 50 & 421.25 &  & 9.75 \\ \cline{2-6} \cline{9-11} \cline{13-13}
& \multirow{3}{*}{$(0,15)$} & 30 & 1026.87 & \multirow{6}{*}{$S_i C S_i$} & 11.30 & &  & \multirow{3}{*}{$(0,15)$} & 30 & 420.89 &  & 9.94 \\ \cline{3-4} \cline{6-6} \cline{10-11} \cline{13-13}
&  & 40 & 1045.90 &  & 11.14 & &  &  & 40 & 422.72 &  & 9.47 \\ \cline{3-4} \cline{6-6} \cline{10-11} \cline{13-13}
&  & 50 & 1048.38 &  & 11.30 & &  &  & 50 & 447.19 &  & 9.68 \\ \cline{2-4} \cline{6-6} \cline{9-13}
& \multirow{3}{*}{$(15,-15)$} & 30 & 1037.06 &  & 11.06 & &  & \multirow{3}{*}{$(15,-15)$} & 30 & 445.42 & \multirow{2}{*}{$S_o P S_i$} & 9.62 \\ \cline{3-4} \cline{6-6} \cline{10-11} \cline{13-13}
&  & 40 & 1040.40 &  & 10.91 & &  &  & 40 & 493.81 &  & 9.55 \\ \cline{3-4} \cline{6-6} \cline{10-13}
&  & 50 & 1058.79 &  & 12.67 & &  &  & 50 & 512.31 & $S_l C S_l$ & 9.84 \\ \cline{1-6} \cline{8-13}
\multirow{9}{*}{\begin{tabular}[c]{@{}c@{}}Long 4\\ 1788.80 \cite{minimal_3D_Dubins_path_bounded_curvature_pitch}\\ 1784.85 \cite{analytic_solution_3D}\\ 1787.15 \cite{mathworks_uavdubinsconnection}\end{tabular}} & \multirow{3}{*}{$(-15,0)$} & 30 & 1744.87 & \multirow{9}{*}{$S_l C S_l$} & 11.52 & & \multirow{9}{*}{\begin{tabular}[c]{@{}c@{}}Short 5\\ 1362.91 \cite{minimal_3D_Dubins_path_bounded_curvature_pitch}\\ 437.46 \cite{analytic_solution_3D}\\ 730.04 \cite{mathworks_uavdubinsconnection}\end{tabular}} & \multirow{3}{*}{$(-15,0)$} & 30 & 444.24 & \multirow{5}{*}{$S_o P S_i$} & 9.62 \\ \cline{3-4} \cline{6-6} \cline{10-11} \cline{13-13}
&  & 40 & 1758.23 &  & 12.05 & &  &  & 40 & 463.42 &  & 9.53 \\ \cline{3-4} \cline{6-6} \cline{10-11} \cline{13-13}
&  & 50 & 1773.09 &  & 11.29 & &  &  & 50 & 477.80 &  & 9.66 \\ \cline{2-4} \cline{6-6} \cline{9-11} \cline{13-13}
& \multirow{3}{*}{$(0,15)$} & 30 & 1747.76 &  & 10.65 & &  & \multirow{3}{*}{$(0,15)$} & 30 & 440.33 &  & 9.60 \\ \cline{3-4} \cline{6-6} \cline{10-11} \cline{13-13}
&  & 40 & 1759.44 &  & 10.60 & &  &  & 40 & 446.09 &  & 9.33 \\ \cline{3-4} \cline{6-6} \cline{10-13}
&  & 50 & 1776.86 &  & 10.65 & &  &  & 50 & 520.75 & $S_l C S_l$ & 9.77 \\ \cline{2-4} \cline{6-6} \cline{9-13}
& \multirow{3}{*}{$(15,-15)$} & 30 & 1744.13 &  & 10.87 & &  & \multirow{3}{*}{$(15,-15)$} & 30 & 453.28 & \multirow{3}{*}{$S_r C S_r$} & 9.66 \\ \cline{3-4} \cline{6-6} \cline{10-11} \cline{13-13}
&  & 40 & 1763.51 &  & 10.86 & &  &  & 40 & 447.54 &  & 9.52 \\ \cline{3-4} \cline{6-6} \cline{10-11} \cline{13-13}
&  & 50 & 1768.64 &  & 10.61 & &  &  & 50 & 467.20 &  & 9.88 \\ \cline{1-6} \cline{8-13}
\multirow{9}{*}{\begin{tabular}[c]{@{}c@{}}Long 5\\ 2214.54 \cite{minimal_3D_Dubins_path_bounded_curvature_pitch}\\ 2213.70 \cite{analytic_solution_3D}\\ 2208.70 \cite{mathworks_uavdubinsconnection}\end{tabular}} & \multirow{3}{*}{$(-15,0)$} & 30 & 2187.58 & \multirow{9}{*}{$S_r C S_r$} & 10.91 & & \multirow{9}{*}{\begin{tabular}[c]{@{}c@{}}Add. 1\\ 225.89 \cite{minimal_3D_Dubins_path_bounded_curvature_pitch}\\ 225.67 \cite{analytic_solution_3D}\\ 225.72 \cite{mathworks_uavdubinsconnection}\end{tabular}} & \multirow{3}{*}{$(-15,0)$} & 30 & 212.63 & \multirow{9}{*}{$S_r C S_r$} & 9.43 \\ \cline{3-4} \cline{6-6} \cline{10-11} \cline{13-13}
&  & 40 & 2201.98 &  & 10.79 & &  &  & 40 & 223.18 &  & 9.83 \\ \cline{3-4} \cline{6-6} \cline{10-11} \cline{13-13}
&  & 50 & 2211.11 &  & 11.00 & &  &  & 50 & 233.88 &  & 10.26 \\ \cline{2-4} \cline{6-6} \cline{9-11} \cline{13-13}
& \multirow{3}{*}{$(0,15)$} & 30 & 2189.22 &  & 11.17 & &  & \multirow{3}{*}{$(0,15)$} & 30 & 213.34 &  & 9.58 \\ \cline{3-4} \cline{6-6} \cline{10-11} \cline{13-13}
&  & 40 & 2201.75 &  & 10.79 & &  &  & 40 & 223.83 &  & 10.00 \\ \cline{3-4} \cline{6-6} \cline{10-11} \cline{13-13}
&  & 50 & 2212.44 &  & 11.14 & &  &  & 50 & 234.02 &  & 10.11 \\ \cline{2-4} \cline{6-6} \cline{9-11} \cline{13-13}
& \multirow{3}{*}{$(15,-15)$} & 30 & 2192.96 &  & 10.86 & &  & \multirow{3}{*}{$(15,-15)$} & 30 & 211.93 &  & 9.48 \\ \cline{3-4} \cline{6-6} \cline{10-11} \cline{13-13}
&  & 40 & 2209.89 &  & 10.59 & &  &  & 40 & 222.00 &  & 9.85 \\ \cline{3-4} \cline{6-6} \cline{10-11} \cline{13-13}
&  & 50 & 2225.56 &  & 11.04 & &  &  & 50 & 232.53 &  & 10.10 \\ \cline{1-6} \cline{8-13}
\multirow{9}{*}{\begin{tabular}[c]{@{}c@{}}Short 1\\ 580.79 \cite{minimal_3D_Dubins_path_bounded_curvature_pitch}\\ 299.34 \cite{analytic_solution_3D}\\ 312.87 \cite{mathworks_uavdubinsconnection}\end{tabular}} & \multirow{3}{*}{$(-15,0)$} & 30 & 289.67 & $S_r C S_r$ & 9.75 & & \multirow{9}{*}{\begin{tabular}[c]{@{}c@{}}Add. 2\\ 84.55 \cite{minimal_3D_Dubins_path_bounded_curvature_pitch}\\ 84.54 \cite{analytic_solution_3D}\\ 83.33 \cite{mathworks_uavdubinsconnection}\end{tabular}} & \multirow{3}{*}{$(-15,0)$} & 30 & 107.59 & \multirow{2}{*}{$S_l S_r S_l$} & 11.86 \\ \cline{3-6} \cline{10-11} \cline{13-13}
&  & 40 & 376.88 & \multirow{2}{*}{$S_o C S_o$} & 9.47 & &  &  & 40 & 91.54 &  & 11.63 \\ \cline{3-4} \cline{6-6} \cline{10-13}
&  & 50 & 394.71 &  & 9.72 & &  &  & 50 & 274.51 & $S_i C S_i$ & 11.95 \\ \cline{2-6} \cline{9-13}
& \multirow{3}{*}{$(0,15)$} & 30 & 355.38 & $S_o P S_i$ & 9.58 & &  & \multirow{3}{*}{$(0,15)$} & 30 & 87.17 & $S_i S_o S_i$ & 12.00 \\ \cline{3-6} \cline{10-13}
&  & 40 & 375.31 & \multirow{2}{*}{$S_r C S_r$} & 9.54 & &  &  & 40 & 250.06 & $S_r S_l S_r$ & 11.52 \\ \cline{3-4} \cline{6-6} \cline{10-13}
&  & 50 & 389.47 &  & 9.54 & &  &  & 50 & 280.35 & $S_i C S_i$ & 12.07 \\ \cline{2-6} \cline{9-13}
& \multirow{3}{*}{$(15,-15)$} & 30 & 352.59 & $S_o C S_o$ & 9.58 & &  & \multirow{3}{*}{$(15,-15)$} & 30 & 95.39 & $S_r S_l S_r$ & 11.82 \\ \cline{3-6} \cline{10-13}
&  & 40 & 380.83 & $S_r C S_r$ & 9.44 & &  &  & 40 & 259.63 & $S_i C S_i$ & 11.61 \\ \cline{3-6} \cline{10-13}
&  & 50 & 360.12 & $S_r S_l S_r$ & 10.18 & &  &  & 50 & 280.38 & $S_i P S_o$ & 12.13 \\ \cline{1-6} \cline{8-13}
\multicolumn{13}{l}{$^*$ -- Initial and final roll angles are specified as an ordered pair.}
\end{tabular}
\end{table*}

In Table~\ref{tab: results}, 
we show the length of the path obtained using the three benchmarking algorithms under the instance name. For the implementation in \cite{minimal_3D_Dubins_path_bounded_curvature_pitch}, we consider the same parameter values used by the authors, which are a minimum turning radius of $40$ m and a bounded pitch angle in $[-15^\circ, 20^\circ]$.

From this table, we can observe that
\begin{itemize}
    \item The path lengths obtained from our model are comparable to those from \cite{minimal_3D_Dubins_path_bounded_curvature_pitch}, \cite{analytic_solution_3D}, and \cite{mathworks_uavdubinsconnection} for all ``Long" maneuvers. In particular, our path length is shorter for a majority of ``Long 2", ``Long 3", ``Long 4", and ``Long 5" instances.
    The generated paths satisfy both pitch and yaw rate constraints while connecting the initial and final configurations, which include the vehicle's roll angle.
    \item For ``Short" maneuvers, the length of the paths generated by the proposed algorithm is two to three times shorter than the path obtained from \cite{minimal_3D_Dubins_path_bounded_curvature_pitch}, while remaining comparable to the paths from \cite{analytic_solution_3D} and \cite{mathworks_uavdubinsconnection}. On a few of the maneuvers, \cite{mathworks_uavdubinsconnection} yields larger path lengths due to constraints on the flight path angle. 
    \item The computation time is around $10$ seconds for most of the instances.
\end{itemize}
These results show the impact of the model and the control inputs on the resulting trajectory. We further expand upon these results to showcase the effect of the motion constraints and the configurations in the following subsections, along with additional examples.

\subsection{Effect of motion constraints}

From Table~\ref{tab: results}, we can observe that increasing $R_{yaw}$ changes the path length and also the best feasible path type. For instance, consider the ``Short 4" instance with initial and final roll angles of $15^\circ$ and $-15^\circ,$ respectively. The path generated for $R_{yaw} = 30$ m, $40$ m, and $50$ m is illustrated in Fig.~\ref{fig: varying_paths_Ryaw}. We can observe that increasing $R_{yaw}$ from $40$ m to $50$ m changes the path obtained from our algorithm from a cross-tangent connection to a path through the left spheres at the initial and final configurations connected by a cylinder. Additionally, while increasing $R_{yaw}$ from $30$ m to $40$ m retains the same path type as the best path, the points of departure and arrival at the initial and final spheres have changed significantly. This is because $R_{yaw}$ particularly affects the turning capability of the vehicle on the cross-tangent plane, as can be observed from Figs.~\ref{subfig: Short 4 ini_roll 15 fin_roll -15 Ryaw 30} and \ref{subfig: Short 4 ini_roll 15 fin_roll -15 Ryaw 40}.

For an instance of the ``Short 4" case where the path length is around $400$ m, the change in path length is around $50$ to $70$ m when $R_{yaw}$ is varied.
The effect of $R_{yaw}$ is more pronounced for ``Additional 2", where the vehicle needs to perform an ascent motion. For this instance, the path length may vary from  $100$ m to as high as $280$ m depending on $R_{yaw}.$ An illustration of the paths for increasing $R_{yaw}$ from $30$ m to $50$ m is shown in Fig.~\ref{fig: varying_paths_Ryaw_additional_2}.

These effects may not be captured by the existing models, including \cite{minimal_3D_Dubins_path_bounded_curvature_pitch}, \cite{analytic_solution_3D}, and \cite{mathworks_uavdubinsconnection}, due to the single control input considered in these models. Illustrations of the paths obtained using the results from \cite{minimal_3D_Dubins_path_bounded_curvature_pitch}, \cite{analytic_solution_3D}, and \cite{mathworks_uavdubinsconnection} for the scenarios ``Short 4" and ``Additional 2" are shown in Figs.~\ref{fig: Short4_Vana} and \ref{fig: Additional_2_comparison}, respectively. 
 We note that the path obtained from~\cite{mathworks_uavdubinsconnection} utilizes a helical segment followed by a left turn, straight line segment, and another left turn to construct the path for ``Short~4'' in Fig.~\ref{fig: Short4_Vana}. Additionally, since the full heading vector is not considered in \cite{mathworks_uavdubinsconnection}, it yields a starkly different path, which enters the pitch sphere and hence violates the pitch rate constraint, unlike the other two algorithms for ``Additional 2".

\begin{figure*}[htb!]
    \centering
    \subfloat[$R_{yaw} = 30$ m (path through intermediary cross-tangent plane)]{\includegraphics[width = 0.32\linewidth]{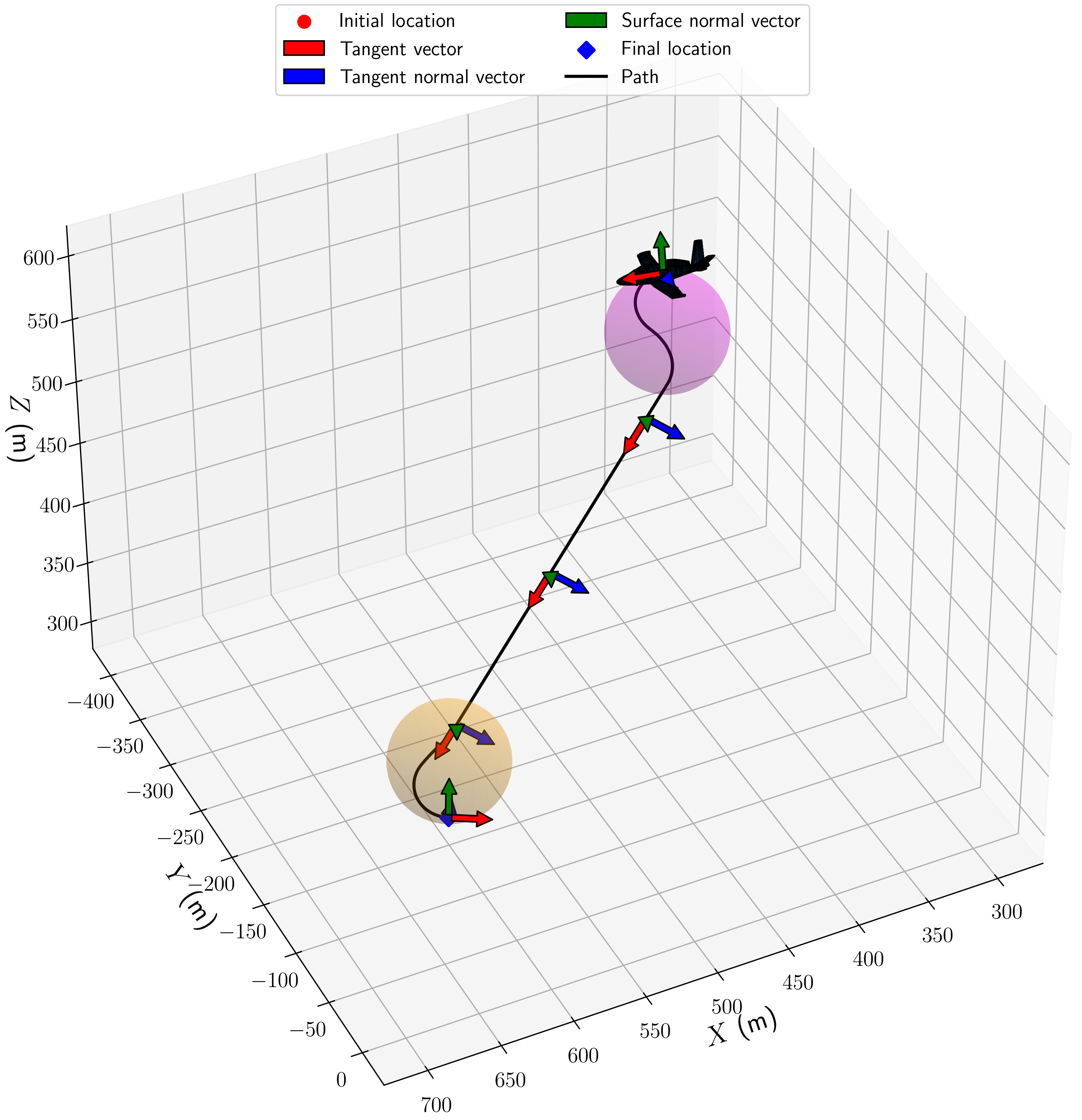}
    \label{subfig: Short 4 ini_roll 15 fin_roll -15 Ryaw 30}} \hfill
    \subfloat[$R_{yaw} = 40$ m (path through intermediary cross-tangent plane)]{\includegraphics[width = 0.32\linewidth]{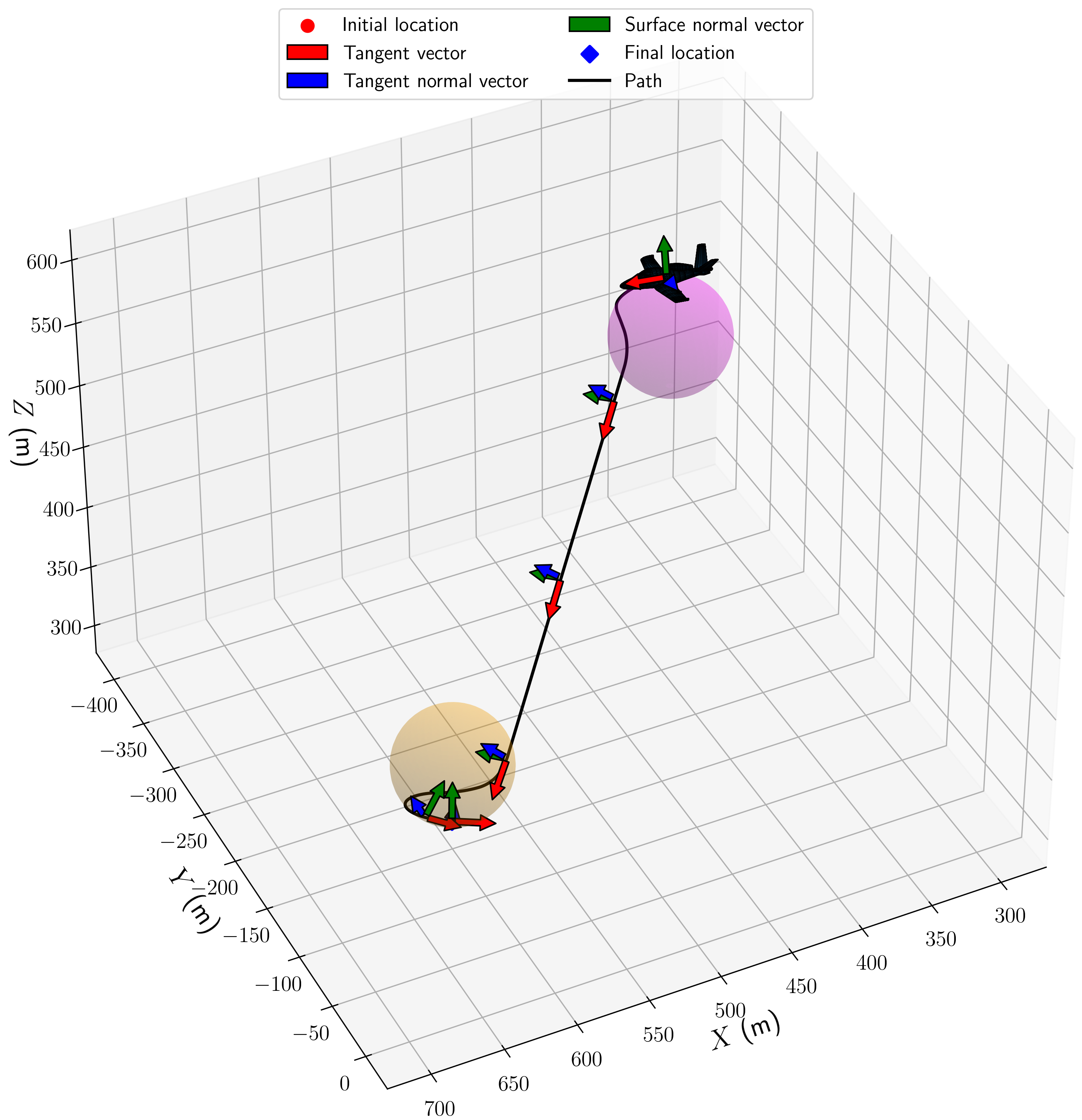}
    \label{subfig: Short 4 ini_roll 15 fin_roll -15 Ryaw 40}} \hfill
    \subfloat[$R_{yaw} = 50$ m (path through intermediary cylindrical envelope)]{\includegraphics[width = 0.32\linewidth]{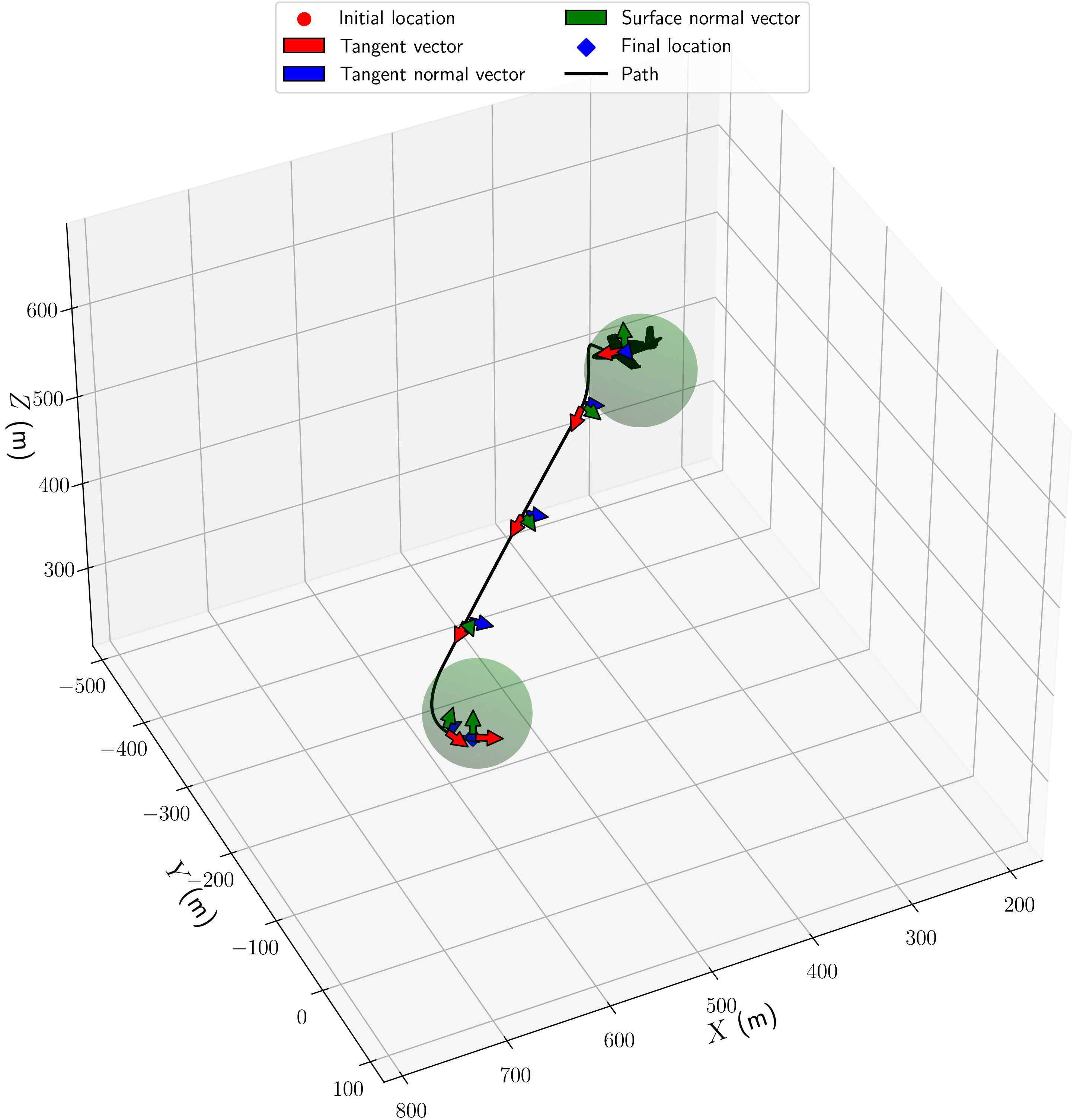}
    \label{subfig: Short 4 ini_roll 15 fin_roll -15 Ryaw 50}}
    \caption{Depiction of varying paths with $R_{yaw}$ for ``Short 4" with initial roll angle of $15^\circ$ and final roll angle of $-15^\circ$ and instantaneous configuration of the vehicle. Animations of a vehicle moving along these paths are available on our GitHub page.}
    \label{fig: varying_paths_Ryaw}
\end{figure*}

\begin{figure*}[htb!]
    \centering
    \subfloat[$R_{yaw} = 30$ m (path through intermediary sphere envelope)]{\includegraphics[width = 0.33\linewidth]{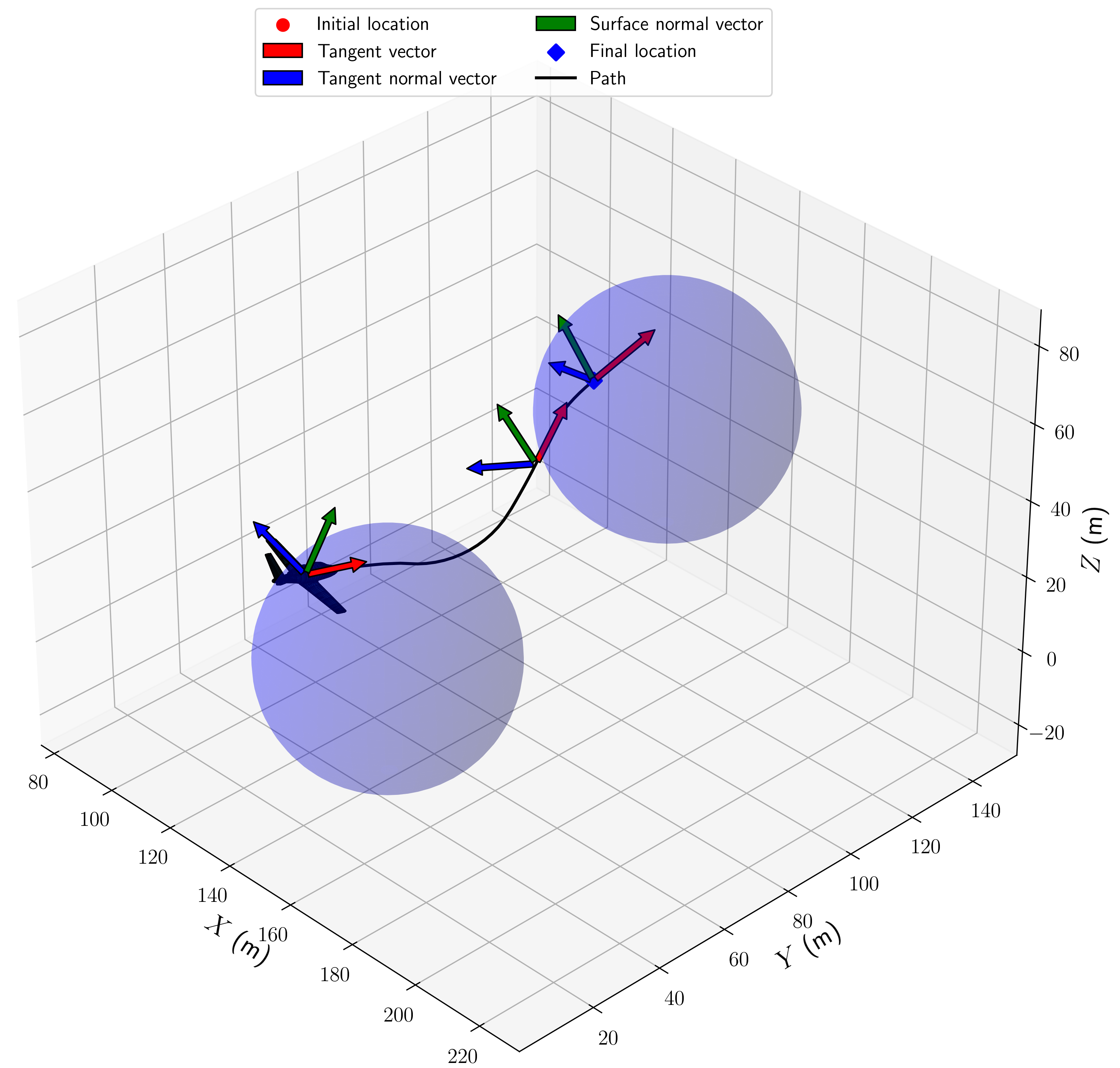}
    \label{subfig: Additional 2 ini_roll 15 fin_roll -15 Ryaw 30}} \hfill
    \subfloat[$R_{yaw} = 40$ m (path through intermediary cylindrical envelope)]{\includegraphics[width = 0.3\linewidth]{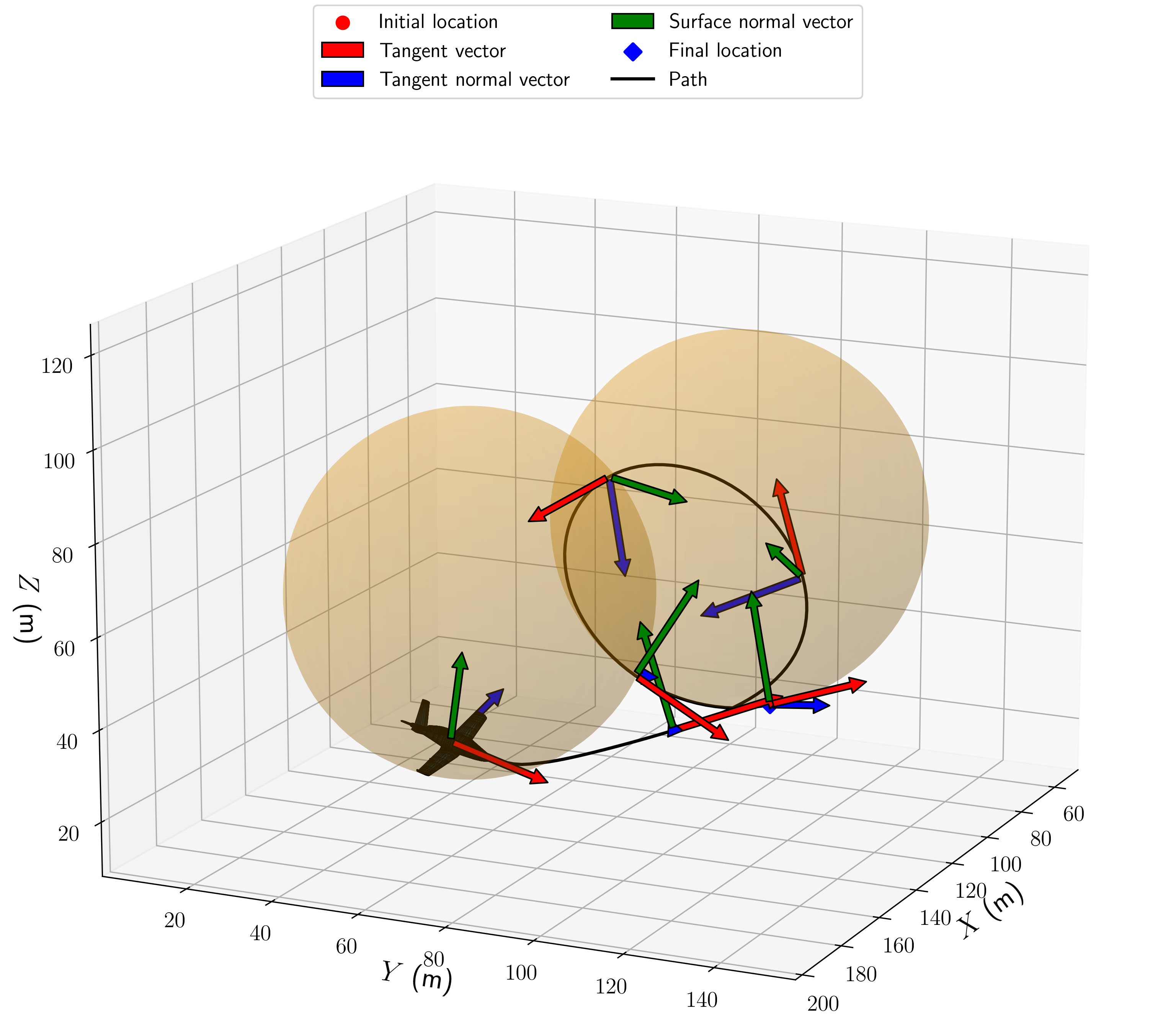}
    \label{subfig: Additional 2 ini_roll 15 fin_roll -15 Ryaw 40}} \hfill
    \subfloat[$R_{yaw} = 50$ m (path through intermediary cross-tangent plane)]{\includegraphics[width = 0.3\linewidth]{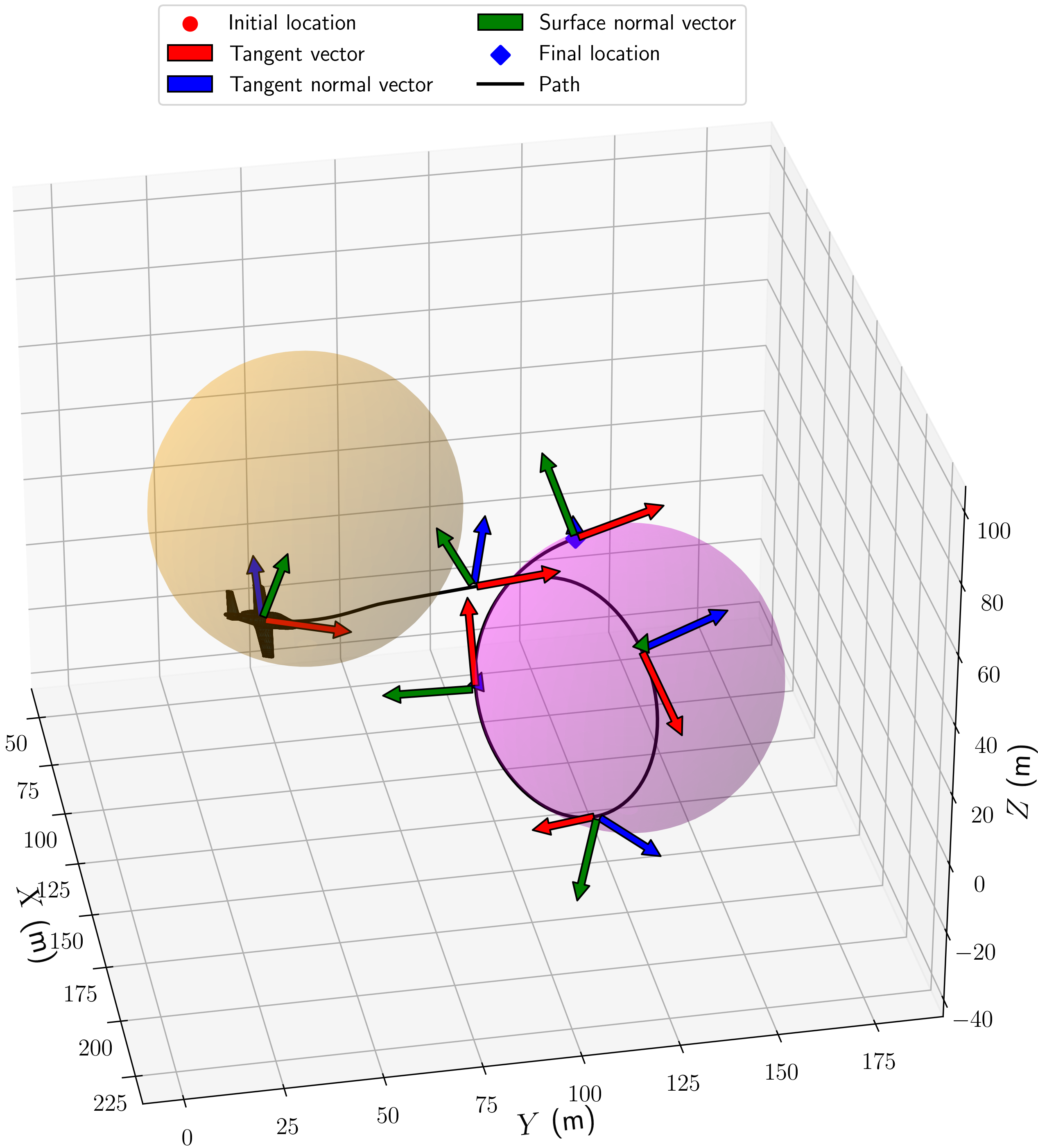}
    \label{subfig: Additional 2 ini_roll 15 fin_roll -15 Ryaw 50}}
    \caption{Depiction of varying paths with $R_{yaw}$ for ``Additional 2" with initial roll angle of $15^\circ$ and final roll angle of $-15^\circ$ and instantaneous configuration of the vehicle. Animations of a vehicle moving along these paths are available on our GitHub page.}
    \label{fig: varying_paths_Ryaw_additional_2}
\end{figure*}

\begin{figure}[htb!]
    \centering
    \includegraphics[width = \linewidth]{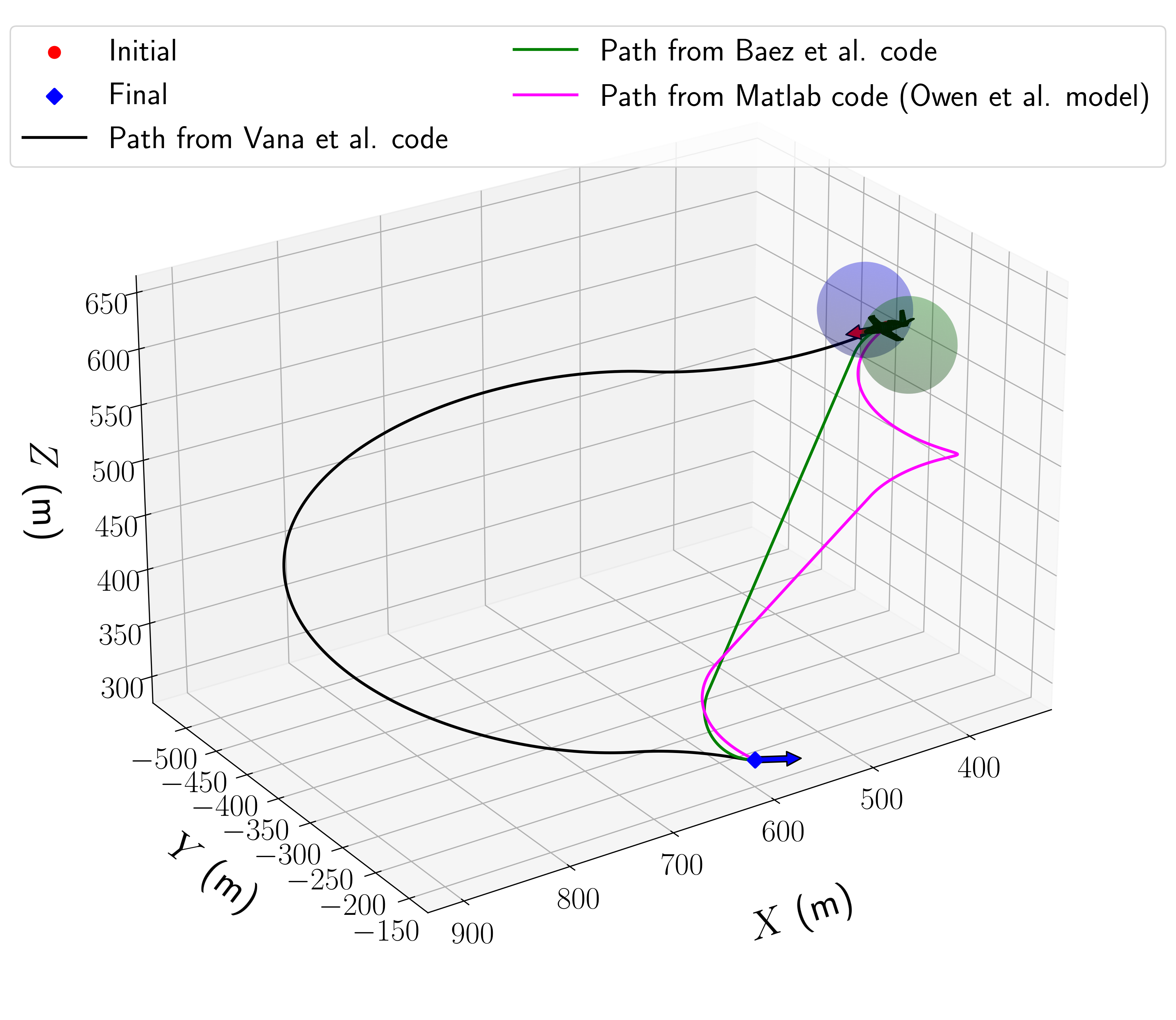}
    \caption{Solutions from \cite{minimal_3D_Dubins_path_bounded_curvature_pitch}, \cite{analytic_solution_3D}, and \cite{mathworks_uavdubinsconnection} for ``Short 4"}
    \label{fig: Short4_Vana}
\end{figure}

\begin{figure}[htb!]
    \centering
    \includegraphics[width = 0.8\linewidth]{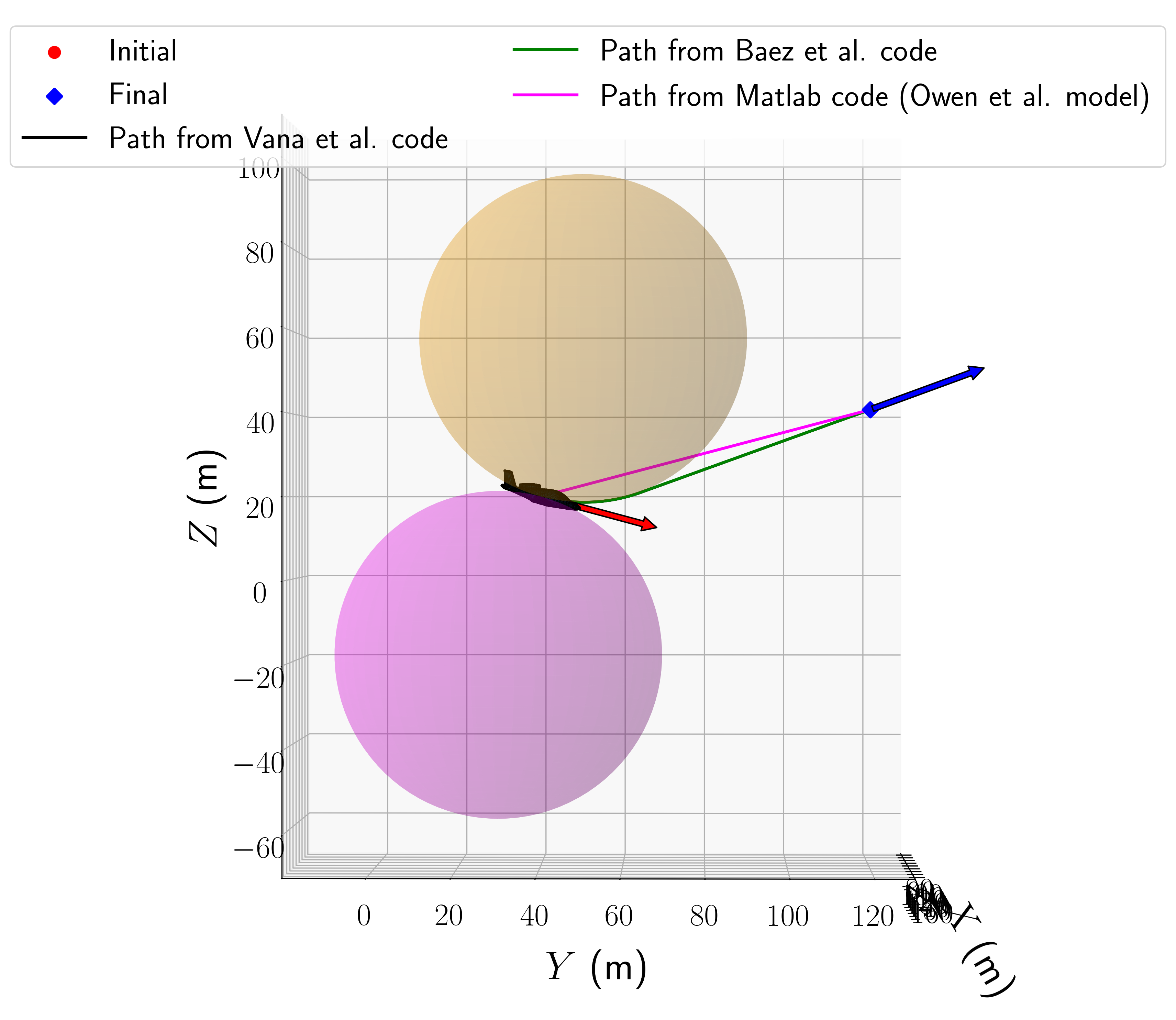}
    \caption{Solutions from \cite{minimal_3D_Dubins_path_bounded_curvature_pitch}, \cite{analytic_solution_3D}, and \cite{mathworks_uavdubinsconnection} for ``Additional 2"}
    \label{fig: Additional_2_comparison}
\end{figure}

\subsection{Effect of considering complete configuration}

From Table~\ref{tab: results}, we can observe that the roll angle at the initial and final locations impacts the path length and the path type as well. For instance, consider ``Long 1" with $R_{yaw} = 40$ m. We illustrate the change in the path type with changing roll angles across the three subfigures in Fig.~\ref{fig: varying_paths_config}. We observe that for initial and final roll angles of $-15^\circ$ and $0^\circ,$ the vehicle travels on the outer sphere, followed by traveling on the cross-tangent plane and an inner sphere. However, changing the initial and final roll angles changes the minimum cost path from our algorithm to travel along a cylindrical envelope connecting the left spheres for initial and final roll angles of $0^\circ$ and $15^\circ.$ For initial and final roll angles of $15^\circ$ and $-15^\circ,$ the best feasible path is a cylindrical envelope connecting inner spheres.

In contrast, the paths obtained from \cite{minimal_3D_Dubins_path_bounded_curvature_pitch}, \cite{analytic_solution_3D}, and \cite{mathworks_uavdubinsconnection} are shown in Fig.~\ref{fig: Long_1_Vana}. All three algorithms will produce the same path for any values of $R_{yaw}$ and for any of the three combinations of initial and final roll angles we considered in this subsection. This is due to the fact that the generated path does not account for the complete orientation of the vehicle at the initial and final locations; this highlights the novelty of our approach considering the full orientation of the vehicle.

We also note here that all the paths satisfy the pitch and yaw rate constraints close to the initial configuration when $R_{yaw} = 40$ m. However, the paths from \cite{minimal_3D_Dubins_path_bounded_curvature_pitch} and \cite{mathworks_uavdubinsconnection} violate the yaw rate constraints at the initial configuration for $R_{yaw} = 50$ m, while the path from \cite{analytic_solution_3D} violates for $R_{yaw} = 53$ m, since they enter the left sphere (similar to Fig.~\ref{subfig: yaw_rate_violation}). These results reaffirm the importance of the model and control inputs presented in the current paper.

\begin{figure*}[htb!]
    \centering
    \subfloat[Initial roll $= -15^\circ,$ final roll $= 0^\circ$ (path through cross-tangent plane)]{\includegraphics[width = 0.32\linewidth]{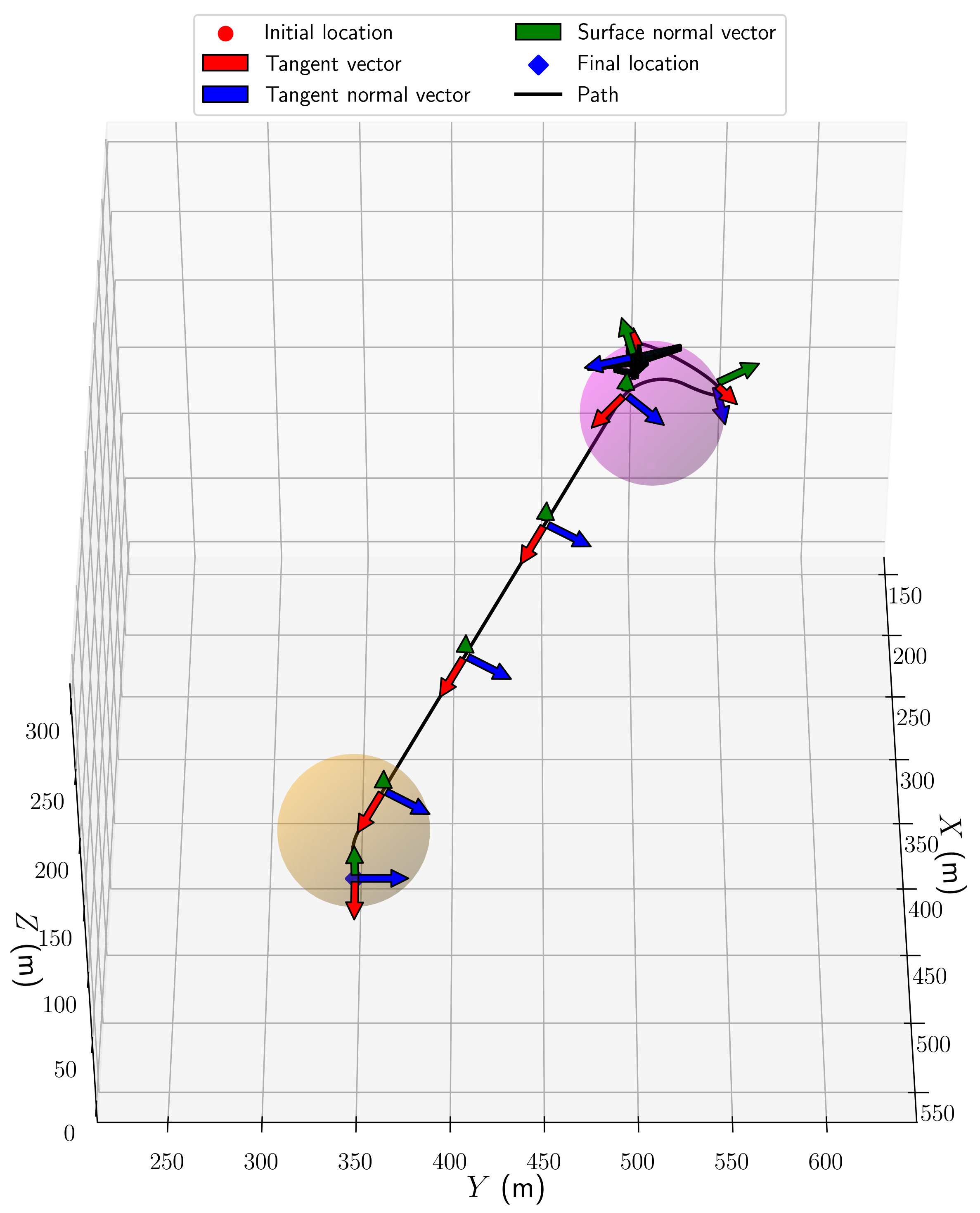}
    \label{subfig: Long 1 ini_roll -15 fin_roll 0 Ryaw 40}} \hfill
    \subfloat[Initial roll $= 0^\circ,$ final roll $= 15^\circ$ (path through cylindrical envelope)]{\includegraphics[width = 0.32\linewidth]{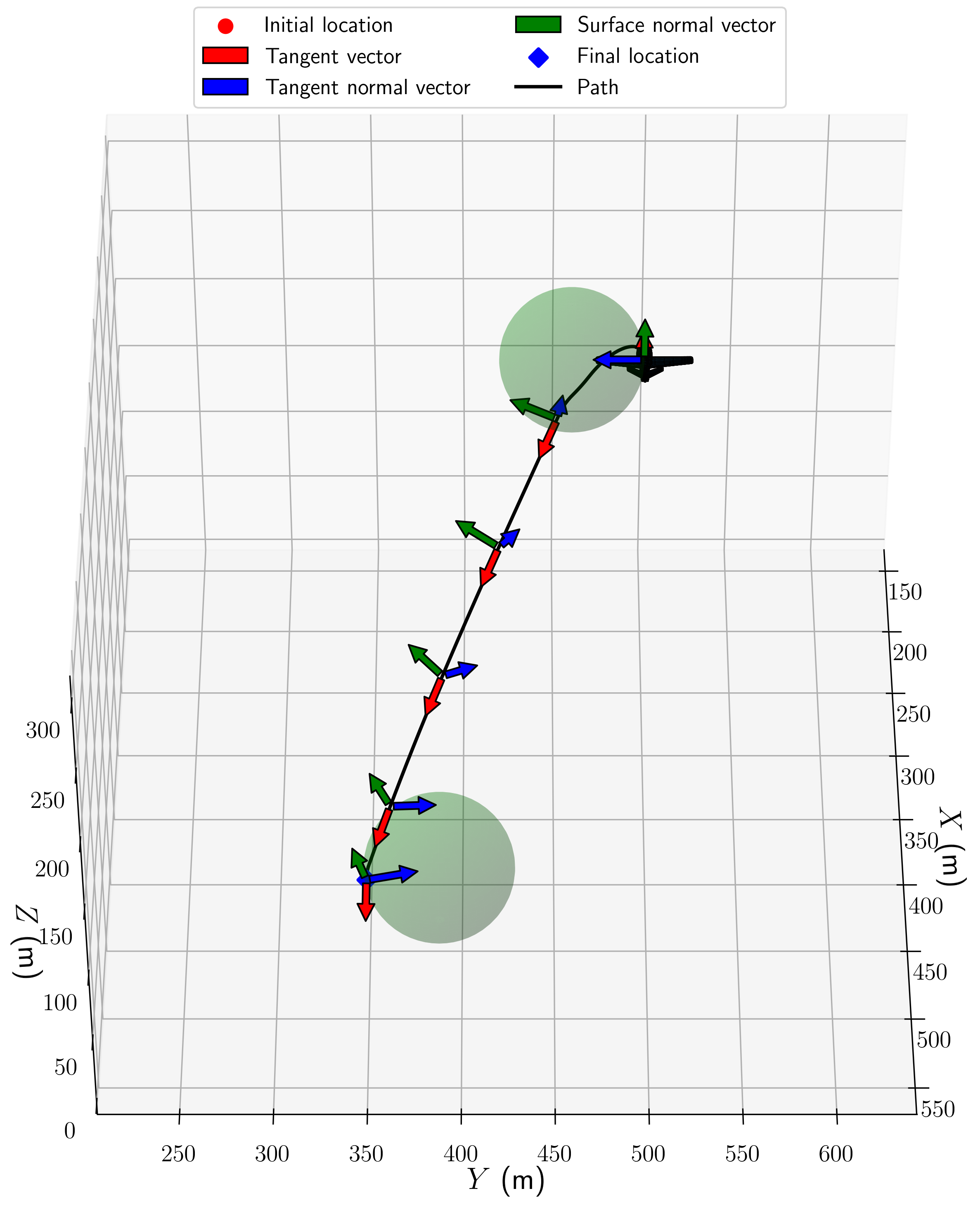}
    \label{subfig: Long 1 ini_roll 0 fin_roll 15 Ryaw 40}} \hfill
    \subfloat[Initial roll $= 15^\circ,$ final roll $-15^\circ$ (path through cylindrical envelope)]{\includegraphics[width = 0.32\linewidth]{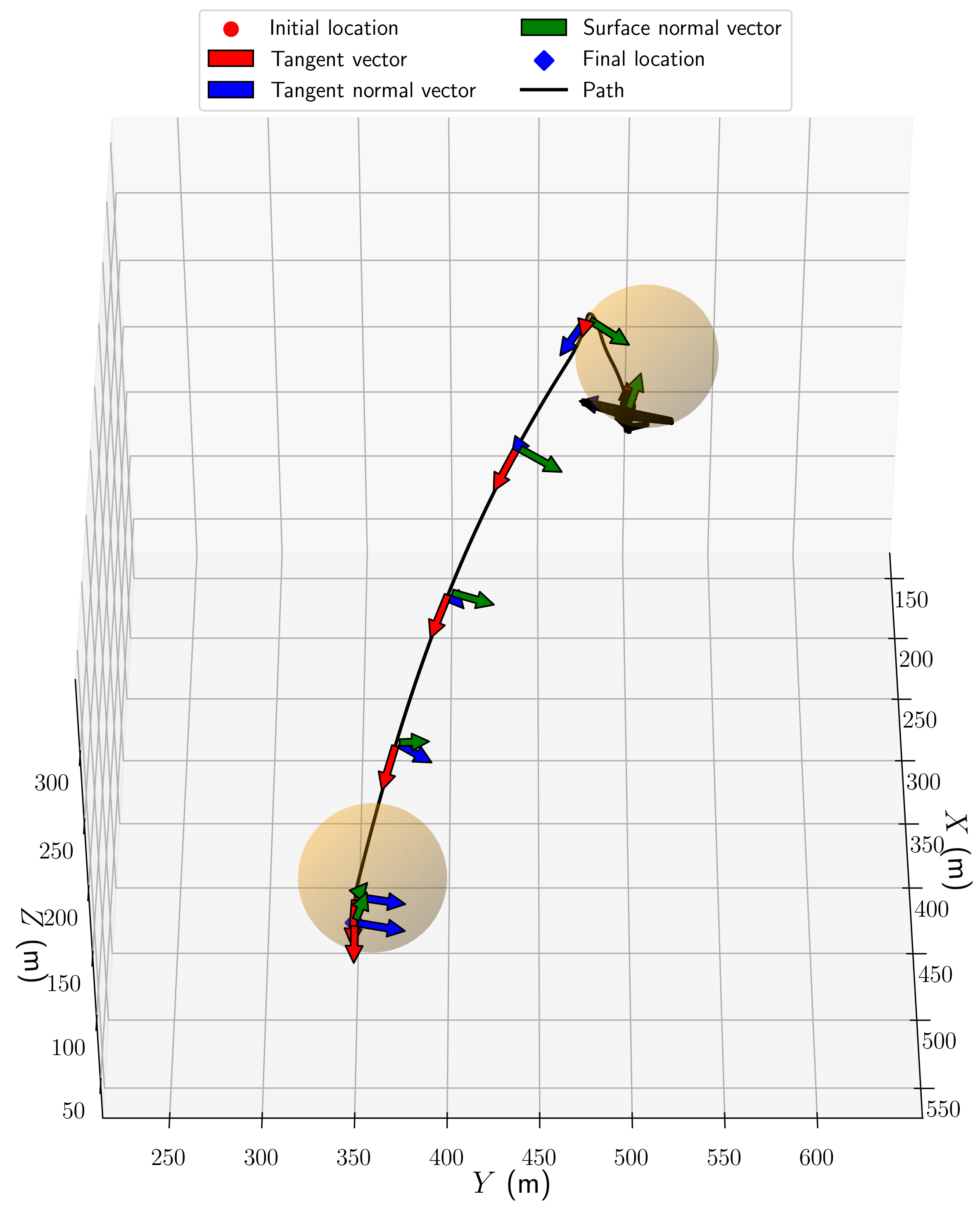}
    \label{subfig: Long 1 ini_roll 15 fin_roll -15 Ryaw 40}}
    \caption{Depiction of varying paths with initial and final roll angles for ``Long 1" with $R_{yaw} = 40$ m and instantaneous configuration of the vehicle. Animations of a vehicle moving along these paths are available on our GitHub page.}
    \label{fig: varying_paths_config}
\end{figure*}

\begin{figure}[htb!]
    \centering
    \includegraphics[width=0.8\linewidth]{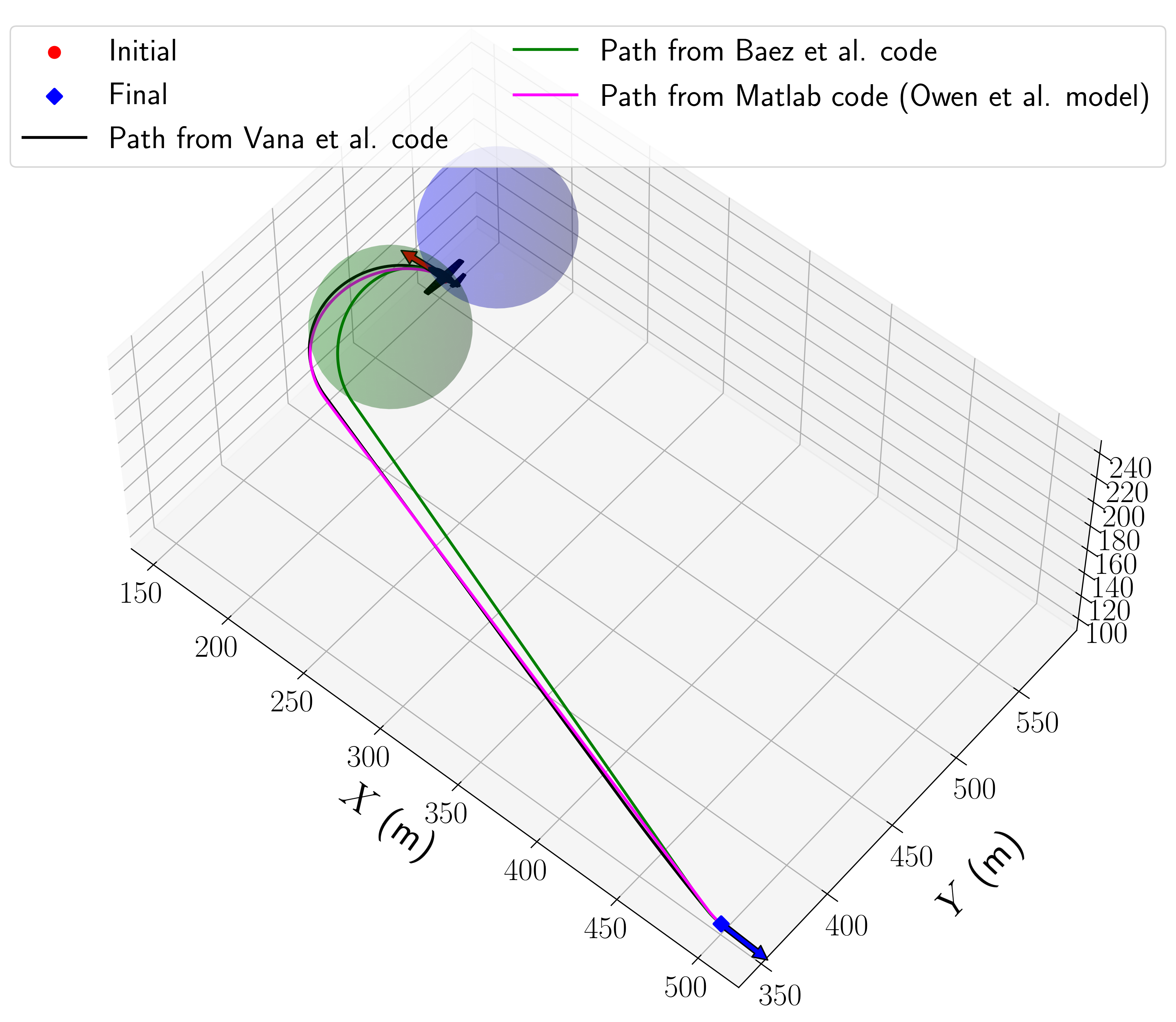}
    \caption{Feasible path from \cite{minimal_3D_Dubins_path_bounded_curvature_pitch}, \cite{analytic_solution_3D}, and \cite{mathworks_uavdubinsconnection} for ``Long 1"}
    \label{fig: Long_1_Vana}
\end{figure}

\subsection{Additional examples} \label{subsect: additional_examples_inside_pitch_yaw_spheres}

In addition to the previous instances, we consider two additional instances. First, we consider an instance where the final location is inside the ``inner" sphere of the vehicle, i.e., one of the pitch spheres. In this instance, the vehicle needs to depart from the origin with a heading, pitch, and yaw angle of $30^\circ,$ $10^\circ,$ and $15^\circ,$ respectively. The desired final location is $(5, 10, 15)$ with a desired heading, pitch, and yaw angle of $190^\circ,$ $10^\circ,$ and $-15^\circ,$ respectively. Furthermore, we chose $R_{pitch}$ and $R_{yaw}$ to be $40$ m and $50$ m, respectively. The minimum cost path obtained using Algorithm~\ref{alg: algorithm} and from the three benchmarking algorithms are shown in Figs.~\ref{subfig: Maneuver_inside_top_sphere} and \ref{subfig: Vana_inside_top_sphere}, respectively. From our algorithm, the vehicle leverages the pitch and yaw rate bounds to make a sharper turn, leading to a path with a length of $253.36$ m, which traverses through an intermediary sphere. On the other hand, due to the pitch angle bounds in \cite{minimal_3D_Dubins_path_bounded_curvature_pitch}, the vehicle takes a longer path of length $290.57$ m. The algorithm \cite{mathworks_uavdubinsconnection} provides a comparable path length of $289.23$ m, whereas \cite{analytic_solution_3D} provides a much longer path with a length of $419.03$ m.

A similar result was obtained in the second instance, where the final location was chosen inside the right sphere (one of the yaw spheres). The initial configuration and vehicle parameters were chosen to be the same as the first instance, the final location is chosen to be $(0, -30, 5)$, and the desired final heading, pitch, and roll angles are $190^\circ,$ $10^\circ,$ and $-15^\circ,$ respectively. The path length from our algorithm was $257.27$ m, whereas the path length with the algorithms from \cite{minimal_3D_Dubins_path_bounded_curvature_pitch}, \cite{analytic_solution_3D}, and \cite{mathworks_uavdubinsconnection} were $293.03$ m, $391.13$ m, and $290.63$ m, respectively.

\begin{figure}[htb!]
    \centering
    \subfloat[Best path for final location inside pitch sphere from Algorithm~\ref{alg: algorithm} (best feasible path is left sphere -- right sphere -- left sphere). Animation of a vehicle moving along this path is available on our GitHub page.]{\includegraphics[width = 0.6\linewidth]{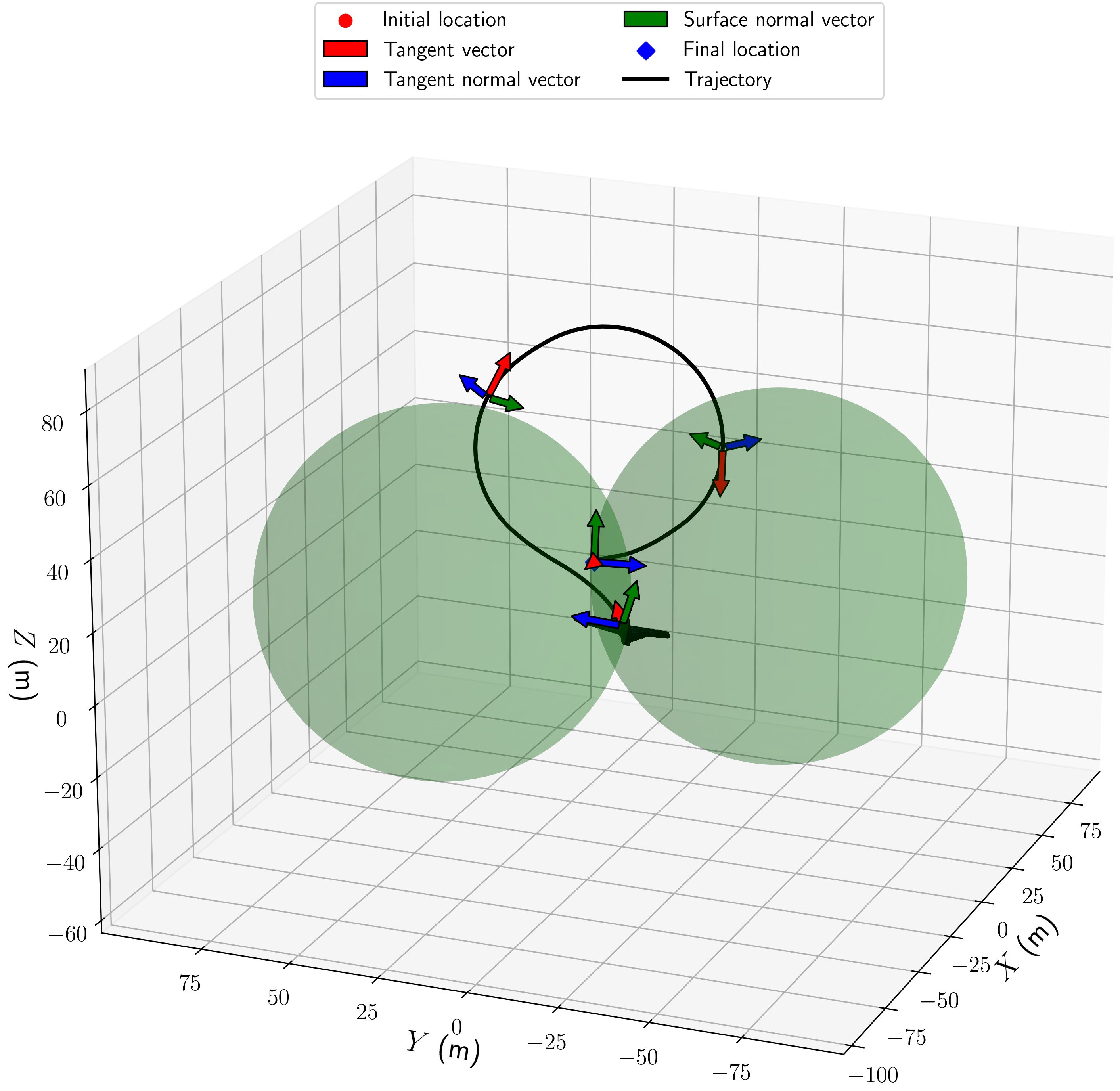}
    \label{subfig: Maneuver_inside_top_sphere}} \hfill
    \subfloat[Best path for same configuration using algorithms from \cite{minimal_3D_Dubins_path_bounded_curvature_pitch}, \cite{analytic_solution_3D}, and \cite{mathworks_uavdubinsconnection}]{\includegraphics[width = 0.7\linewidth]{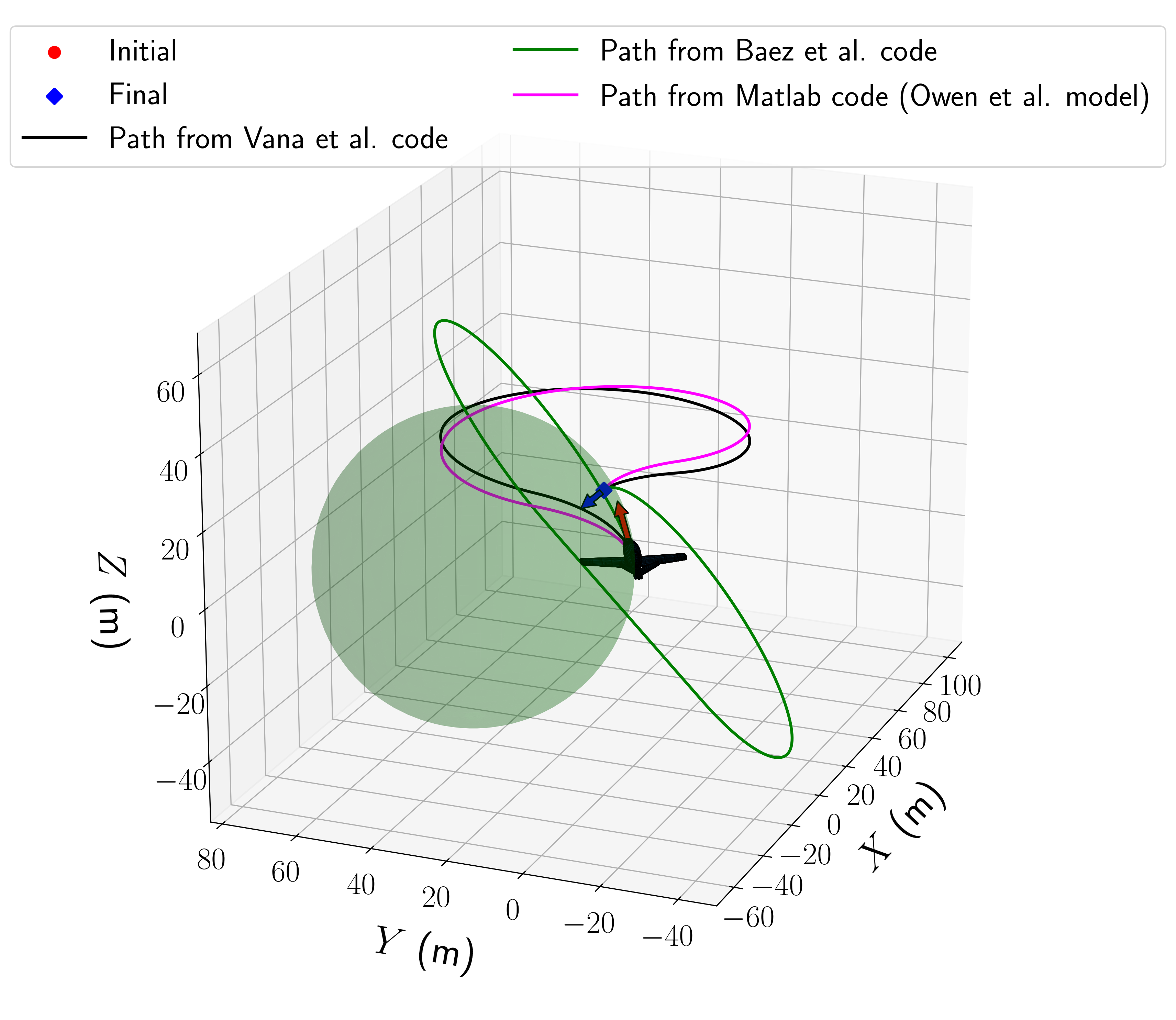}
    \label{subfig: Vana_inside_top_sphere}}
    \caption{Depiction of path for final location lying inside pitch sphere (specifications given in Section~\ref{subsect: additional_examples_inside_pitch_yaw_spheres})}
    \label{fig: maneuver_pitch_sphere}
\end{figure}

\begin{figure}[htb!]
    \centering
    \subfloat[Best path for final location inside yaw sphere from Algorithm~\ref{alg: algorithm} (best feasible path is right sphere -- left sphere -- right sphere). Animation of a vehicle moving along this path is available on our GitHub page.]{\includegraphics[width = 0.6\linewidth]{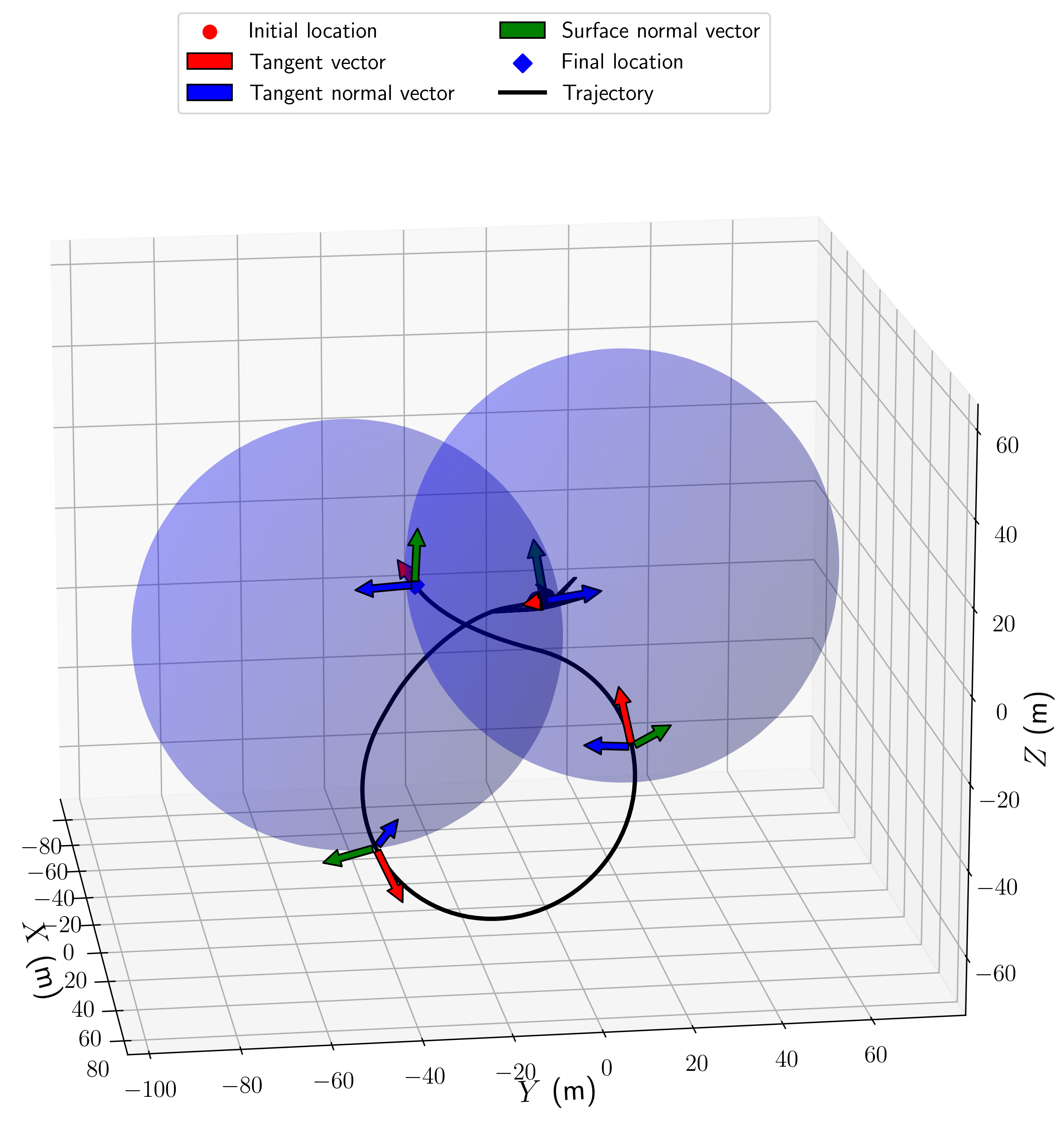}
    \label{subfig: Maneuver_inside_right_sphere}} \hfill
    \subfloat[Best path for same configuration using algorithms from \cite{minimal_3D_Dubins_path_bounded_curvature_pitch}, \cite{analytic_solution_3D}, and \cite{mathworks_uavdubinsconnection}]{\includegraphics[width = 0.7\linewidth]{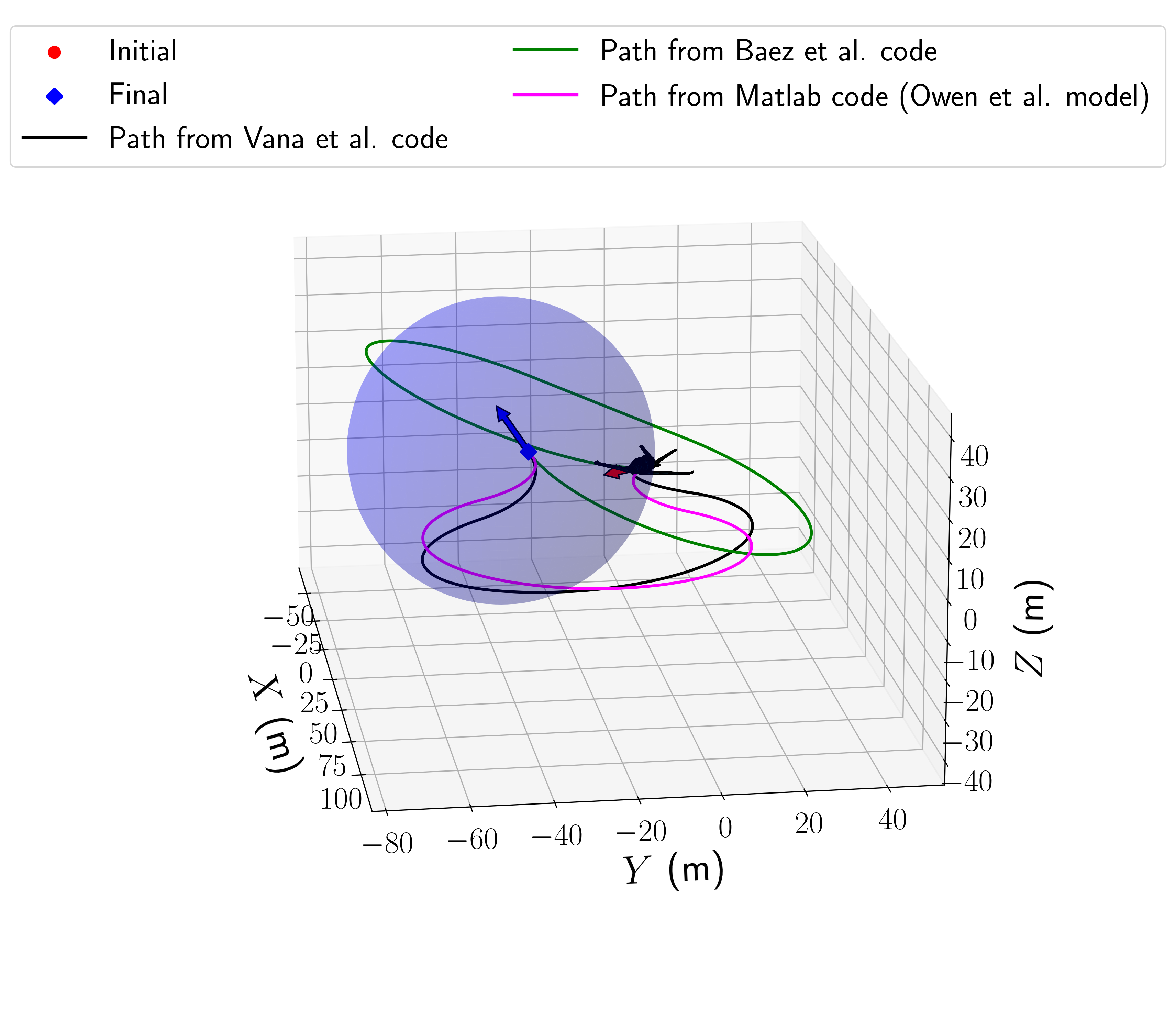}
    \label{subfig: Vana_inside_right_sphere}}
    \caption{Depiction of path for final location lying inside yaw sphere (specifications given in Section~\ref{subsect: additional_examples_inside_pitch_yaw_spheres})}
    \label{fig: maneuver_yaw_sphere}
\end{figure}

\begin{remark}
    We identified two key issues with state-of-the-art methods: (i) the generated path does not change with changing roll angle, and (ii) the use of a single turning radius constraint can lead to violations of one of the curvature constraints in our model. While we have illustrated these issues with a few examples, they are expected to persist across a broader range of scenarios.
\end{remark}

\subsection{Impact of discretization of parameters}

Noting that the number of discretizations of the path parameters can be freely chosen, we performed a sensitivity study by varying them across 5, 10, 15, 20, and 25 values. For each setting, we obtained the best path using our algorithm for all instances and variations in $R_{yaw}$ and initial and final roll angles, as outlined in Table~I. The results are summarized in Fig.~\ref{fig: discretization}, which shows both the change in shortest path lengths and the computation times across the 12 considered instances. For each instance, the nine variations ($R_{\mathrm{yaw}}$, initial, and final roll angles) are condensed into a box plot.

From these figures, we observe that a smaller number of discretizations yields faster computation time, but typically at the expense of solution quality. This trend is consistent across all instances, and is especially apparent in ``Additional 2.'' From Fig.~\ref{fig: discretization}, 15 discretizations represent a good trade-off between computation time and path length. Alternatively, using 10 discretizations provides solutions in approximately 5 seconds, compared to 10 seconds for 15 discretizations, albeit with some compromise in path quality.

\begin{figure}[htb!]
    \centering
    \subfloat[Percentage change in shortest path's length]{\includegraphics[width = \linewidth]{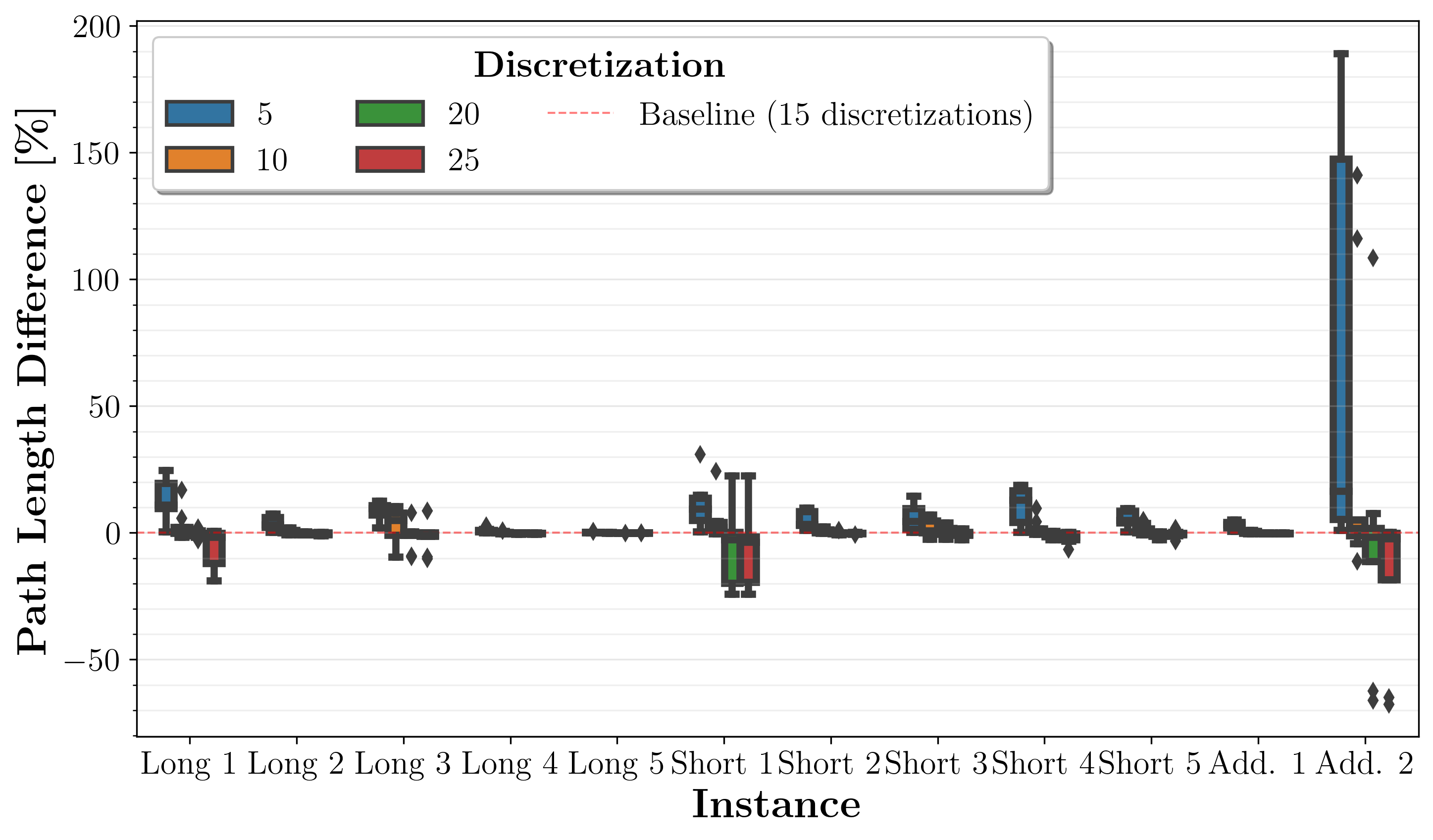}\label{subfig: disc_path_length}}
    \hfil
    \subfloat[Varying computation time of Algorithm~\ref{alg: algorithm} (discretizations shown with a colored marker as well)]{\includegraphics[width = \linewidth]{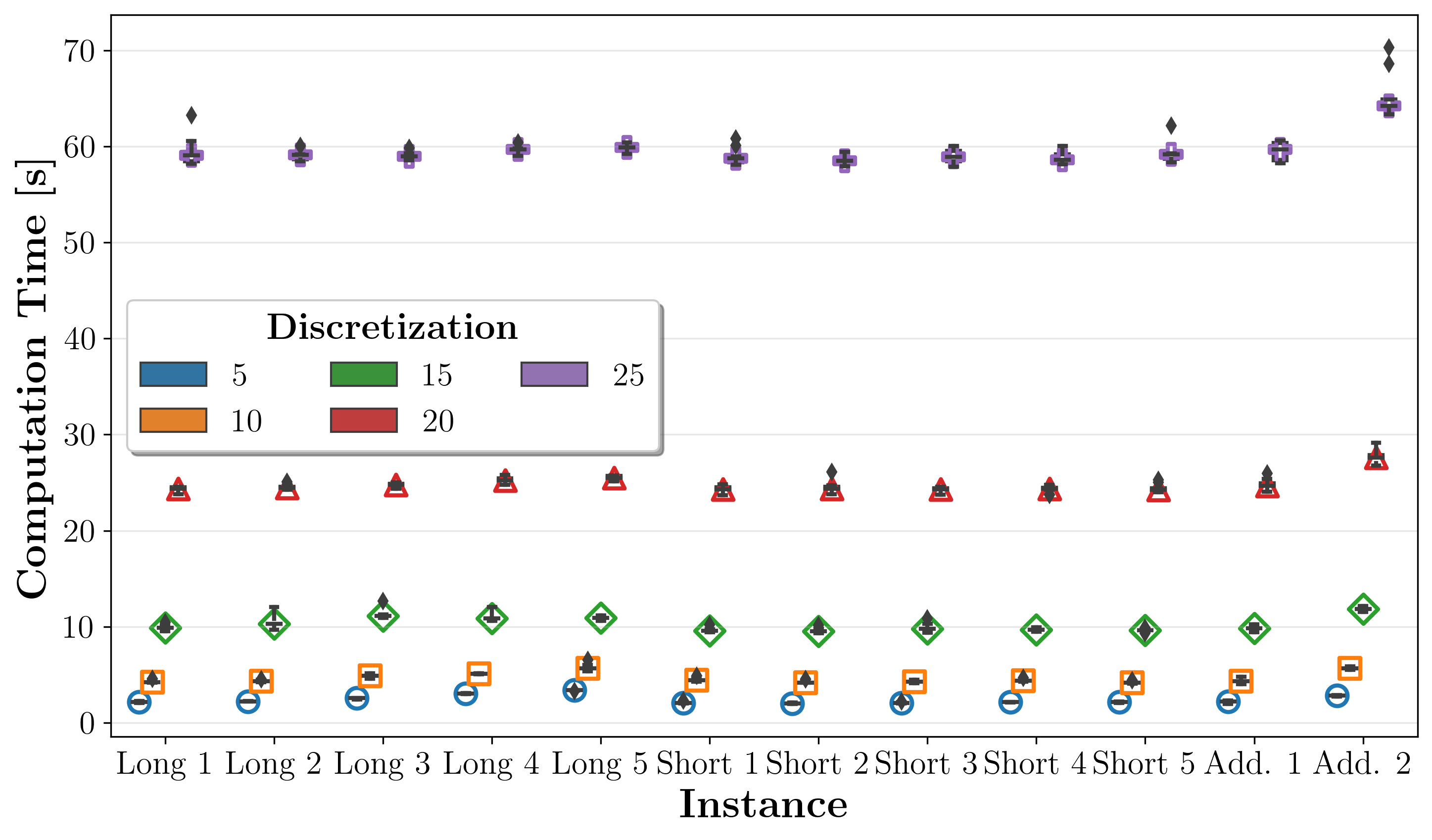}\label{subfig: disc_comp_time}}
    \caption{Impact of varying the discretization of parameters on path length and computation time for each instance (shown with a box plot)}
    \label{fig: discretization}
\end{figure}

\section{Conclusion}

In this paper, we propose a novel model for 3D motion planning and a methodology for generating high-quality feasible trajectories. The proposed model and the approach address two key limitations of the existing methods: the incomplete representation of vehicle configuration, and inadequate modeling of the motion constraints. We highlight these issues using illustrative examples. To address these, we proposed a model using a rotation-minimizing frame to uniquely represent the vehicle's configuration, and the model includes two control inputs corresponding to the pitch rate and yaw rate of the vehicle. We proved that the pitch rate and yaw rate bounds yield a total of four distinct spheres tangential to the vehicle's configuration, \emph{viz.} inner, outer, left and right, which represent temporarily inaccessible regions. 

We proposed three classes of curvature-constrained paths beginning and ending on the surface of the spheres, tangential to the initial and final configurations. The three classes differ in the transition mechanism from the initial to the final sphere. These transitions involve a path on the surface of a cylindrical envelope, a cross-tangent plane, or another spherical surface. Finally, we presented extensive computational experiments to evaluate the impact of the motion constraints and the vehicle's configuration on the path length and the path classification. Additionally, a comparison of the proposed methodology with the algorithms in \cite{minimal_3D_Dubins_path_bounded_curvature_pitch}, \cite{analytic_solution_3D}, and \cite{mathworks_uavdubinsconnection} underscored the advantages of modeling with complete orientation. The proposed model and path generation methodology offer a novel perspective on the 3D motion planning problem.




\bibliographystyle{IEEEtran}
\bibliography{IEEEabrv, cite}

\appendix

\subsection{Construction of segments} \label{appsubsect: construction_segments}

Consider an interval $s \in [s_0, s_1]$ in which $\kappa_g$ and $\kappa_n$ are constants.  In this case, noting that $\mathbf{R} (s):= \begin{bmatrix}
    \mathbf{T} (s) & \mathbf{Y} (s) & \mathbf{U} (s)
\end{bmatrix}$ is a rotation matrix, \eqref{eq:kinematics2}, \eqref{eq:kinematics3}, and \eqref{eq:kinematics4} can be rewritten as
\begin{align} \label{eq: Darboux_frame_rotation}
    \mathbf{R}' (s) &= \mathbf{R} (s) \underbrace{\begin{pmatrix}
        0 & -\kappa_g & -\kappa_n \\
        \kappa_g & 0 & 0 \\
        \kappa_n & 0 & 0
    \end{pmatrix}}_{\Omega}.
\end{align}
Suppose at least one of $\kappa_g$ and $\kappa_n$ is non-zero. The solution for the above differential equation is given by $\mathbf{R} (s) = \mathbf{R} (s_0) e^{\Omega (s - s_0)},$ where the expression for the exponential of the skew-symmetric matrix can be obtained using the Euler-Rodriguez formula. Furthermore, using the obtained expression for $\mathbf{T} (s),$ the solution for $\mathbf{X} (s)$ can be obtained by integrating $\mathbf{X}' (s)$ from \eqref{eq:motion}. Hence, the solution for the evolution of the four vectors can be written as
\begin{align} \label{eq: evolution_rotation_position}
    \begin{bmatrix}
        \mathbf{R} (\phi) & \mathbf{X} (\phi) \\
        \mathbf{0}_{3 \times 1} & 1
    \end{bmatrix} = \begin{bmatrix}
        \mathbf{R} (0) & \mathbf{X} (0) \\
        \mathbf{0}_{3 \times 1} & 1
    \end{bmatrix} \mathbf{H} (\phi).
\end{align}
Here, $\phi := (s - s_0) \sqrt{\kappa_n^2 + \kappa_g^2}$ and denotes the arc angle of the considered segment, and $\mathbf{H} (\phi)$ is given by
\begin{align} \label{eq: homogeneous_transformation}
    \mathbf{H} = \begin{pmatrix}
        c {\phi} & \frac{-\kappa_g s {\phi}}{K} & \frac{-\kappa_n s {\phi}}{K} & \frac{s {\phi}}{K} \\
        \frac{\kappa_g s {\phi}}{K} & \frac{\kappa_n^2 + c {\phi} \kappa_g^2}{K^2} & \frac{-\kappa_g \kappa_n (1 - c {\phi})}{K^2} & \frac{\kappa_g \left(1 - c {\phi} \right)}{K^2} \\
        \frac{\kappa_n s {\phi}}{K} & \frac{-\kappa_g \kappa_n (1 - c {\phi})}{K^2} & \frac{\kappa_g^2 + \kappa_n^2 c {\phi}}{K^2} & \frac{\kappa_n \left(1 - c {\phi} \right)}{K^2} \\
        0 & 0 & 0 & 1
    \end{pmatrix},
\end{align}
where $K := \sqrt{\kappa_n^2 + \kappa_g^2}$ and $c \phi := \cos{\phi}, s \phi := \sin{\phi}.$ We can observe that the solution is periodic with a period of $2 \pi.$ 

In the case of $\kappa_g = \kappa_n = 0,$ $\mathbf{T}, \mathbf{Y},$ and $\mathbf{U}$ remain constant (from \eqref{eq:motion}); furthermore, $\mathbf{X} (s) = \mathbf{X} (0) + s \mathbf{T} (0)$ is obtained.

\subsection{Proof for Lemma~\ref{lemma: kappan}} \label{appsect: Proof_Lemma_1}

Without loss of generality, consider the initial rotation matrix $\mathbf{R} (0)$ to be the identity matrix and the initial location $\mathbf{X} (0)$ to coincide with the origin. Hence, using the closed-form expressions in Appendix~\ref{appsubsect: construction_segments}, the position is given by 
\begin{align} \label{eq: position_function}
\mathbf{X} (\phi) = \begin{pmatrix}
    \frac{s \phi}{K} & \frac{\kappa_g \left(1 - c \phi \right)}{K^2} & \frac{\kappa_n \left(1 - c \phi \right)}{K^2}
\end{pmatrix}^T,
\end{align}
where $K = \sqrt{\kappa_n^2 + \kappa_g^2}.$ Consider $\kappa_n = \frac{1}{R_{pitch}}.$ We claim that the obtained position $\mathbf{X} (\phi)$ lies on a sphere centered at $(0, 0, R_{pitch})^T,$ which is along $\mathbf{U},$ with radius $R_{pitch}.$ To this end, we can show that
$\|\mathbf{X} (\phi) - (0, 0, R_{pitch})^T \|_2^2 = R_{pitch}^2,$ 
for all $\kappa_g \in \left[-\frac{1}{R_{yaw}}, \frac{1}{R_{yaw}} \right].$ Hence, all segments corresponding to $\kappa_n = \frac{1}{R_{pitch}}$ lie on a sphere centered at $(0, 0, R_{pitch})^T,$ with radius $R_{pitch}.$ A similar argument can be made for $\kappa_n = - \frac{1}{R_{pitch}},$ where all segments lie on a sphere centered at $(0,0,-R_{pitch})^T$ with radius of $R_{pitch}.$ Therefore, we can observe that $\kappa_n$ controls the pitch motion of the vehicle; furthermore, $\kappa_n = \pm \frac{1}{R_{pitch}}$ yield maximum ascent or descent motion for the vehicle.

The radius of a segment corresponding to when $\kappa_g$ is constant in $\left[-\frac{1}{R_{yaw}}, \frac{1}{R_{yaw}} \right]$ can be obtained using the expression for $\mathbf{X} (\phi)$ as $\frac{1}{2}\|\mathbf{X} (\pi) - \mathbf{X} (0) \|_2.$ Using the expression for $\mathbf{X} (\phi)$ given in \eqref{eq: position_function}, it follows that $\frac{1}{2}\|\mathbf{X} (\pi) - \mathbf{X} (0) \|_2 = \frac{1}{\sqrt{\kappa_g^2 + \frac{1}{R_{pitch}^2}}}.$

\subsection{Sabban frame equations for sphere with radius $\overline{R}$ and path obtained in 3D} \label{appsubsect: Sabban_frame_generalization}

The evolution equations for the Sabban frame on a unit sphere, described in Section~\ref{subsect: initial_final_spheres_path_generation}, can be generalized to a sphere with radius $\overline{R}.$ To this end, the arc length $s$ on the sphere with radius $\overline{R}$ is defined to be $s := \overline{R} \hat{s},$ where $\hat{s}$ is the arc length on the unit sphere. Furthermore, $\mathbf{X}_{sp} := \overline{R} \hat{\mathbf{X}}_{sp}$ depicts the location on the new sphere; here, $\hat{\mathbf{X}}_{sp}$ denotes the location on the unit sphere (refer to Fig.~\ref{fig: sphere_motion_planning}). Additionally, the bound for the control input from the unit sphere ($=\hat{U}_{max}$) is scaled to obtain $U_{max} := \left(\overline{R}\right)^{-1} \hat{U}_{max}.$ Following a similar process as Lemma~3.2 in \cite{3D_Dubins_sphere}, we can show that the minimum turning radius $r$ on the sphere of radius $\overline{R}$ is given by $r = \overline{R} \hat{r}.$ A depiction of the scaling for the initial configuration is shown in Fig.~\ref{fig: sphere_motion_planning}.

The evolution equations for the sphere with radius $\overline{R}$ can therefore be obtained as
\begin{align} \label{eq: Sabban_frame_generalized}
\begin{split}
    \frac{d\mathbf{X}_{sp}}{ds} (s) &= \mathbf{T}_{sp} (s), \quad
    \frac{d\mathbf{T}_{sp}}{ds} (s) = - \frac{1}{\overline{R}^2} \mathbf{X}_{sp} (s) + u_g \mathbf{N}_{sp} (s), \\
    \frac{d\mathbf{N}_{sp}}{ds} (s) &= - u_g \mathbf{T}_{sp} (s),
\end{split}
\end{align}
where $u_g \in [-U_{max}, U_{max}]$ is the geodesic curvature on the sphere of radius $\overline{R}$ and relates to the minimum turning radius $r$ on the sphere by $r = \frac{\overline{R}}{\sqrt{1 + U_{max}^2 \overline{R}^2}}.$ 
The evolution of these three vectors for $u_g \equiv U_{max}, 0,$ or $-U_{max},$ which correspond to left turn, great circular arc, and right turns on the sphere, can be obtained using the Euler-Rodriguez formula.

\subsection{Proof for Lemma~\ref{lemma: cylinder}} \label{appsubsect: proof_lemma_cylinder}

Consider cylinders connecting a pair of inner or outer spheres, which are shown in Fig.~\ref{subfig: cylindrical_envelope}. The tangent plane for these cylinders is the $\mathbf{T}-\mathbf{Y}$ plane. Consider the bound chosen for the geodesic curvature on the cylinder ($\kappa_{g, cyc}$) from \eqref{eq: geodesic_curvature_cylinder}, which is $\frac{1}{R_{yaw}}.$ We note that $\kappa_{g, cyc}$ denotes the magnitude of the projection of the curvature vector on the tangent plane \cite{struik}. Since the tangent plane of this cylinder is the $\mathbf{T} - \mathbf{Y}$ plane, $\kappa_{g, cyc}$ also represents the geodesic curvature for the rotation minimizing frame model. This is because geodesic curvature for the minimizing frame model ($\kappa_g$) controls the yaw motion (in the $\mathbf{T} - \mathbf{Y}$ plane) for the vehicle. Hence, the geodesic curvature bound for the rotation minimizing frame model is automatically satisfied for the chosen bound for $\kappa_{g, cyc}.$.

On the other hand, the normal curvature of a cylinder depends on the direction of traversal along the cylinder. If the curve is along a direction parallel to the axis of the cylinder, $\kappa_{n, cyc} = 0;$ however, if the curve is perpendicular to the axis, $\kappa_{n, cyc} = \frac{1}{\overline{R}},$ since the curve is along the circular cross-section \cite{do_carmo}. Since these curvatures of $0$ and $\frac{1}{\overline{R}}$ are the principal curvatures, the normal curvature for any intermediary direction of motion lies between the two through Euler's theorem \cite{struik}. Noting that the normal curvature is along the surface normal for the cylinder, which is along $\mathbf{U}$ for the rotation minimizing frame model, the normal curvature bounds for the rotation minimizing frame model are automatically satisfied. A similar argument applies for the selected bound for $\kappa_{g, cyc}$ for the left and right cylinders, with the only difference arising in considering $\mathbf{T}-\mathbf{U}$ as the tangent plane and the surface normal for the cylinders being $\mathbf{Y}$.

\subsection{Proof for Lemma~\ref{lemma: mapping_preservation}} \label{appsubsect: Lemma_mapping_proof}

Consider a cylinder that is parameterized in terms of $u$ and $v$ through \eqref{eq: Pcyc}. It suffices to show that the first fundamental form for the cylinder is the same as that for the plane, parameterized using $u$ and $v$ as given in \eqref{eq: Pplane}. The first fundamental form coefficients are given by (\cite{struik})
$E_{cyl} = \frac{\partial \mathbf{X}_{cyl}^\mathcal{U} (s)}{\partial u} \cdot \frac{\partial \mathbf{X}_{cyl}^\mathcal{U} (s)}{\partial u} = 1,$ $F_{cyl} = \frac{\partial \mathbf{X}_{cyl}^\mathcal{U} (s)}{\partial u} \cdot \frac{\partial \mathbf{X}_{cyl}^\mathcal{U} (s)}{\partial v} = 0,$ and $G_{cyl} = \frac{\partial \mathbf{X}_{cyl}^\mathcal{U} (s)}{\partial v} \cdot \frac{\partial \mathbf{X}_{cyl}^\mathcal{U} (s)}{\partial v} = 1$.
Similarly, $E_{plane}, F_{plane},$ and $G_{plane}$ can be obtained to be $1,$ $0,$ and $1.$ Since the first fundamental form coefficients of the cylinder and the plane are equal, the length of the curve is preserved. This is because the distance between two closely spaced points on the curve on the cylinder that are initially separated by $du$ and $dv$ on the plane is given by $ds_{cyl}^2 = E_{cyl} du^2 + 2 F_{cyl} du dv + G_{cyl} dv^2 = ds_{plane}^2.$

\begin{IEEEbiography}[{\includegraphics[width=1in,height=1.25in,clip,keepaspectratio]{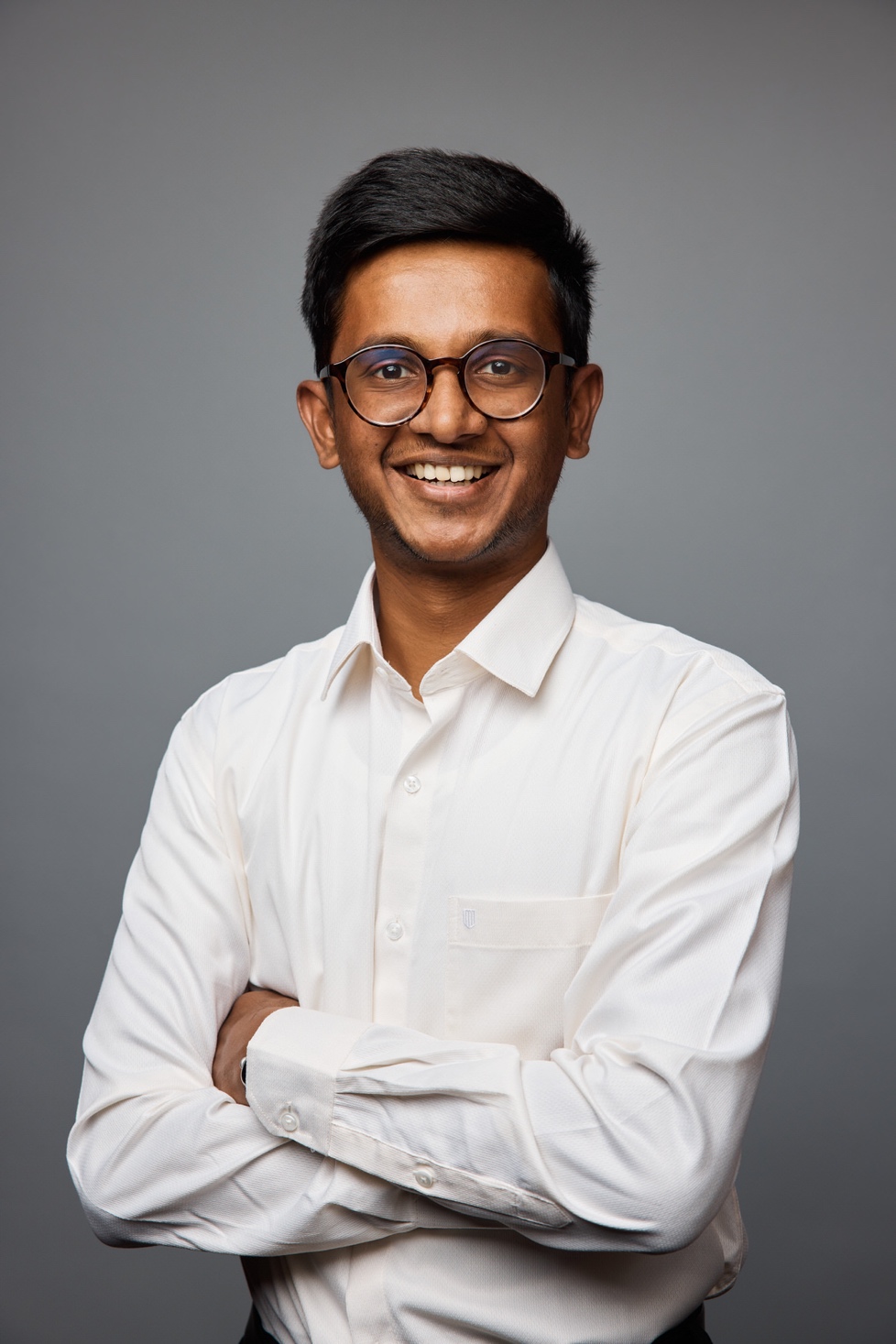}}]{Deepak Prakash Kumar} (Member, IEEE)
received his Ph.D. degree in Mechanical Engineering from Texas A\&M University in 2025. He received his B.Tech in Engineering Design and M.Tech in Automotive Engineering from the Engineering Design department at IIT Madras in 2020. He is currently a Postdoctoral Scholar with the Center for Resilient Autonomous Systems, Department of Electrical Engineering and Computer Science, University of California, Irvine. His research interests include safe physical AI algorithms for multi-agent collaboration, physical AI for human--robot teaming, motion planning and control for autonomous vehicles, and vehicle routing algorithms.
\end{IEEEbiography}

\begin{IEEEbiography}[{\includegraphics[width=1 in,height = 1.25 in,clip,keepaspectratio]{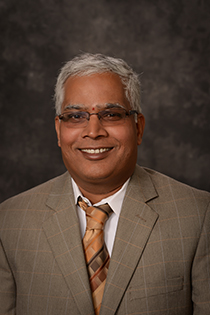}}]{Swaroop~~Darbha} (Fellow, IEEE), received the Ph.D.
degree from the University of California at Berkeley,
Berkeley, CA, USA, in 1994. He is currently the Gulf Oil/Thomas A Dietz
Professor of mechanical engineering with Texas
A\&M University, College Station, TX, USA. His
research interests include dynamics, control, and
diagnostics of connected and autonomous ground
vehicles, routing of unmanned aerial vehicles, and
decision-making under uncertainty. He is a fellow
of ASME and IEEE for his contributions to Intelligent Transportation Systems and Unmanned Vehicles.
\end{IEEEbiography}

\begin{IEEEbiography}[{\includegraphics[width=1 in,height = 1.25 in,clip,keepaspectratio]{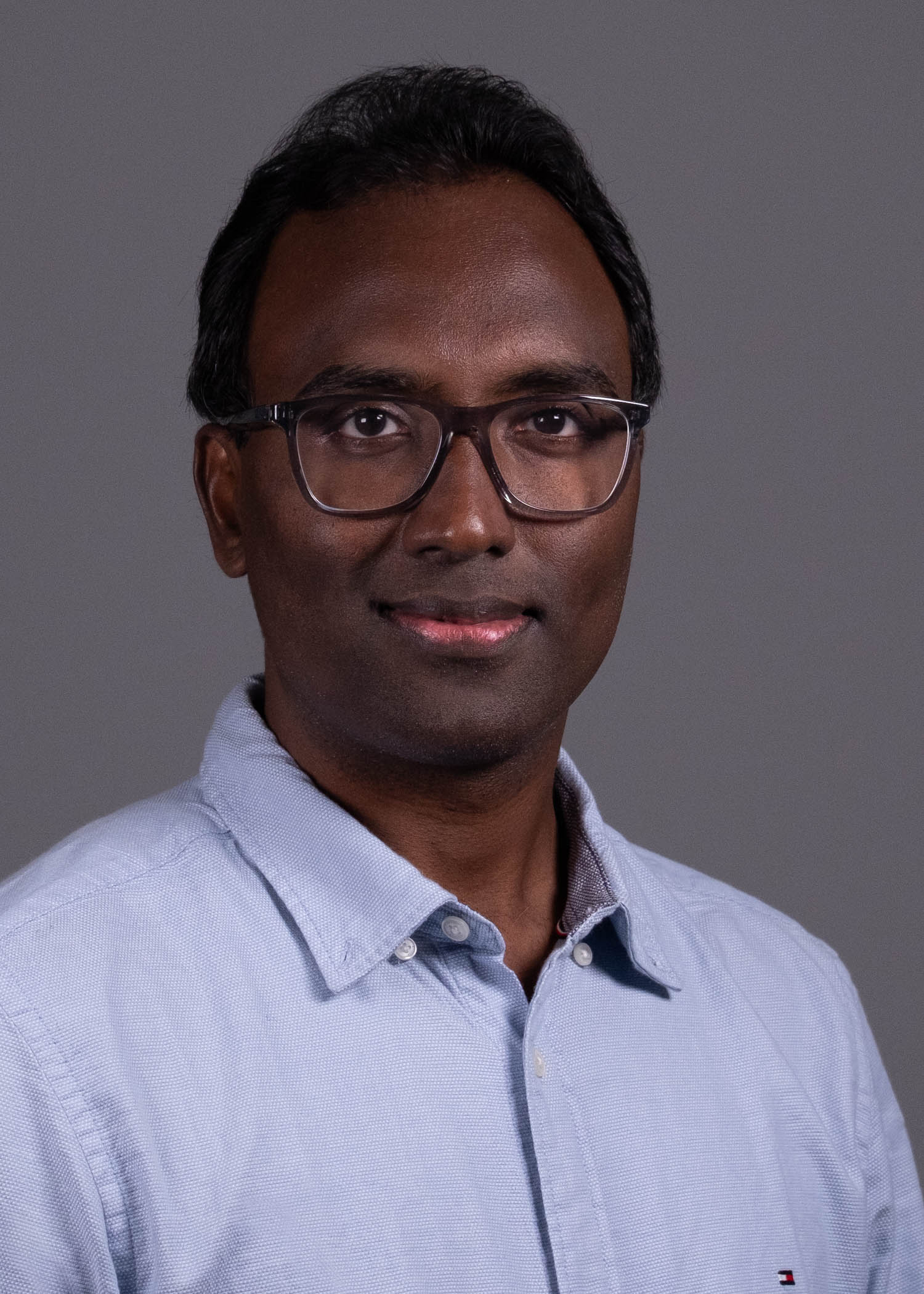}}]{Satyanarayana Gupta Manyam} (Senior Member, IEEE), Satyanarayana Gupta Manyam received the Ph.D. degree in Mechanical Engineering from Texas A \& M University, College Station, Texas, in 2015. He is currently a Research Scientist at DCS Corporation, where he works as a contractor for the Control Science Center of the U.S. Air Force Research Laboratory (AFRL) at Wright-Patterson Air Force Base, OH, USA. His primary research interest include cooperative path planning for multi-vehicle systems, trajectory and motion planning for autonomous vehicles. His interests also include  combinatorial optimization and bounding algorithms for discrete optimization problems.
\end{IEEEbiography}

\begin{IEEEbiography}[{\includegraphics[width=1 in,height = 1.25 in,clip,keepaspectratio]{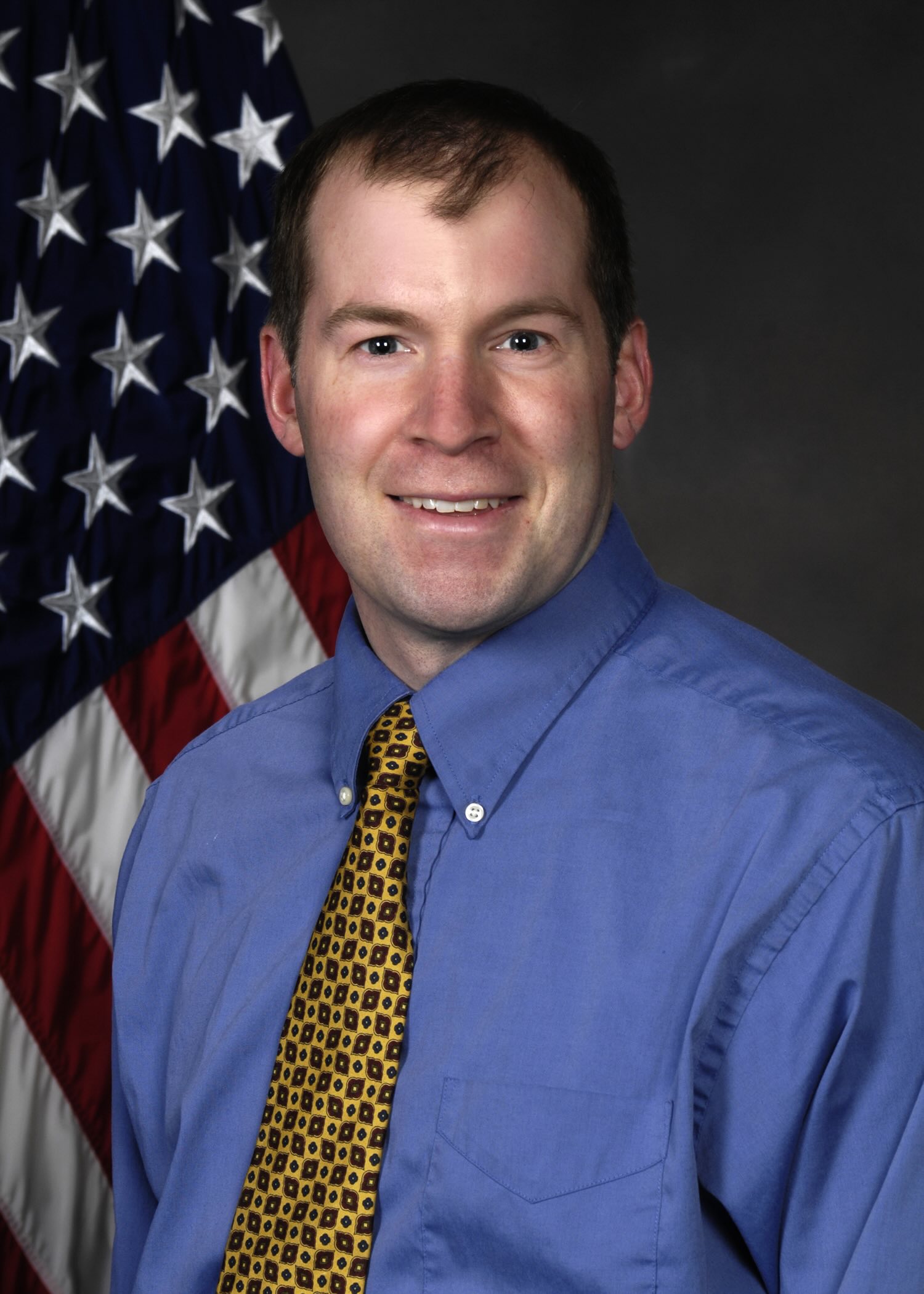}}]{David W. Casbeer} (Senior Member, IEEE) is the technical area lead for UAV Cooperative and Intelligent Control at the Air Force Research Laboratory’s Control Science Center, where he leads research to enable autonomous UAVs in future Air Force missions. He received the BS (2003) and PhD (2009) degrees from Brigham Young University, where he focused on decentralized estimation techniques. He is a Senior Member of the IEEE and an Associate Fellow in the AIAA. He currently serves as an associate editor for the AIAA Journal of Aerospace Information Systems.
    
\end{IEEEbiography}

\end{document}